\def\H{\mathcal{H}}
\def\X{\mathcal{X}}
\def\Y{\mathcal{Y}}
\def\Z{\mathcal{Z}}
\def\Xin{\X_0}
\def\Xhid{\X_1}
\def\Hin{\H_1}
\def\Hhid{\H_2}
\def\K{\mathcal{K}}
\def\Kin{\K_1}
\def\Khid{\K_2}
\def\risk{\epsilon}
\DeclarePairedDelimiter\norm{\lVert}{\rVert}
\DeclarePairedDelimiter\inner{\langle}{\rangle}
\newtheorem{theorem}{Theorem}
\newtheorem{lemma}[theorem]{Lemma}
\newtheorem{definition}[theorem]{Definition}
\newtheorem{assumption}[theorem]{Assumption}
\newenvironment{proof_of}[1]{
\proof}{\endproof}
\begin{document}

% If your paper is accepted and the title of your paper is very long,
% the style will print as headings an error message. Use the following
% command to supply a shorter title of your paper so that it can be
% used as headings.
%
%\runningtitle{I use this title instead because the last one was very long}

% If your paper is accepted and the number of authors is large, the
% style will print as headings an error message. Use the following
% command to supply a shorter version of the authors names so that
% they can be used as headings (for example, use only the surnames)
%
%\runningauthor{Surname 1, Surname 2, Surname 3, ...., Surname n}

\runningauthor{ Pierre Laforgue, Stephan Cl\'emen\c con,
                Florence d'Alch\'e-Buc }

\twocolumn[

\aistatstitle{Autoencoding any Data through Kernel Autoencoders}

\aistatsauthor{Pierre Laforgue \And Stephan Cl\'emen\c con \And Florence d'Alch\'e-Buc}
\medskip

% \aistatsauthor{ Author 1 \And Author 2 \And  Author 3 }

\aistatsaddress{\And {\sc LTCI, T\'el\'ecom Paris, Institut Polytechnique de Paris}\\[0.1cm] Contact: \href{mailto:pierre.laforgue1@gmail.com}{\texttt{pierre.laforgue1@gmail.com}}\And}]

% \aistatsaddress{ Institution 1 \And  Institution 2 \And Institution 3 } ]

\begin{abstract}
    This paper investigates a novel algorithmic approach to data representation based on kernel methods.
    Assuming that the observations lie in a Hilbert space $\mathcal{X}$, the introduced Kernel Autoencoder (KAE) is the composition of mappings from vector-valued Reproducing Kernel Hilbert Spaces (vv-RKHSs) that minimizes the expected reconstruction error.
    Beyond a first extension of the auto-encoding scheme to possibly infinite dimensional Hilbert spaces, KAE further allows to autoencode any kind of data by choosing $\mathcal{X}$ to be itself a RKHS.
    A theoretical analysis of the model is carried out, providing a generalization bound, and shedding light on its connection with Kernel Principal Component Analysis.
    The proposed algorithms are then detailed at length: they crucially rely on the form taken by the minimizers, revealed by a dedicated Representer Theorem.
    Finally, numerical experiments on both simulated data and real labeled graphs (molecules) provide empirical evidence of the KAE performances.
\end{abstract}

\section{INTRODUCTION}
As experienced by any practitioner, data representation is critical to the application of Machine Learning, whatever the targeted task, supervised or unsupervised.
An answer to this issue consists in feature engineering, a step that requires time-consuming interactions with domain experts.
To overcome these limitations, Representation Learning (RL) \citep{Bengio2013} aims at building automatically new features in an unsupervised fashion.
Recent applications to neural nets pre-training, image denoising and semantic hashing have renewed a strong interest in RL, now a proper research field.
Among successful RL approaches, mention has to be made of Autoencoders (AEs) \citep{Vincent2010}, and their generative variant, Deep Boltzman Machines \citep{Salakhutdinov2009}.

AEs attempt to learn a pair of encoding/decoding functions under structural constraints so as to capture the most important properties of the data \citep{Alain2014}.
If they have mostly been studied under the angle of neural networks \citep{baldi2012autoencoders} and deep architectures \citep{Vincent2010}, the concepts underlying AEs are very general and go beyond neural implementations.
In this work, we develop a general framework inspired from AEs, and based on Operator-Valued Kernels (OVKs) \citep{Senkene1973} and vector-valued Reproducing Kernel Hilbert Spaces (vv-RKHSs).
Mainly developed for supervised learning, OVKs provide a nonparametric way to tackle complex output prediction problems \citep{Alvarez2012}, including multi-task regression, structured output prediction \citep{Brouard2016}, or functional regression \citep{Kadri2016}.
This work is a first contribution to combine OVKs with AEs, enlarging the latters' applicability scope - so far restricted to $\mathbb{R}^d$~- to any data described by a similarity matrix.

We start from the simplest formulation in which a Kernel Autoencoder (KAE) is a pair of encoding/decoding functions lying in two different vv-RKHSs, and whose composition approximates the identity function.
This approach is further extended to a general framework involving the composition of an arbitrary number of mappings, defined and valued on Hilbert spaces.
A crucial application of KAEs arises if the input space is itself a RKHS: it allows to perform autoencoding on any type of data, by first mapping it to the RKHS, and then applying a KAE.
%
% As a kernelization of the KAE, this application will be denoted $\text{K}^2$AE.
%
The solutions computation, even in infinite dimensional spaces, is made possible by a Representer Theorem and the use of the kernel trick.
This unlocks new applications on structured objects for which feature vectors are missing or too complex (\textit{e.g.} in chemoinformatics).

Kernelizing an AE criterion has also been proposed by \citet{gholami2016kernel}.
But their approach differs from ours in many key aspects: it is restricted to AEs with 2 layers and composed of linear maps only; it relies on semi-supervised information; it comes with no theoretical analysis, and within a hashing perspective solely.
Despite a similar title, the work by \citet{kampffmeyer2017deep} has no connection with ours.
Authors use standard AEs, and regularize the learning by aligning the latent code with some predetermined feature map.
In the experimental section, we implement autoencoding on graphs, which cannot be done by means of standard AEs.
Graph AEs \citep{kipf2016variational} do not autoencode graphs, but $\mathbb{R}^d$ points with an additive graph characterizing the data structure.

The rest of the article is structured as follows.
The novel kernel-based framework for RL is detailed in \Cref{sec:background}.
A generalization bound and a strong connection with Kernel PCA are established
in \Cref{sec:theory}, whereas \Cref{sec:algo} describes the algorithmic approach based on a Representer Theorem.
Illustrative numerical experiments are displayed in \Cref{sec:experiments}, while concluding remarks are collected in \Cref{sec:conclusion}.
Finally, technical details are deferred to the Supplementary Material.

\section{THE KERNEL AUTOENCODER}\label{sec:background}
In this section, we introduce a general framework for building AEs based on vv-RKHSs.
Here and throughout the paper, the set of bounded linear operators mapping a vector space $E$ to itself is denoted by $\mathcal{L}(E)$, and the set of mappings from a set $A$ to an ensemble $B$ by $\mathcal{F}(A,B)$.
The adjoint of an operator $M$ is denoted by $M^*$.
Finally, $\llbracket n \rrbracket$ denotes the set $\{1, \ldots, n\}$ for any integer $n \in \mathbb{N}^*$.

\subsection{Background on vv-RKHSs}
Vv-RKHSs allow to cope with the approximation of functions from an input set $\X$ to some output Hilbert space $\Y$ \citep{Senkene1973, Caponnetto2008}.
Vv-RKHS can be defined from an OVK, which extends the classic notion of positive definite kernel.
An OVK is a function $\K \colon \X \times \X \to \mathcal{L}(\Y)$, that satisfies the following two properties \citep{micchelli2005learning}:
\begin{equation*}\label{eq:ovk_1}
\forall (x, x') \in \X \times \X, \qquad \K(x, x') = \K(x', x)^*,
\end{equation*}
and $\forall n \in \mathbb{N}^*, \forall~\{(x_i, y_i)\}_{1\le i \le n} \in (\X \times \Y)^n,$
\begin{equation*}\label{eq:ovk_2}
\sum_{1\le i, j \le n} \langle y_i, \K(x_i, x_j)y_j \rangle_{\Y} \ge 0.
\end{equation*}
A simple example of OVK is the {\it separable kernel} such that: $\forall~(x, x') \in \X \times \X,$ $\K(x,x')=k(x,x')A$, where $k$ is a positive definite scalar-valued kernel, and $A$ is a positive semi-definite operator on $\Y$.
Its relevance for multi-task learning has been highlighted for instance by \citet{micchelli2005learning}.

Let $\K$ be an OVK, and for $x \in \X$, let $K_{x} \colon y \in \Y \mapsto K_{x}y \in \mathcal{F}(X, Y)$ the linear operator such that:
$$
\forall x' \in \X,~(K_{x}y)(x') = \K(x', x)y.
$$
Then, there is a unique Hilbert space $\H_\K \subset \mathcal{F}(\mathcal{X}, \mathcal{Y})$ called the vv-RKHS associated to $\K$, with inner product $\inner{\cdot,\cdot}_{\H_\K}$ and norm $\norm{\cdot}_{\H_\K}$, such that $\forall x \in \X$:
\begin{itemize}
\item $K_x$ spans the space $\H_K$ ($\forall y \in \Y \colon K_x y \in \H_K$)
\item $K_x$ is bounded for the uniform norm
\item $\forall f \in \H,~f(x) = K_{x}^* f$ ({\it i.e.} reproducing property)
\end{itemize}

\subsection{Input Output Kernel Regression}\label{sec:IOKR}
Now, let us assume that the output space $\Y$ is chosen itself as a RKHS, say $\H$, associated to the positive definite scalar-valued kernel $k: \Z \times \Z \to \mathbb{R}$, with $\Z$ a non-empty set.
Working in the vv-RKHS $\H_\K$ associated to an OVK $\K: \X \times \X \to \mathcal{L}(\H)$ opens the door to a large family of learning tasks where the output set $\Z$ can be a set of complex objects such as nodes in a graph, graphs \citep{Brouard-ismb2016} or functions \citep{Kadri2016}.
Following the work of \citet{Brouard2016}, we refer to these methods as Input Output Kernel Regression (IOKR).
IOKR has been shown to be of special interest in case of Ridge Regression, where closed-form solutions are available besides classical gradient descent algorithms.
Note that in a general supervised setting, learning a function $f \in \H_\K$ is not sufficient to provide a prediction in the output set, and a pre-image problem has to be solved.
In sections \ref{subsec:OKAE} and \ref{sec:inf_dim}, a similar idea is applied at the last layer of our KAE, allowing for auto-encoding non-vectorial data while avoiding complex pre-image problems.

\subsection{The 2-layer Kernel Autoencoder (KAE)}
\begin{figure}[!t]
\begin{center}
% .95 to .90
\includegraphics[width=0.90\columnwidth, page=1]{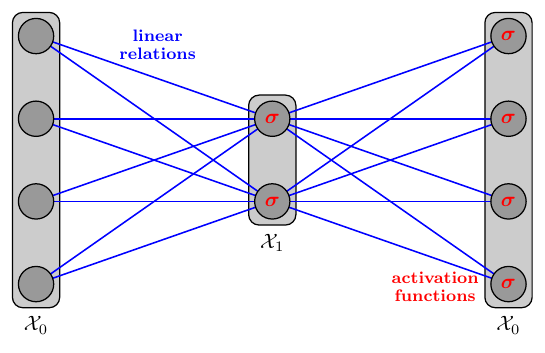}\\
\includegraphics[width=0.90\columnwidth, page=2]{fig/Autoencoder.pdf}
\caption{Standard and Kernel 2-layer Autoencoders}
\label{kAE}
\end{center}
\end{figure}
Let $S = (x_1,\; \ldots,\; x_n)$ denote a sample of $n$ independent realizations of a random vector $X$, valued in a separable Hilbert space $(\Xin,\; \|\cdot\|_{\Xin})$ with unknown probability distribution $P$, and such that there exists $M < +\infty, \|X\|_{\mathcal{X}_0} \le M$ almost surely.
On the basis of the training sample $S$, we are interested in constructing a pair of encoding/decoding mappings $\left(f: \Xin \rightarrow \X_1,\; g: \X_1 \rightarrow \Xin \right)$, where $(\X_1,\; \|\cdot\|_{\X_1})$ is the (Hilbert) \textit{representation space}.
Just as for standard AEs, we regard as good internal representations the ones that allow for an accurate recovery of the original information in expectation.
The problem to be solved states as follows:
\begin{equation}\label{eq:risk}
\min_{\substack{(f, g) \in \Hin \times \Hhid\\\|f\|_{\mathcal{H}_1} \le s,~\|g\|_{\mathcal{H}_2}\le t}}  \risk(f, g) \coloneqq \mathbb{E}_{X\sim P} \left\| X - g \circ f(X) \right\|^2_{\Xin},
\end{equation}
where $\mathcal{H}_1$ and $\mathcal{H}_2$ are two vv-RKHSs, and $s$ and $t$ two positive constants.
$\mathcal{H}_1$ is associated to an OVK $\Kin: \Xin \times \Xin \rightarrow \mathcal{L}(\Xhid)$, while $\mathcal{H}_2$ is associated to $\Khid: \Xhid \times \Xhid \rightarrow \mathcal{L}(\Xin)$.
Figure \ref{kAE} illustrates the parallel and differences between standard and kernel 2-layer Autoencoders.

Following the Empirical Risk Minimization (ERM) paradigm, the true risk \eqref{eq:risk} is replaced by its empirical version
\begin{equation}\label{eq:emp_risk}
\hat{\epsilon}_n(f,g) \coloneqq\frac 1 n \sum_{i=1}^n \vert\vert x_i-g\circ f(x_i)\vert\vert^2_{\Xin},
\end{equation}
and a penalty term $\Omega(f,g) \coloneqq \lambda \|f\|_{\Hin}^2 + \mu \|g\|_{\Hhid}^2$ is added instead of the norm constraints (see \Cref{thm:reformulation}). Solutions to the following regularized ERM problem shall be referred to as \textit{2-layer~KAE}:
\begin{equation}\label{eq:emp_pb}
\min_{(f, g) \in \Hin \times \Hhid} \hat{\epsilon}_n(f, g) + \Omega(f,g).
\end{equation}

\subsection{The Multi-layer KAE}\label{subsec:general_framework}
Like for standard AEs, the model previously described can be directly extended to more than 2 layers.
Let $L \geq 3$, and consider a collection of Hilbert spaces $\mathcal{X}_0,\; \ldots,\; \mathcal{X}_L$, with $\mathcal{X}_L = \mathcal{X}_0$.
For $0 \le l \le L-1$, the space $\mathcal{X}_l$ is supposed to be endowed with an OVK $\K_{l+1}: \mathcal{X}_l \times \mathcal{X}_l \rightarrow \mathcal{L}\left(\mathcal{X}_{l+1}\right)$, associated to a vv-RKHS $\mathcal{H}_{l+1} \subset \mathcal{F}(\mathcal{X}_l,\mathcal{X}_{l+1})$.
We then want to minimize $\epsilon(f_1, \ldots, f_L)$ over $\prod_{l=1}^L \mathcal{H}_l$.
Setting $\Omega(f_1, \ldots, f_L) \coloneqq \sum_{l=1}^L \lambda_l \|f_l\|_{\mathcal{H}_L}^2$ allows for a direct extension of \eqref{eq:emp_pb}:
\begin{equation}\label{eq:MLVVKAE_pbm}
\min_{f_l \in \mathcal{H}_l}~\frac 1 n \sum_{i=1}^n \left\| x_i-f_L \circ \ldots \circ f_1(x_i) \right\|^2_{\mathcal{X}_0} + \sum_{l=1}^L \lambda_l \|f_l\|_{\mathcal{H}_l}^2.
\end{equation}

\subsection{The General Hilbert KAE and the $\text{K}^2$AE}\label{subsec:OKAE}
\begin{figure}[!b]
\begin{center}
\includegraphics[width=0.95\columnwidth, page=4]{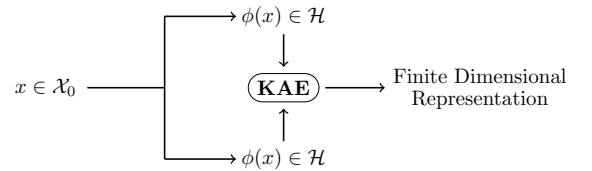}
\caption{Autoencoding any data thanks to $\text{K}^2$AE}
\label{fig:okae}
\end{center}
\end{figure}
So far, and up to the regularization term, the main difference between standard and kernel AEs is the function space on which the reconstruction criterion is optimized: respectively neural functions or RKHS ones.
But what should also be highlighted is that RKHS functions are valued in general Hilbert spaces, while neural functions are restricted to $\mathbb{R}^d$.
As shall be seen in section \ref{sec:inf_dim}, this enables KAEs to handle data from infinite dimensional Hilbert spaces (\textit{e.g.} function spaces), what standard AEs are unable to do.
To our knowledge, this first extension of the autoencoding scheme is novel.

But even more interesting is the possible extension when the input/output Hilbert space is chosen to be itself a RKHS.
Indeed, let $\mathcal{X}_0$ denote now any set (without the Hilbert assumption).
In the spirit of IOKR, let us first map $x \in \mathcal{X}_0$ to the RKHS $\mathcal{H}$ associated to some scalar kernel $k$, and its canonical feature map $\phi$.
Since the $\phi(x_i)'s$ are by definition valued in a Hilbert, KAE can be applied.
This way, we have extended the autoencoding paradigm to any set, and finite dimensional representations can be extracted from all types of data.
Again, such extension is novel to our knowledge.
Figure \ref{fig:okae} depicts the procedure, referred to as $\text{K}^2$AE, since the new criterion is a \textit{kernelization} of the KAE that reads:
\begin{equation}\label{eq:OVVKAE_pbm}
\frac 1 n \sum_{i=1}^n \left\| \phi(x_i)-f_L \circ \ldots \circ f_1(\phi(x_i)) \right\|^2_{\mathcal{H}} + \sum_{l=1}^L \lambda_l \|f_l\|_{\mathcal{H}_l}^2.
\end{equation}

\section{THEORETICAL ANALYSIS}\label{sec:theory}
It is the purpose of this section to investigate theoretical properties of the introduced model, its capacity to be learnt from training data with a controlled generalization error, and the connection between $\text{K}^2$AE and Kernel PCA (KPCA) namely.
%
%The contribution is two fold.
%
%First, a generalization bound is derived.
%
%Then, we emphasize on the link between $\text{K}^2$AE and Kernel Principal Component Analysis (KPCA).

\subsection{Generalization Bound}
While the algorithmic formulation aims at minimizing the regularized risk \eqref{eq:emp_pb}, the subsequent theoretical analysis focuses on the constrained problem \eqref{eq:risk}.
\Cref{thm:reformulation} relates the solutions from the two approaches to each other, so that bounds derived in the latter setting also apply to numerical solutions of the first one.
\begin{theorem}\label{thm:reformulation}
Let $V: \mathcal{H}_1 \times \ldots \times \mathcal{H}_L \rightarrow \mathbb{R}$ be an arbitrary function.
Consider the two problems:
\begin{equation}\label{eq:penalized_pb}
\min_{f_l \in \mathcal{H}_l}~\left\{V(f_1, \ldots, f_L) + \sum_{l=1}^L \lambda_l \|f_l\|_{\mathcal{H}_l}^2\right\},
\end{equation}
%
% and
%
\begin{equation}\label{eq:constrained_pb}
\min_{\substack{f_l \in \mathcal{H}_l\\\|f_l\|_{\mathcal{H}_l} \leq s_l}}~V(f_1, \ldots, f_L).
\end{equation}
Then, for any $(\lambda_1,\; \ldots,\; \lambda_L)\in \mathbb{R}_+^L$, there exists $(s_1,\; \ldots,\; s_L)\in \mathbb{R}_+^L$ such that any (respectively, local) solution to problem \eqref{eq:penalized_pb} is also a (respectively, local) solution to problem \eqref{eq:constrained_pb}.
\end{theorem}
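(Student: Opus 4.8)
The plan is to run the classical ``equality‑tight constraint'' argument relating penalized and constrained formulations; it requires no regularity assumption on $V$ whatsoever, since it is purely a statement about feasible sets. Fix $(\lambda_1, \ldots, \lambda_L) \in \mathbb{R}_+^L$ and let $(f_1^\star, \ldots, f_L^\star) \in \mathcal{H}_1 \times \cdots \times \mathcal{H}_L$ be any (local) solution of \eqref{eq:penalized_pb}. The key move is to pick the constraint radii to be the norms of this very solution: set $s_l \coloneqq \|f_l^\star\|_{\mathcal{H}_l}$ for $l = 1, \ldots, L$. Then $(f_1^\star, \ldots, f_L^\star)$ is feasible for \eqref{eq:constrained_pb} (each constraint holds with equality), and it only remains to prove optimality for that problem.

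For the global case I would argue by contradiction: suppose some $(g_1, \ldots, g_L)$ with $\|g_l\|_{\mathcal{H}_l} \le s_l$ for all $l$ satisfies $V(g_1, \ldots, g_L) < V(f_1^\star, \ldots, f_L^\star)$. Since $\|g_l\|_{\mathcal{H}_l} \le s_l = \|f_l^\star\|_{\mathcal{H}_l}$ and each $\lambda_l \ge 0$, summing yields $\sum_l \lambda_l \|g_l\|_{\mathcal{H}_l}^2 \le \sum_l \lambda_l \|f_l^\star\|_{\mathcal{H}_l}^2$; adding this to the strict inequality on $V$ gives
\[
V(g_1, \ldots, g_L) + \sum_l \lambda_l \|g_l\|_{\mathcal{H}_l}^2 < V(f_1^\star, \ldots, f_L^\star) + \sum_l \lambda_l \|f_l^\star\|_{\mathcal{H}_l}^2,
\]
which contradicts the optimality of $(f_1^\star, \ldots, f_L^\star)$ for \eqref{eq:penalized_pb}. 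Hence no such $(g_1, \ldots, g_L)$ exists and $(f_1^\star, \ldots, f_L^\star)$ is a global solution of \eqref{eq:constrained_pb}. For the local case I would repeat the same computation inside a neighborhood $U$ of $(f_1^\star, \ldots, f_L^\star)$ on which the penalized objective is minimized: the displayed inequality then forbids any point of $U$ that is both feasible for \eqref{eq:constrained_pb} and strictly smaller in $V$, so $(f_1^\star, \ldots, f_L^\star)$ is a local solution of \eqref{eq:constrained_pb}.

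I do not anticipate a genuine obstacle: the content is merely that replacing the penalty by the constraint which is exactly tight at a minimizer can only shrink the feasible set while keeping that minimizer in it. The single point I would state carefully is the quantifier order, since the radii $s_l$ are built from the chosen solution; the statement is thus understood per solution, and one fixed tuple $(s_1, \ldots, s_L)$ works simultaneously for all minimizers of \eqref{eq:penalized_pb} precisely when they share a common norm profile (in particular when the minimizer is unique). The remaining work is just the neighborhood bookkeeping for the local variant.
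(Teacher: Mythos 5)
Your proof is correct and follows essentially the same argument as the paper: choose the radii to be the norms of the penalized minimizer, note feasibility, and use monotonicity of the penalty together with optimality for the penalized problem to conclude $V$ cannot decrease on the constrained feasible set (your contradiction phrasing is just the contrapositive of the paper's direct inequality chain). Your remark on the per-solution choice of $(s_1,\ldots,s_L)$ is a fair and accurate reading of the quantifiers in the statement.
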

Refer to \Cref{sec:Eq_pbm_proof} for the proof and a discussion on the converse statement.

In order to establish generalization bound results for empirical minimizers in the present setting,
we now define two key quantities involved in the proof, \textit{i.e.} Rademacher and Gaussian averages for classes of Hilbert-valued functions.
\begin{definition}
Let $\mathcal{X}$ be any measurable space, and $H$ a separable Hilbert space. Consider a class $\mathcal{C}$ of measurable functions $h: \mathcal{X} \rightarrow H$. Let $\bm{\sigma}_1,\; \ldots,\; \bm{\sigma}_n$ be $n\geq 1$ independent $H$-valued Rademacher variables and define:
\begin{equation*}\label{eq:Rademacher}
\widehat{\mathscr{R}}_n(\mathcal{C}(S)) = \mathbb{E}_{\bm{\sigma}}\left[ \sup_{h \in \mathcal{C}} \frac{1}{n} \sum_{i=1}^n \left\langle \bm{\sigma}_i, h(x_i)\right\rangle_H\right].
\end{equation*}
\end{definition}
If $H = \mathbb{R}$, it is the classical Rademacher average (see \textit{e.g.} \citet{mohri2012foundations} p.34), while, when $H = \mathbb{R}^p$, it corresponds to the expectation of the supremum of the sum of the Rademacher averages over the $p$ components of $h$ (see Definition 2.1 in \citet{maurer2016bounds}).
If $H$ is an infinite dimensional Hilbert space with countable orthonormal basis $(e_k)_{k \in \mathbb{N}}$, we have:
\begin{equation*}
\widehat{\mathscr{R}}_n(\mathcal{C}(S)) = \mathbb{E}_{\bm{\sigma}}\left[ \sup_{h \in \mathcal{C}} \frac{1}{n} \sum_{i=1}^n \sum_{k=1}^\infty \sigma_{i, k} \left\langle h(x_i), e_k\right\rangle_H\right].
\end{equation*}
The Gaussian counterpart of $\widehat{\mathscr{R}}_n(\mathcal{C}(S))$, obtained by replacing Rademacher random variables/processes with standard $H$-valued Gaussian ones, is denoted by $\widehat{\mathscr{G}}_n(\mathcal{C}(S))$ throughout the paper.

For the sake of simplicity, results in the rest of the subsection are derived in the 2-layer case solely, with $\mathcal{X}_1$ finite dimensional (\textit{i.e.} $\mathcal{X}_1 = \mathbb{R}^p$), although the approach remains valid for deeper architectures.

Let $\mathcal{H}_{1, s} \coloneqq \left\{f \in \mathcal{H}_1: \|f\|_{\mathcal{H}_1} \le s\right\}$, and similarly $\mathcal{H}_{2, t} \coloneqq \left\{g \in \mathcal{H}_2: \|g\|_{\mathcal{H}_2} \le t,\sup_{y \in \mathbb{R}^p}\|g(y)\|_{\mathcal{X}_0} \le M\right\}$.
We shall use the notation
$\mathcal{H}_{s, t} \subset \mathcal{F}(\mathcal{X}_0, \mathcal{X}_0)$ to mean the space of composed functions $\mathcal{H}_{1, s} \circ \mathcal{H}_{2, t} \coloneqq \left\{ h \in \mathcal{F}(\mathcal{X}_0, \mathcal{X}_0): \exists(f,g) \in \mathcal{H}_{1, s} \times \mathcal{H}_{2, t},~h = g \circ f\right\}$. To simplify the notation, $\epsilon$ (and $\hat{\epsilon}_n$) may be abusively considered as a functional with one or two arguments: $\epsilon(f, g) = \epsilon(g \circ f) = \mathbb{E}_{X \sim P}\left\| X - g \circ f (X)\right\|_{\mathcal{X}_0}^2$. Finally, let $\hat{h}_n$ denote the minimizer of $\hat{\epsilon}_n$ over $\mathcal{H}_{s, t}$, and $\epsilon^*$ the infimum of $\epsilon$ on the same functional space.

% The following assumptions on $\mathcal{K}_1$ and $\mathcal{K}_2$ are needed to establish the bound stated below.
%
\begin{assumption}\label{hyp:trace_1}
There exists $K<+\infty$ such that:
$$
\forall x \in \mathcal{X}_0,\;\; \text{\normalfont \textbf{Tr}}\big(\mathcal{K}_1(x, x)\big) \le Kp.
$$
\end{assumption}

\begin{assumption}\label{hyp:trace_2}
There exists $L<+\infty$ such that for all $y,\;  y'$ in $\mathbb{R}^p$:
$$
\text{\normalfont \textbf{Tr}}\big( \mathcal{K}_2(y, y) - 2 \mathcal{K}_2(y, y') + \mathcal{K}_2(y', y') \big) \le L^2~\|y - y'\|_{\mathbb{R}^p}^2.
$$
\end{assumption}

\begin{theorem}\label{thm:gen_bound}
Let $\mathcal{K}_1$ and $\mathcal{K}_2$ be OVKs satisfying Assumptions \ref{hyp:trace_1} and \ref{hyp:trace_2} respectively. Then, there exists a universal constant $C_0<+\infty$ such that, for any $0 < \delta < 1$, we have with probability at least $1 - \delta$:
\begin{equation*}\label{eq:gen_bound}
\epsilon(\hat{h}_n) - \epsilon^* \le C_0 L M s t \sqrt{\frac{K p}{n}} + 24M^2 \sqrt{\frac{\log(2)/\delta}{2n}}.
\end{equation*}
\end{theorem}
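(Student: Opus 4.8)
The plan is to follow the standard Rademacher-complexity route to uniform convergence of the empirical risk, specialized to Hilbert-valued function classes. First I would invoke the bounded-differences (McDiarmid) inequality applied to the functional $S \mapsto \sup_{h \in \mathcal{H}_{s,t}} \big( \epsilon(h) - \hat{\epsilon}_n(h) \big)$. The loss is $\ell(h, x) = \|x - h(x)\|_{\mathcal{X}_0}^2$; since $\|X\|_{\mathcal{X}_0} \le M$ almost surely and the constraint built into $\mathcal{H}_{2,t}$ forces $\sup_y \|g(y)\|_{\mathcal{X}_0} \le M$, we have $\|x - h(x)\|_{\mathcal{X}_0} \le 2M$, hence $0 \le \ell(h,x) \le 4M^2$. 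Changing one sample point therefore changes the supremum by at most $4M^2/n$, and McDiarmid gives, with probability at least $1-\delta$,
\[
\sup_{h} \big( \epsilon(h) - \hat{\epsilon}_n(h) \big) \le \mathbb{E}_S \sup_{h} \big( \epsilon(h) - \hat{\epsilon}_n(h) \big) + 4M^2 \sqrt{\frac{\log(1/\delta)}{2n}}.
\]
A symmetrization argument bounds the expected supremum by $2\,\mathbb{E}_S \widehat{\mathscr{R}}_n(\ell \circ \mathcal{H}_{s,t})$ (Rademacher complexity of the loss class), and a standard union-of-two-one-sided-bounds manipulation absorbs the estimation error $\epsilon(\hat{h}_n) - \epsilon^*$ into twice the uniform deviation, explaining the factor-of-$2$ and eventually the constant $24$ in front of the $M^2$ term. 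The factor $24$ is exactly $4M^2 \cdot (2 \cdot 3)$-type bookkeeping: $4M^2$ from the loss range, a factor $2$ from bounding $\epsilon(\hat h_n)-\epsilon^*$ by twice the sup-deviation, plus a conservative constant from writing $\log(2/\delta)$.

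Next I would peel off the loss via a vector-contraction / Lipschitz argument. The map $u \mapsto \|x - u\|_{\mathcal{X}_0}^2$ restricted to $\|u\|_{\mathcal{X}_0} \le M$, $\|x\|_{\mathcal{X}_0}\le M$ is $4M$-Lipschitz in $u$, so by the contraction inequality for Hilbert-valued Rademacher averages (the Maurer vector-contraction lemma, cited in the excerpt via \citet{maurer2016bounds}) we get
\[
\widehat{\mathscr{R}}_n(\ell \circ \mathcal{H}_{s,t}) \le c\, M\, \widehat{\mathscr{R}}_n(\mathcal{H}_{s,t}),
\]
for a universal constant $c$, where $\widehat{\mathscr{R}}_n(\mathcal{H}_{s,t})$ is now the Rademacher average of the $\mathcal{X}_0$-valued composed functions $h = g \circ f$. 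Passing from Rademacher to Gaussian averages (losing only a $\sqrt{\log}$ or an absolute constant factor, depending on which comparison one prefers; since $\mathcal{X}_1 = \mathbb{R}^p$ is finite dimensional here, the clean constant-factor comparison of \citet{maurer2016bounds} applies) sets the stage for handling the composition, which is the genuinely new ingredient.

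The crux — and the step I expect to be the main obstacle — is bounding $\widehat{\mathscr{G}}_n(\mathcal{H}_{s,t})$ for the composition $g \circ f$ with $f \in \mathcal{H}_{1,s}$ valued in $\mathbb{R}^p$ and $g \in \mathcal{H}_{2,t}$. The natural device is a chain rule for Gaussian complexities of composed classes: conditionally on $f$, the inner layer produces points $f(x_1), \ldots, f(x_n) \in \mathbb{R}^p$, and $\widehat{\mathscr{G}}_n(\mathcal{H}_{2,t} \circ \{f\})$ is controlled by the RKHS norm bound $t$ together with a data-dependent term built from $\mathcal{K}_2$ evaluated at the $f(x_i)$'s — this is where Assumption \ref{hyp:trace_2}, the trace-Lipschitz property of $\mathcal{K}_2$, enters to make that term comparable to $\frac{1}{n}\sum_{i}\|f(x_i)\|^2$ up to the constant $L$. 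One then takes the outer expectation over the Gaussian process acting on $f$, which is controlled via the reproducing property of $\mathcal{H}_1$: $\|f(x_i)\|_{\mathbb{R}^p} = \|K^{(1)*}_{x_i} f\| \le \|f\|_{\mathcal{H}_1} \sqrt{\mathrm{Tr}\,\mathcal{K}_1(x_i,x_i)} \le s\sqrt{Kp}$ by Assumption \ref{hyp:trace_1}. Combining: the composed Gaussian average is of order $L\, s\, t\, \sqrt{Kp/n}$ (the $\sqrt{p}$ from the trace bound on $\mathcal{K}_1$, one $\sqrt{1/n}$ surviving, and the product $st$ from the two RKHS norm budgets). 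Multiplying back by the $O(M)$ from the loss contraction and the $O(1)$ from all the symmetrization/Gaussian-comparison constants produces the announced $C_0 L M s t \sqrt{Kp/n}$, and collecting the McDiarmid tail gives the second summand. The delicate points to get right are (i) the precise form of the chain rule for Gaussian complexities of composed vector-valued RKHS classes — one must be careful that the outer class $\mathcal{H}_{2,t}$ is indexed over all of $\mathbb{R}^p$, not just the finitely many observed inner outputs, so a covering/Slepian-type argument or the explicit RKHS representer structure is needed to reduce to the data-dependent quantity — and (ii) tracking that the Lipschitz-in-$f$ dependence of the inner Gaussian process is exactly what Assumption \ref{hyp:trace_2} supplies, so that no extra factor of $p$ sneaks in from the $p$ coordinates of the representation space.
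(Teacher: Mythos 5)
Your proposal follows essentially the same route as the paper's proof: McDiarmid plus symmetrization to get \eqref{eq:Rade_1}, Maurer's vector-contraction lemma to peel off the squared-norm loss, the Rademacher-to-Gaussian comparison, and Maurer's chain rule for Gaussian averages of composed classes, with \Cref{hyp:trace_2} supplying the Lipschitz constant $Lt$ of the outer class and \Cref{hyp:trace_1} giving $\widehat{\mathscr{G}}_n\big(\mathcal{H}_{1,s}'(S)\big) \le s\sqrt{Kp/n}$, exactly as you describe. The one step you flag as the main obstacle --- the validity of the chain rule when the outer class is Hilbert-valued rather than $\mathbb{R}^m$-valued --- is resolved in the paper (\Cref{thm:chain_rule}) by observing that the only dimension-dependent ingredient in Maurer's argument is a Gaussian concentration inequality for Lipschitz functions, which extends verbatim to Hilbert-valued standard Gaussians (\Cref{thm:TIS}, after Pisier).
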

%

% The exess of risk of the empirical minimizer can be studied by the classic bound
% $\epsilon(\hat{h}_n) - \epsilon^*\leq 2 \sup_{h \in \mathcal{H}_{s, t}} |\hat{\epsilon}_n(h) - \epsilon(h)|$.
% %
% Standard arguments (McDiarmid's inequality, symmetrization) allow to upper bound with high probability this latter quantity in terms of Rademacher complexity.
% %
% Corollary 4 in \citet{maurer2016vector} then permit to restrict the analysis to $\widehat{\mathscr{G}}_n(\mathcal{H}_{s, t}(S))$.
% %
% By means of an extension of Theorem 2 in \citet{maurer2014chain}, proved in the Supplementary Material, $\widehat{\mathscr{G}}_n((\mathcal{H}_{1, s} \circ \mathcal{H}_{2, t})(S))$ is further decomposed into quantities depending only on $\mathcal{H}_{1, s}$ or $\mathcal{H}_{2, t}$.
% %
% Finally, each individual term is bounded using the stipulated assumptions.
% %
% Technical details are deferred to \cref{sec:lemma_proof}.
%
The proof relies on a Rademacher bound, which is in turn upper bounded using Corollary 4 in \citet{maurer2016vector}, an extension of Theorem 2 in \citet{maurer2014chain} proved in the Supplementary Material, and several intermediary results derived from the stipulated assumptions.
Technical details are deferred to \Cref{sec:lemma_proof}.

Attention should be paid to the fact that constants in \Cref{thm:gen_bound} appear in a very interpretable fashion: the less spread the input (the smaller the constant $M$), the more restrictive the constraints on the functions (the smaller $K$, $L$, $s$ and $t$), and the smaller the internal dimension $p$, the sharper the bound.

%%%%%%%%%%%%%%%%%%%%%%%%
%  LINK WITH THE KPCA  %
%%%%%%%%%%%%%%%%%%%%%%%%

\subsection{$\text{K}^2$AE and Kernel PCA: a Connection}
Just as \citet{bourlard1988auto} have shown a mere equivalence between PCA and standard 2-layer AEs, a similar link can be established between 2-layer $\text{K}^2$AE and Kernel PCA \citep{scholkopf1997kernel,scholkopf1998nonlinear}.
Throughout the analysis, a 2-layer $\text{K}^2$AE is considered, with decomposable kernels made of linear scalar kernels and identity operators.
Also, there is no penalization (\textit{i.e.} $\lambda_1 = \lambda_2 = 0$).
We want to autoencode data into $\mathbb{R}^p$, after a first embedding through the feature map $\phi$, like in \eqref{eq:OVVKAE_pbm}.

% Finite dim

\subsubsection{Finite Dimensional Feature Map}
Let us assume first that $\phi$ is valued in $\mathbb{R}^d$, with \mbox{$p < d < n$}.
Let $\Phi = (\phi(x_1), \ldots, \phi(x_n))^\top \in \mathbb{R}^{n \times d}$ denote the matrix storing the $\phi(x_i)^\top$ to autoencode in rows.
Note that $K_\phi = \Phi \Phi^\top \in \mathbb{R}^{n \times n}$ corresponds to the Gram matrix associated to $\phi$.
As shall be seen in \Cref{thm:RT}, the optimal $f$ and $g$ have a specific form, so that they only depend on two coefficient matrices, $A \in \mathbb{R}^{n \times p}$ and $B \in \mathbb{R}^{n \times d}$ respectively.
Equipped with this notation, one has: $Y = f_A(\Phi) = \Phi \Phi^\top A \in \mathbb{R}^{n \times p}$, and $\widetilde{\Phi} = g_B(Y) = YY^\top B \in \mathbb{R}^{n \times d}$.
Without penalization, the goal is then to minimize in $A$ and $B$:
\begin{equation}\label{eq:fro}
\|\Phi - \widetilde{\Phi}\|_\text{Fr}^2.
\end{equation}
$\widetilde{\Phi}$ being at most of rank $p$, we know from Eckart-Young Theorem that the best possible $\widetilde{\Phi}$ is given by $\Phi^* = U \Sigma_p V^\top$, where $\left(U \in \mathbb{R}^{n \times d}, \Sigma \in \mathbb{R}^{d \times d}, V^\top \in \mathbb{R}^{d \times d}\right)$ is the \textit{thin} Singular Value Decomposition (SVD) of $\Phi$ such that $\Phi = U \Sigma V^\top$, and $\Sigma_p$ is equal to $\Sigma$, but with the $d -p$ smallest singular values zeroed.

Let us now prove that there exists a couple of matrices $(A^*, B^*)$ such that $g_{B^*} \circ f_{A^*}(\Phi) = \Phi^*$.
One can verify that $( A^* = U_p {\overline{\Sigma}_p}^{-3/2}, B^* = U V^\top)$, with $U_p \in \mathbb{R}^{n \times p}$ storing only the $p$ largest eigenvectors of $K_\phi$, and $\overline{\Sigma}_p \in \mathbb{R}^{p \times p}$ the $p \times p$ top left block of $\Sigma_p$, satisfy it.
Finally, the optimal encoding returned is $Y^* = f_{A^*}(\Phi) = \left(\sqrt{\sigma_1}u_1, \ldots, \sqrt{\sigma_p}u_p \right)$, with $u_1, \ldots, u_p$ the $p$ largest eigenvectors of $K_\phi$, while the KPCA's new representation is $\left(\sigma_1 u_1, \ldots, \sigma_p u_p \right)$.
Precisely, this $1/\sqrt{\sigma_i}$ rescaling may be seen as a KPCA whitening, and the encoding returned by KAE is actually known as the Kernel PCA Map (see \textit{e.g.} Section 2.2.6 in \citet{smola1998learning}).
Notice also that the algorithmic resolution of Problem \eqref{eq:fro} under different structural constraints (low rank assumption, sparsity) is studied in \citet{smola2000sparse}.

We have shown that a specific instance of $\text{K}^2$AE can be solved explicitly using a SVD, and that the optimal coding returned is close to the one output by KPCA.

% Infinite dim

\subsubsection{Infinite Dimensional Feature Map}
Let us assume now that $\phi$ is valued in a general Hilbert space $\mathcal{H}$.
$\Phi$ is now seen as the linear operator from $\mathcal{H}$ to $\mathbb{R}^n$ such that $\forall \alpha \in \mathcal{H},~\Phi\alpha = \left(\left\langle \alpha, \phi(x_1)\right\rangle_\mathcal{H}, \ldots, \left\langle \alpha, \phi(x_n)\right\rangle_\mathcal{H}\right) \in \mathbb{R}^n$.
Since \Cref{thm:reformulation} makes no assumption on the dimensionality, everything stated in the finite dimensional scenario applies, except that $B \in \mathcal{L}(\mathcal{H}, \mathbb{R}^n)$, and that we minimize the Hilbert-Schmidt norm:  $\|\Phi - \widetilde{\Phi}\|_{HS}^2$.
We then need an equivalent of Eckart-Young Theorem.
It still holds since its proof only requires the existence of an SVD for any operator, which is granted in our case since we deal with compact operators (they have finite rank $n$).
The end of the proof is analogous to the finite dimensional case.

\section{THE KAE ALGORITHMS} \label{sec:algo}
This section describes at length the algorithms we propose to solve problems \eqref{eq:MLVVKAE_pbm} and \eqref{eq:OVVKAE_pbm}.
They raise two major issues as their objective functions are non-convex, and their search spaces are infinite dimensional.
However, this last difficulty is solved by \Cref{thm:RT}.

%%%%%%%%%%%%%%%%%
%  REPRESENTER  %
%%%%%%%%%%%%%%%%%

\subsection{A Representer Theorem}
\begin{theorem}\label{thm:RT}
Let $L_0 \in \llbracket L \rrbracket$, and $V: \mathcal{X}_{L_0}^n \times \mathbb{R}_+^{L_0} \rightarrow \mathbb{R}$ a~function of $n + L_0$ variables, strictly increasing in each of its $L_0$ last arguments.
Suppose that $(f_1^*, \ldots, f_{L_0}^*)$ is a solution to the optimization problem:
\begin{align*}
\min_{f_l \in \mathcal{H}_l}V\Big(& (f_{L_0} \circ \ldots \circ f_1)(x_1), \ldots, (f_{L_0}\circ\ldots\circ f_1)(x_n),\\
& \left\| f_1 \right\|_{\mathcal{H}_1}, \ldots, \left\|f_{L_0}\right\|_{\mathcal{H}_{L_0}}\Big).
\end{align*}
Let ${x_i^*}^{(l)} \coloneqq f_l^* \circ \ldots \circ f_1^*(x_i)$, with ${x_i^*}^{(0)} \coloneqq x_i$.
Then, $\exists \left(\varphi_{1, 1}^*, \ldots, \varphi_{1, n}^*, \ldots, \varphi_{L_0, n}^* \right) \in \mathcal{X}_1^n \times \ldots \times \mathcal{X}_{L_0}^n:$
\begin{equation*}
\forall~ l \in \llbracket L_0 \rrbracket,\quad f_l^*(\cdot) = \sum_{i=1}^n \mathcal{K}_l\left(~\cdot~,{x_i^*}^{(l-1)}\right) \varphi_{l, i}^*.
\end{equation*}
\end{theorem}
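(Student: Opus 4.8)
The plan is to prove this by a layer-by-layer orthogonal decomposition argument, generalizing the classical Representer Theorem from a single RKHS to the composition of $L_0$ vv-RKHSs. The key observation is that the objective $V$ depends on the functions $f_1^*, \ldots, f_{L_0}^*$ only through (i) the values of the composed map at the finitely many points $x_1, \ldots, x_n$, and (ii) the individual RKHS norms $\|f_l^*\|_{\mathcal{H}_l}$, with $V$ strictly increasing in each norm argument. So if we can perturb any $f_l^*$ to strictly decrease $\|f_l^*\|_{\mathcal{H}_l}$ while leaving all the relevant evaluations unchanged, we contradict optimality; hence each $f_l^*$ must already lie in the ``small'' subspace spanned by the appropriate kernel sections.

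\medskip

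Concretely, I would argue by induction/forward sweep over the layers $l = 1, \ldots, L_0$, showing that $f_l^*$ has the claimed form. For the base layer, note that $f_1^*$ is only ever evaluated (within the composition) at the points $x_1, \ldots, x_n = {x_i^*}^{(0)}$. Decompose $f_1^*$ orthogonally in $\mathcal{H}_1$ as $f_1^* = f_1^\parallel + f_1^\perp$, where $f_1^\parallel$ lies in the closed span of $\{\mathcal{K}_1(\cdot, x_i)\varphi : \varphi \in \mathcal{X}_1,\ i \in \llbracket n \rrbracket\}$ and $f_1^\perp$ is orthogonal to it. By the reproducing property, $f_1^*(x_i) = K_{x_i}^* f_1^* = K_{x_i}^* f_1^\parallel = f_1^\parallel(x_i)$ for every $i$, since $K_{x_i}^* f_1^\perp$ pairs $f_1^\perp$ against elements of exactly the span we projected out; more precisely $\langle f_1^\perp(x_i), y\rangle_{\mathcal{X}_1} = \langle f_1^\perp, K_{x_i} y\rangle_{\mathcal{H}_1} = 0$ for all $y$. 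Thus replacing $f_1^*$ by $f_1^\parallel$ leaves ${x_i^*}^{(1)}$ unchanged for every $i$ — hence all subsequent composed evaluations and the norms $\|f_2^*\|_{\mathcal{H}_2}, \ldots$ unchanged — while $\|f_1^\parallel\|_{\mathcal{H}_1} \le \|f_1^*\|_{\mathcal{H}_1}$, with strict inequality unless $f_1^\perp = 0$. Since $V$ is strictly increasing in its first norm slot, optimality forces $f_1^\perp = 0$, i.e. $f_1^*(\cdot) = \sum_{i=1}^n \mathcal{K}_1(\cdot, x_i)\varphi_{1,i}^* = \sum_{i=1}^n \mathcal{K}_1(\cdot, {x_i^*}^{(0)})\varphi_{1,i}^*$ for some $\varphi_{1,i}^* \in \mathcal{X}_1$.

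\medskip

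For the inductive step, suppose we have fixed the forms of $f_1^*, \ldots, f_{l-1}^*$, so in particular the intermediate points ${x_i^*}^{(l-1)} = f_{l-1}^* \circ \cdots \circ f_1^*(x_i) \in \mathcal{X}_{l-1}$ are determined. The function $f_l^*$ is evaluated within the composition only at these $n$ points. Apply the same orthogonal decomposition of $f_l^*$ in $\mathcal{H}_l$ relative to the span of $\{\mathcal{K}_l(\cdot, {x_i^*}^{(l-1)})\varphi : \varphi \in \mathcal{X}_l,\ i \in \llbracket n \rrbracket\}$: the parallel component reproduces all the values $f_l^*({x_i^*}^{(l-1)}) = {x_i^*}^{(l)}$, hence leaves ${x_i^*}^{(l)}, \ldots, {x_i^*}^{(L_0)}$ and the downstream norms unchanged, and has norm no larger, strictly smaller unless the orthogonal component vanishes. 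Strict monotonicity of $V$ in the $l$-th norm argument again forces $f_l^* = \sum_{i=1}^n \mathcal{K}_l(\cdot, {x_i^*}^{(l-1)})\varphi_{l,i}^*$. Iterating through $l = L_0$ yields the full statement, with the coefficients $\varphi_{l,i}^* \in \mathcal{X}_l$ as claimed.

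\medskip

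I expect the main technical obstacle to be bookkeeping rather than conceptual: one must be careful that the orthogonal decomposition at layer $l$ is taken with respect to the span determined by the \emph{optimal} intermediate points ${x_i^*}^{(l-1)}$ — which themselves depend on $f_1^*, \ldots, f_{l-1}^*$ — so the argument is genuinely sequential and the ``span'' being projected onto changes after each layer is pinned down; presenting this cleanly (e.g. by a downward contradiction from a hypothetical nonzero orthogonal component at the first layer where it occurs, rather than a literal induction) avoids circularity. A secondary point requiring care is that the closed span in an infinite-dimensional $\mathcal{H}_l$ need not be finite-dimensional a priori, but the orthogonal projection onto a closed subspace still exists, the reproducing property still kills the orthogonal part, and the coefficient representation $\sum_i \mathcal{K}_l(\cdot, {x_i^*}^{(l-1)})\varphi_{l,i}^*$ is exactly the generic element of that span — so no compactness or finite-dimensionality is needed, only that $\mathcal{H}_l$ is a Hilbert space. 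Everything else (the norm inequality under projection, the reproducing identity $f(x) = K_x^* f$) is quoted directly from the vv-RKHS background established above.
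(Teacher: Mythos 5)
Your proof is correct and follows essentially the same route as the paper: the paper also perturbs one layer at a time (any interpolant of $f_{l_0}^*$ at the points ${x_i^*}^{(l_0-1)}$ leaves the sample evaluations and the other norms unchanged, so strict monotonicity of $V$ forces $f_{l_0}^*$ to be the minimal-norm interpolant), and then invokes the vv-RKHS minimal-norm representer lemma of Micchelli and Pontil, whose proof is exactly your orthogonal-decomposition argument. The only cosmetic difference is that you inline the projection step and phrase the sweep as an induction, whereas the paper treats each layer independently via the cited lemma; your own remark that the sequential bookkeeping is avoidable matches the paper's presentation.
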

\begin{proof}
Refer to \Cref{sec:RT_proof}
\end{proof}
%
% This Theorem is quite general as condition on $V$ is not restrictive.
%
% Therefore, it can be applied equivalently to problem \eqref{eq:MLVVKAE_pbm} or \eqref{eq:OVVKAE_pbm}, and regardless of the number of composed functions considered.
This Theorem exhibits a very specific structure for the minimizers, as each layer's support vectors are the images of the original points by the previous layer.

%%%%%%%%%%%%%%%%
%  FINITE DIM  %
%%%%%%%%%%%%%%%%

\subsection{Finite Dimension Case}\label{sec:finite_dim}
In this section, let us assume that $\mathcal{X}_l = \mathbb{R}^{d_l}$ for $l \in \llbracket L \rrbracket$.
The objective function of \eqref{eq:MLVVKAE_pbm}, viewed as a function of $(f_L \circ \ldots \circ f_1)(x_1), \ldots, (f_L \circ \ldots \circ f_1)(x_n), \|f_1\|_{\mathcal{H}_1}, \ldots, \|f_L\|_{\mathcal{H}_L}$ satisfies the condition on $V$ involved in \Cref{thm:RT}.
After applying it (with $L_0 = L$), problem \eqref{eq:MLVVKAE_pbm} boils down to the problem of finding the $\varphi_{l, i}^*$'s, which are finite dimensional.
This crucial observation shows that our problem can be solved in a computable manner.
However, its convexity still cannot be ensured (see \Cref{sec:non_cvx}).

The objective only depending on the $\varphi_{l, i}$'s, problem \eqref{eq:MLVVKAE_pbm} can be approximately solved by Gradient Descent (GD).
We now specify the gradient derivation in the decomposable OVKs case, \textit{i.e.} for any layer $l$ there exists a scalar kernel $k_l$ and $A_l \in \mathcal{L}(X_l)$ positive semidefinite such that $\mathcal{K}_l(x, x') = k_l(x, x')A_l$.
All detailed computations can be found in \Cref{sec:grad_detail}.
Let $\Phi_l \coloneqq (\varphi_{l, 1}, \ldots, \varphi_{l, n})^\top \in \mathbb{R}^{n \times d_l}$ storing the coefficients $\varphi_{l, i}$ in rows, and $K_l \in \mathbb{R}^{n \times n}$ such that $\left[K_l\right]_{i, i'} = k_l\Big(x_i^{(l-1)}, x_{i'}^{(l-1)}\Big)$.
Let $(l_0, i_0) \in  \llbracket L \rrbracket \times \llbracket n \rrbracket$, the gradient of the distortion term reads:

\begin{align}\label{eq:disto_grad}
\Bigg(\nabla_{\varphi_{l_0, i_0}}&~\frac{1}{n}\sum_{i=1}^n \|x_i - f_L \circ \ldots \circ f_1(x_i)\|_{\mathcal{X}_0}^2\Bigg)^\top\\
= &~-\frac{2}{n}\sum_{i=1}^n \left(x_i - {x_i}^{(L)}\right)^\top\textbf{Jac}_{{x_i}^{(L)}}(\varphi_{l_0, i_0}).\nonumber
\end{align}
%
% Assuming the matrices $\textbf{Jac}_{{x_i}^{(L)}}(\varphi_{l_0, i_0})$ are known (their computations are detailed in section \ref{sec:jacs}), the distortion part of the gradient is computable.
%
On the other hand, $\|f_l\|_{\mathcal{H}_l}^2$ may be rewritten as:
\begin{equation}\label{eq:norms_detail}
\|f_l\|_{\mathcal{H}_l}^2 = \sum_{i, i' = 1}^n k_l\left({x_i}^{(l-1)}, {x_{i'}}^{(l-1)}\right)\left\langle \varphi_{l, i},A_l\varphi_{l, i'}\right\rangle_{\mathcal{X}_l},
\end{equation}
so that it may depend on $\varphi_{l_0, i_0}$ in two ways:
1) if $l_0=l$, there is a direct dependence of the second quadratic term,
2) but note also that for $l_0 < l$, the $\varphi_{l_0, i}$ have an influence on the ${x_i}^{(l-1)}$ and so on the first term.
This remark leads to the following formulas:
\begin{equation}\label{eq:own_norm}
\nabla_{\Phi_l}~\|f_l\|_{\mathcal{H}_l}^2 = 2~K_l \Phi_l A_l,
\end{equation}
with $\nabla_{\Phi_l}F \coloneqq \left((\nabla_{\varphi_{l, 1}}F)^\top,\dots, (\nabla_{\varphi_{l, n}}F)^\top\right)^\top \in \mathbb{R}^{n \times d_l}$ storing the gradients of any real-valued function $F$ with respect to the $\varphi_{l, i}$ in rows.
And when $l_0 < l$:
\begin{align}\label{eq:cross_norm}
&\Big(\nabla_{\varphi_{l_0, i_0}}\|f_l\|_{\mathcal{H}_l}^2\Big)^\top = 2\sum_{i,i'=1}^n \bigg\{\\
&[N_l]_{i, i'}\left(\nabla^{(1)} k_l\left({x_i}^{(l-1)}, {x_{i'}}^{(l-1)}\right)\right)^\top\textbf{Jac}_{{x_i}^{(l-1)}}(\varphi_{l_0, i_0})\bigg\},\nonumber
\end{align}
where $\nabla^{(1)} k_l\left(x, x'\right)$ denotes the gradient of $k_l(\cdot,\cdot)$ with respect to the $1^{st}$ coordinate evaluated~in $\left(x, x'\right)$, and $N_l$ the $n \times n$ matrix such that $[N_l]_{i, i'} = \left\langle \varphi_{l, i}~,~A_l~\varphi_{l, i'}\right\rangle_{\mathcal{X}_l}$.
Again, assuming the matrices $\textbf{Jac}_{{x_i}^{(L)}}(\varphi_{l_0, i_0})$ are known, the norm part of the gradient is computable.
Combining expressions \eqref{eq:disto_grad}, \eqref{eq:own_norm} and \eqref{eq:cross_norm} using the linearity of the gradient leads readily to the complete formula.

If $n$, $L$, and $p$ denote respectively the number of samples, the number of layers, and the size of the largest latent space, the algorithm complexity is no more than $\mathcal{O}(n^2 L p)$ for objective evaluation, and $\mathcal{O}(n^3 L^2 p^3)$ for gradient derivation.
Hence, it appears natural to consider stochastic versions of GD.
But as shown by equation \eqref{eq:cross_norm}, the norms gradients involve the computation of many Jacobians.
Selecting a mini-batch does not affect these terms, which are the most time consuming.
Thus, the expected acceleration due to stochasticity must not be so important.
Nevertheless, a \textit{doubly stochastic} scheme where both the points on which the objective is evaluated, as well as the coefficients to be updated, are chosen randomly at each iteration, might be of high interest since it would dramatically decrease the number of Jacobians computed.
However, this approach goes beyond the scope of this paper, and is left for future work.

%%%%%%%%%%%%%%%%%%
%  INFINITE DIM  %
%%%%%%%%%%%%%%%%%%

\subsection{General Hilbert Space Case}\label{sec:inf_dim}
In this section, $\mathcal{X}_0$ (and so $\mathcal{X}_L$) are supposed to be infinite dimensional.
Despite this relaxation, KAEs remains computable.
As \Cref{thm:RT} makes no assumption on the dimensionality of $\mathcal{X}_0$, it can be applied.
The only difference is that coefficients $\varphi_{L, i}$'s $\in \mathcal{X}_L^n$ are infinite dimensional, preventing from the use of a global GD.
But assuming the $\varphi_{L, i}$'s to be fixed, a GD can still be performed on the $\varphi_{l, i}$'s, $l \in \llbracket L - 1 \rrbracket$.
On the other hand, if one assumes these coefficients fixed, the optimal $\varphi_{L, i}$'s are the solutions to a Kernel Ridge Regression (KRR).
Consequently, a hybrid approach alternating GD and KRR is considered.
Two issues remain to be addressed:
1) how to compute the KRR in $\mathcal{X}_L$,
2) how to propagate the gradients through $\mathcal{X}_L$.

From now, $A_L$ is assumed to be the identity operator.
If the $\varphi_{l, i}$'s, $l \in \llbracket L - 1 \rrbracket$ are fixed, then the best $\varphi_{L,i}$'s shall satisfy \citep{micchelli2005learning} for all $i \in \llbracket n \rrbracket$:
\begin{equation}\label{eq:KRR}
\sum_{i'=1}^n \left(\mathcal{K}_L\left({x_i}^{(L-1)},{x_{i'}}^{(L-1)}\right) + n\lambda_L \delta_{ii'}\right)\varphi_{L, i'} = x_i.
\end{equation}
In particular, the computation of $N_L$ becomes explicit (\Cref{sec:lin_sys}) as long as we know the dot products $\left\langle x_j, x_{j'}\right\rangle_{\mathcal{X}_0}$.
In the case of the $\text{K}^2$AE, these dot products are the input Gram matrix $K_{in}$.
Let $N_{\text{KRR}}$ be the function that computes $N_L$ from the $\varphi_{l, i}$'s, $l \in \llbracket L - 1 \rrbracket$, and $K_{in}$.
What is remarkable is that knowing $N_L$ (and not each $\varphi_{L, i}$ individually) is enough to propagate the gradient through the infinite dimensional layer.

Indeed, let us assume now that $N_L$ is fixed.
All spaces but $\mathcal{X}_L$ remaining finite dimensional, changes in the gradients only occur where the last layer is involved, namely for the distortion and for $\|f_L\|_{\mathcal{H}_L}^2$.
As for the gradients of $\|f_L\|_{\mathcal{H}_L}^2$, equation \eqref{eq:cross_norm} remain true.
If $N_L$ is given, there is no difficulty.
As for the distortion, the use of the differential (see \Cref{sec:differential}) gives:
\begin{align}\label{eq:differential}
&{\nabla_{\varphi_{l_0, i_0}}\left\|x_i - x_i^{(L)}\right\|_{\mathcal{X}_0}^2}^\top = -2 \sum_{i'=1}^n \bigg\{\\
&\Big\langle x_i - x_i^{(L)}, \varphi_{L, i'}\Big\rangle_{\mathcal{X}_L} \left(\nabla_{\varphi_{l_0, i_0}}k_L\left(x_i^{(L-1)},x_{i'}^{(L-1)}\right)\right)^\top\bigg\}.\nonumber
\end{align}
It is a direct extension of \eqref{eq:disto_grad}, where $\textbf{Jac}_{{x_i}^{(L)}}(\varphi_{l_0, i_0})$, has been replaced using the definition of ${x_i}^{(L)}$.
Using again \eqref{eq:KRR}, $\langle x_i - x_i^{(L)}, \varphi_{L, i'}\rangle_{\mathcal{X}_L}$ can be rewritten as $n\lambda_L\left\langle\varphi_{L, i} , \varphi_{L, i'}\right\rangle_{\mathcal{X}_L} = n \lambda_L [N_L]_{i, i'}$, and infinite dimensional objects are dealt with.
The crux of the algorithm is that infinite dimensional coefficients $\varphi_{L,i}$'s are never computed, but only their scalar products.
Not knowing the $\varphi_{L,i}$'s is of no importance, as we are interested in the encoding function, which does not rely on them.
Let $T$ be a number of epochs, and $\gamma_{t}$ a step size rule, the approach is summarized in \Cref{alg:md}.

\begin{algorithm}[t]
\SetKwInOut{Input}{input}
\SetKwInOut{Init}{init}
\SetKwInOut{Parameter}{Param}
\caption{General Hilbert KAE and $\text{K}^2$AE}
\Input{Gram matrix $K_{in}$}
\Init{$\Phi_1 = \Phi_1^{init}, \ldots, \Phi_{L-1} = \Phi_{L-1}^{init},$\\
      $N_L = N_{KRR}\left(\Phi_1, \ldots, \Phi_{L-1}, K_{in}\right)$}
\For{epoch $t$ from $1$ to $T$}{
    \tcp{inner coefficients gradient update}
    \For{layer $l$ from $1$ to $L-1$}{
        $\Phi_l = \Phi_l - \gamma_{t}~\nabla_{\Phi_{l}}\left(\hat{\epsilon}_n + \Omega~|~N_L, K_{in}\right)$
        }
    \tcp{outer coefficient dot products update}
    $N_L = N_{KRR}\left(\Phi_1, \ldots, \Phi_{L-1}, K_{in}\right)$
    }
\Return{$\Phi_1, \ldots, \Phi_{L-1}$}
\label{alg:md}
\end{algorithm}

\section{NUMERICAL EXPERIMENTS}\label{sec:experiments}
Numerical experiments have been run in order to assess the ability of KAEs to provide relevant data representations.
We used decomposable OVKs with the identity operator as $A$, and the Gaussian kernel as $k$.
First, we present insights on the interesting properties of the KAEs via a 2D example.
Then, we describe more involved experiments on the NCI dataset to measure the power of KAEs.
\begin{figure*}[!ht]
\vspace{-0.5cm}
\centering
\subfigure[2D Input Data\label{fig:cc_inp}]{\includegraphics[width=0.325\textwidth]{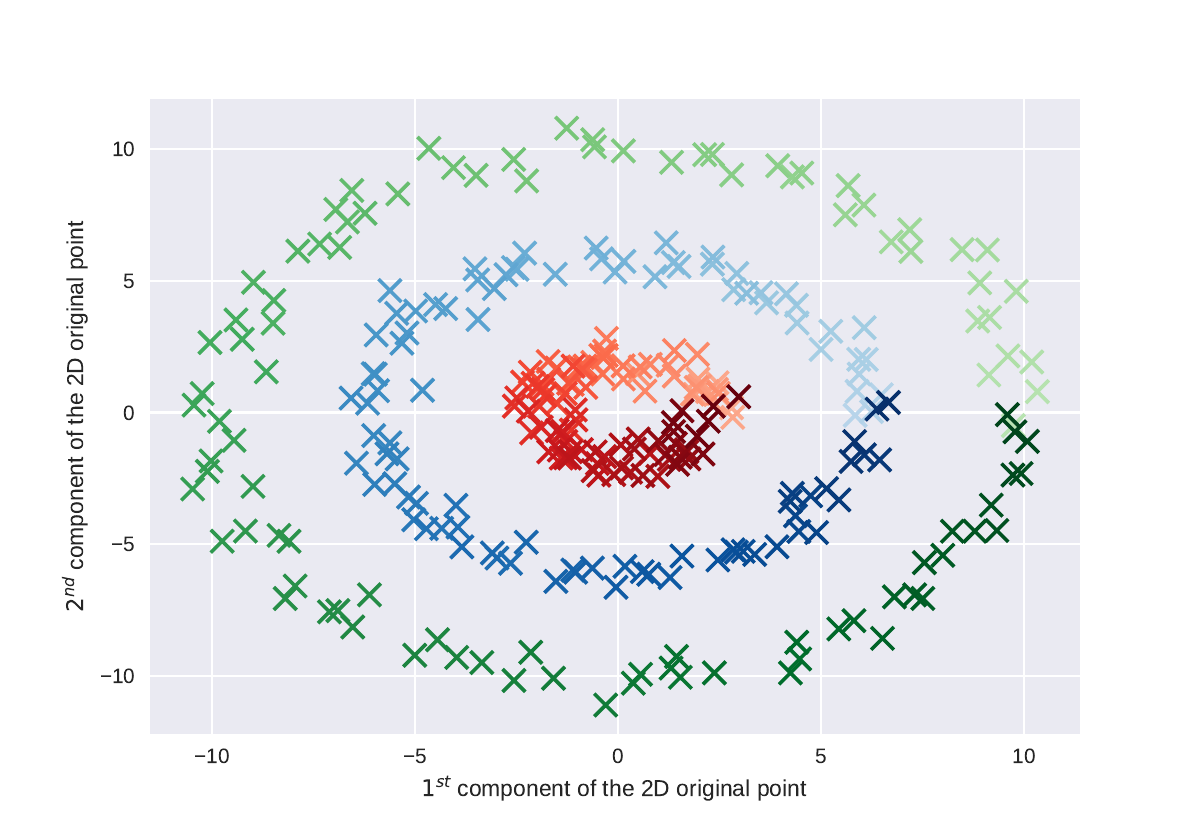}}
\subfigure[AE Reconstruction\label{fig:ae_reco}]{\includegraphics[width=0.325\textwidth]{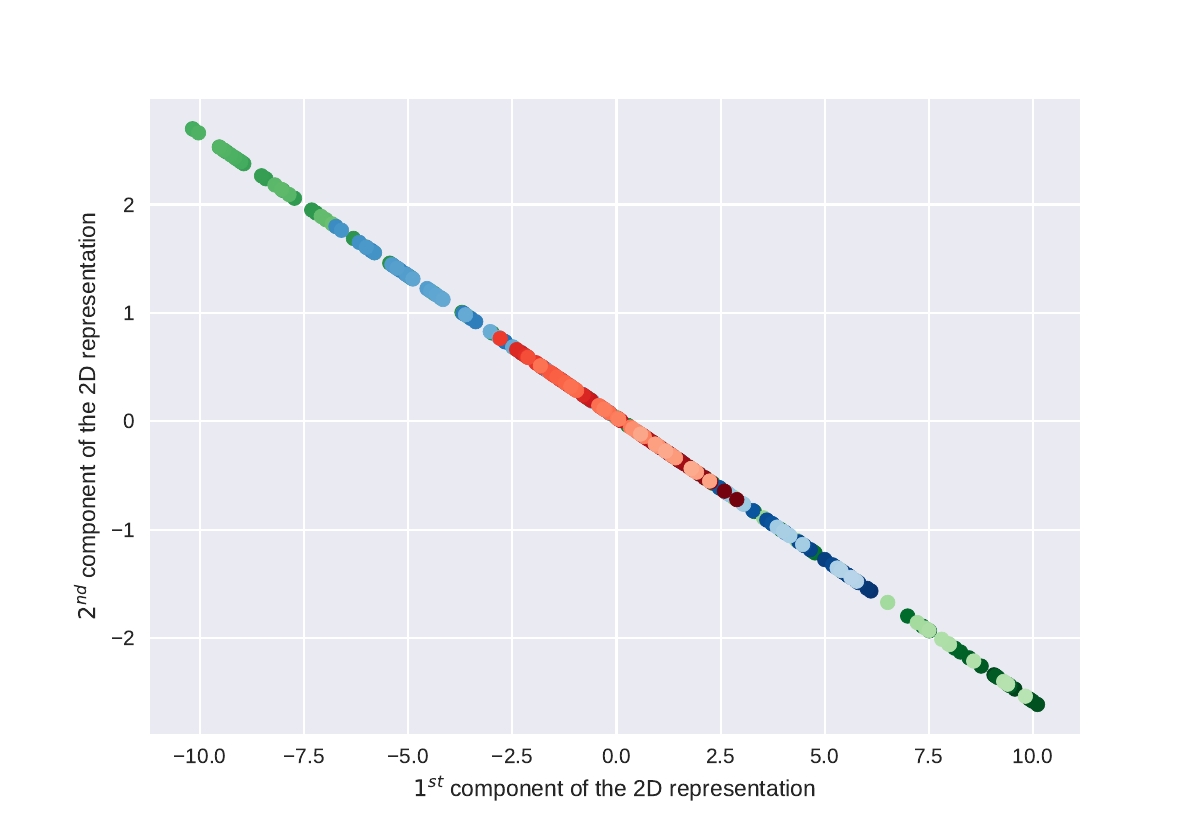}}
\subfigure[KAE Reconstruction\label{fig:kae_reco}]{\includegraphics[width=0.325\textwidth]{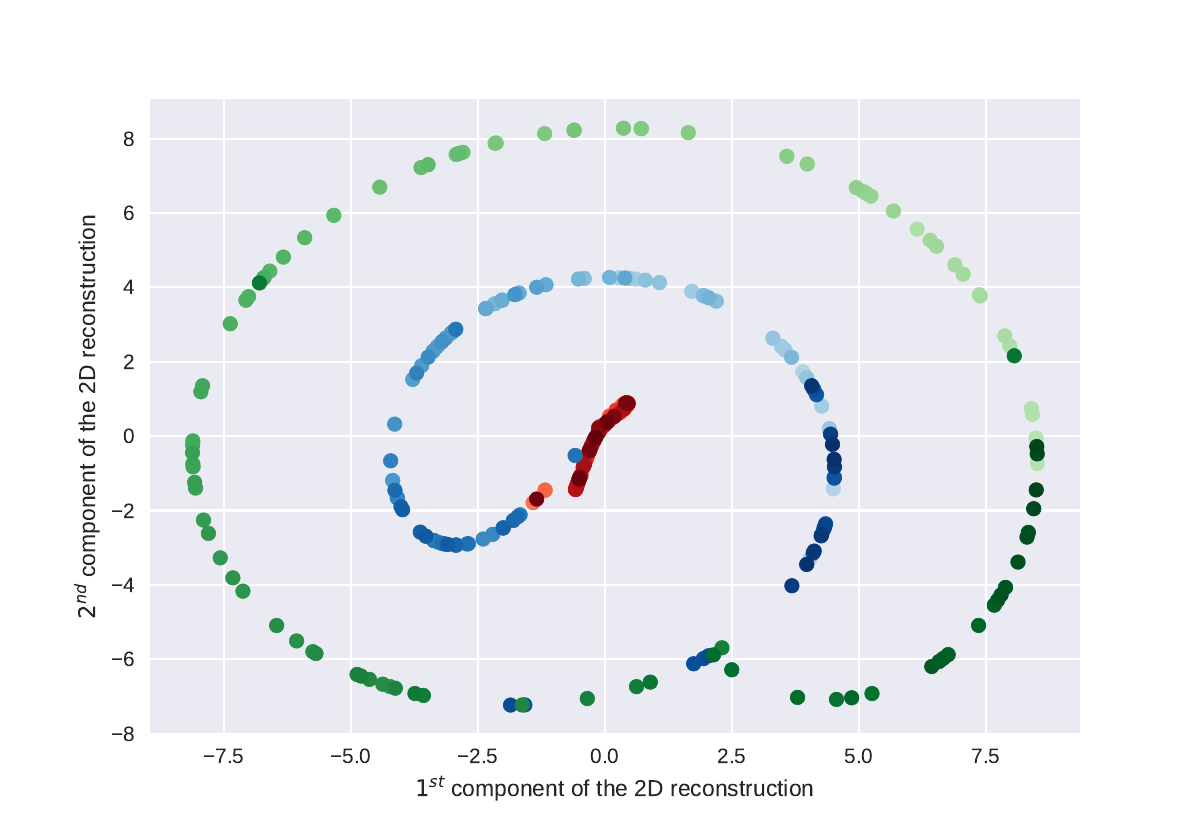}}\\
\subfigure[1D Representation (5 it.)\label{fig:cc_1D_enc0}]{\includegraphics[width=0.325\textwidth]{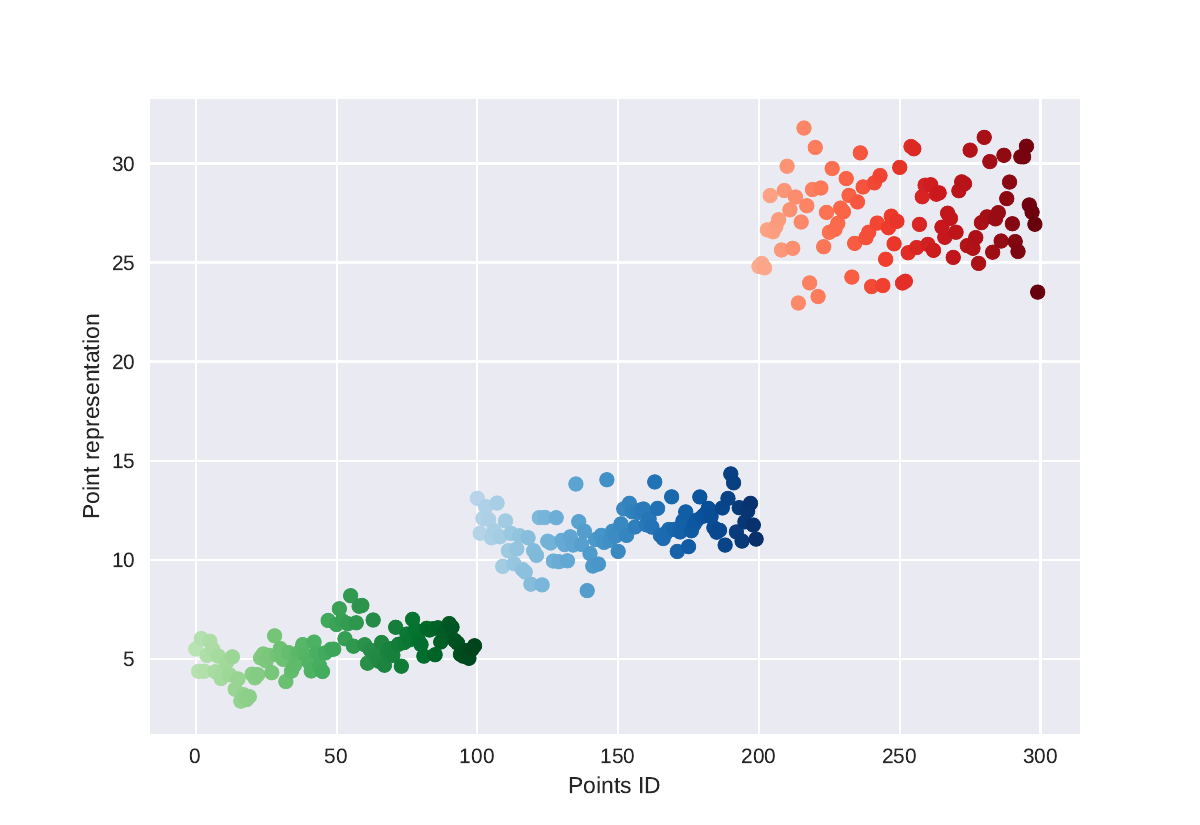}}
\subfigure[1D Representation (20 it.)\label{fig:cc_1D_enc1}]{\includegraphics[width=0.325\textwidth]{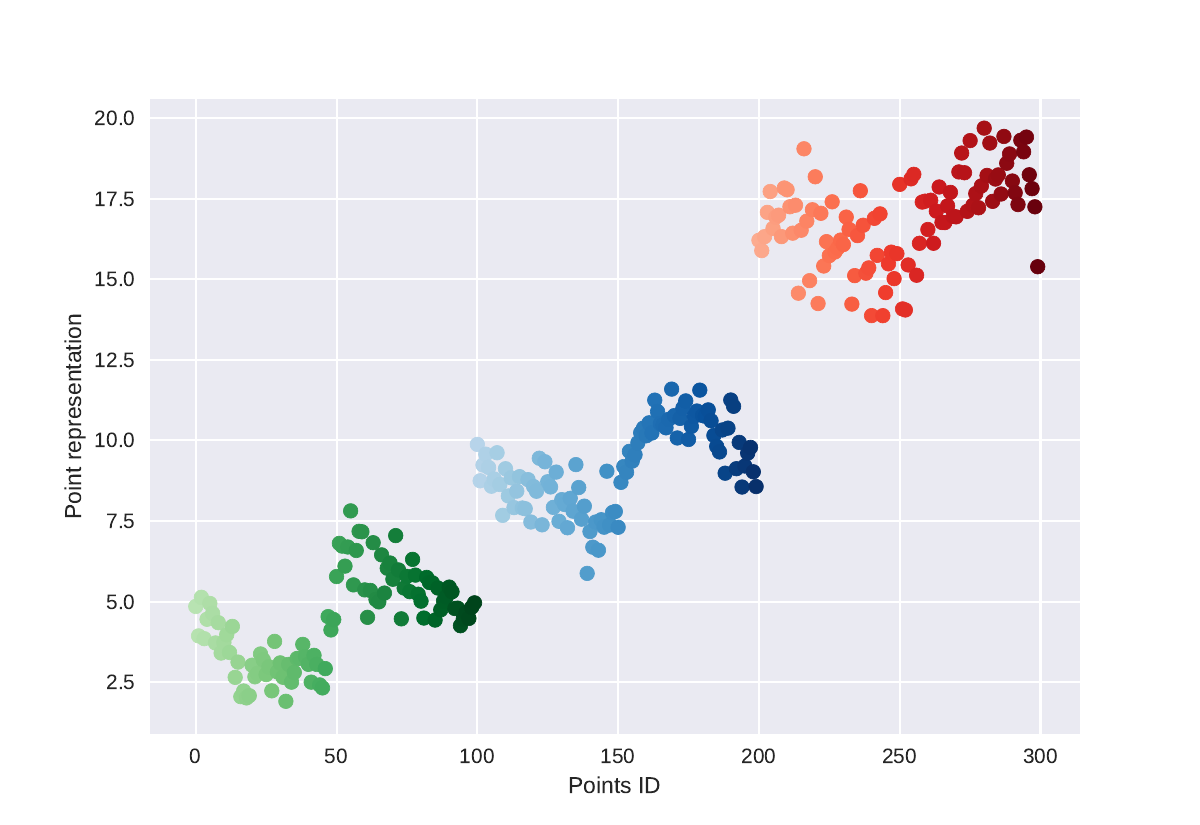}}
\subfigure[2D Re-Representation\label{fig:cc_2D_out}]{\includegraphics[width=0.325\textwidth]{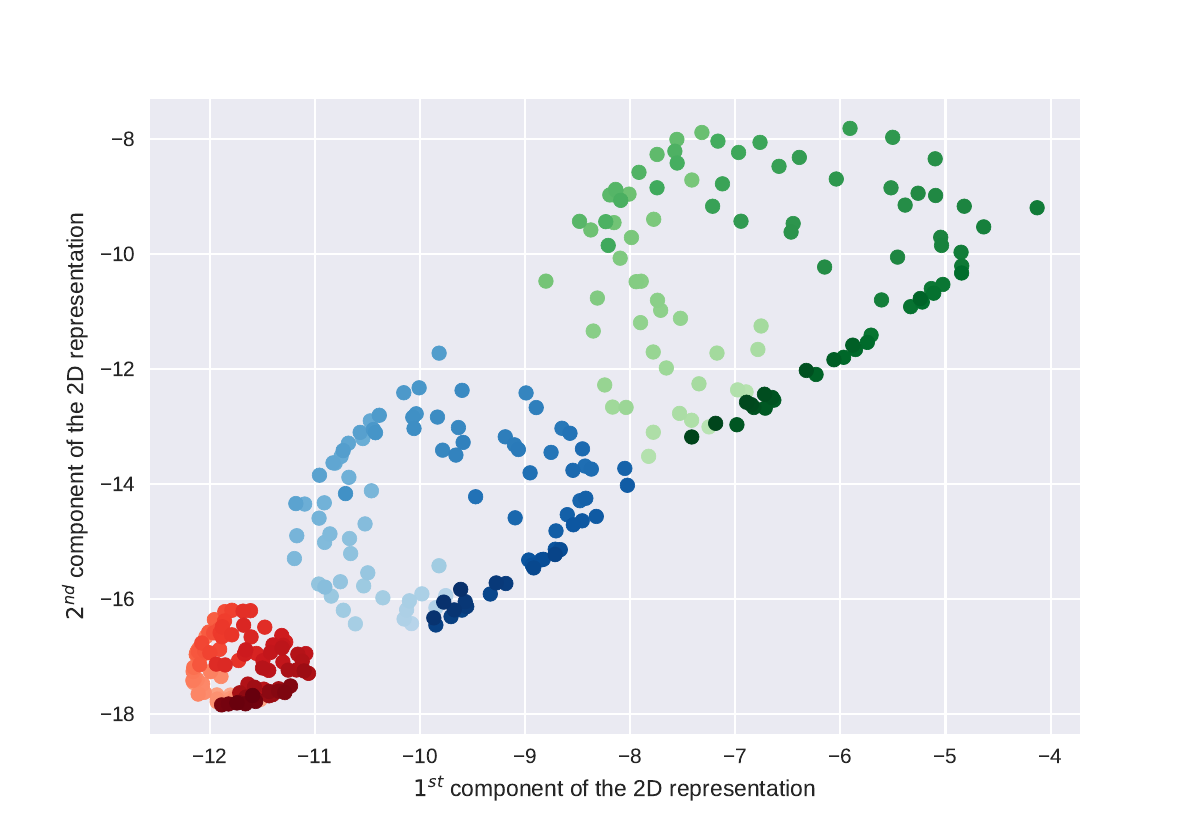}}
\caption{KAE Performance on Noisy Concentric Circles}
\label{fig:conc_circles}
\end{figure*}

\subsection{Behavior on a 2D problem}
Let us first consider three noisy concentric circles such as in \Cref{fig:cc_inp}.
Although the main strength of KAEs is to perform autoencoding on complex data (\Cref{subsec:OKAE}), they can still be applied on real-valued points.
Figures \ref{fig:ae_reco} and \ref{fig:kae_reco} show the reconstructions obtained after fitting respectively a 2-1-2 standard and kernel AE.
Since the latent space is of dimension 1, the 2D reconstructions are manifolds of the same dimension, hence the curve aspect.
What is interesting though is that the KAE learns a much more complex manifold than the standard AE.
Due to its linear limitations (the nonlinear activation functions did not help much in this case), the standard AE returns a line, far from the original data, while the KAE outputs a more complex manifold, much closer to the initial data.

Apart from a good reconstruction, we are interested in finding representations with attractive properties.
The 1D feature found by the previous KAE is interesting, as it is a discriminative one with respect to the original clusters: points from different circles are mapped around different values (\Cref{fig:cc_1D_enc0}).
Interestingly, after a few iterations, some variability is introduced around these \textit{cluster values}, so that all codes shall not be mapped back to the same point (\Cref{fig:cc_1D_enc1}).

Finally, a KAE with 1 hidden layer of size 2 gives the internal representation shown in \Cref{fig:cc_2D_out}.
This new 2D representation has a disentangling effect: the circle structure is kept so as to preserve the intra-cluster specificity, while the inter-cluster differentiation is ensured by the circles' dissociation.
These visual 2D examples give interesting insights on the good properties of the KAE representations: discrimination, disentanglement (see further experiments in \Cref{sec:add_expe}).

\subsection{Representation Learning on Molecules}
\begin{table*}[t]
\vspace{-0.2cm}
\caption{NMSEs on Molecular Activity for Different Types of Cancer}
\label{tab:NCI}
\vskip 0.15in
\begin{center}
\begin{small}
\begin{sc}
\begin{tabular}{cccccc}
\toprule
          & KRR      & KPCA 10 + RF & KPCA 50 + RF & $\text{K}^2$AE 10 + RF & $\text{K}^2$AE 50 + RF\\
\midrule
Cancer 01 & 0.02978  & 0.03279  & 0.03035  & 0.03097  & \textbf{0.02808}\\
Cancer 02 & 0.03004  & 0.03194  & 0.02978  & 0.03099  & \textbf{0.02775}\\
Cancer 03 & 0.02878  & 0.03155  & 0.02914  & 0.02989  & \textbf{0.02709}\\
Cancer 04 & 0.03003  & 0.03274  & 0.03074  & 0.03218  & \textbf{0.02924}\\
Cancer 05 & 0.02954  & 0.03185  & 0.02903  & 0.03065  & \textbf{0.02754}\\
Cancer 06 & 0.02914  & 0.03258  & 0.03083  & 0.03134  & \textbf{0.02838}\\
Cancer 07 & 0.03113  & 0.03468  & 0.03207  & 0.03257  & \textbf{0.03018}\\
Cancer 08 & 0.02899  & 0.03162  & 0.02898  & 0.03065  & \textbf{0.02770}\\
Cancer 09 & 0.02860  & 0.02992  & 0.02804  & 0.02872  & \textbf{0.02627}\\
Cancer 10 & 0.02987  & 0.03291  & 0.03111  & 0.03170  & \textbf{0.02910}\\ \bottomrule
\end{tabular}
\end{sc}
\end{small}
\end{center}
\vskip -0.15in
\end{table*}

We now present an application of KAEs in the context of chemoinformatics.
The motivation is triple.
First, such complex data cannot be handled by standard AEs.
Second, kernel methods being prominent in the field, data are often stored as Gram matrices, suiting perfectly our framework.
Third, finding a compressed representation of a molecule is a problem of highest interest in Drug Discovery.
We considered two different problems, one supervised, one unsupervised.

As for the supervised one, we exploited the dataset of \cite{Su2010} from the NCI-Cancer database: it consists in a Gram matrix comparing 2303 molecules by the mean of a Tanimoto kernel (a linear path kernel built using the presence or absence of sequences of atoms in the molecule), as well as the molecules activities in the presence of 59 types of cancer.
The dataset containing no vectorial representations of the molecules (but only Gram matrices), only kernel methods were possible to benchmark.
As a good representation is supposed to facilitate ulterior learning tasks, we assess the goodness of the representations through the regression scores obtained by Random Forests (RFs) from scikit-learn \citep{scikit-learn} fed with it.

2-layer $\text{K}^{2}$AEs with respectively 5, 10, 25, 50 and 100 internal dimension were run, as well as Kernel Principal Component Analyses (KPCAs) with the same number of components.
Finally, these representations were given as inputs to RFs.
KRR was also added to the comparison.
The Normalized Mean Squared Errors (NMSEs), averaged on 10 runs, for 5 strategies and on the first 10 cancers are stored in \Cref{tab:NCI} (the complete results are available in \Cref{sec:add_nci}).
A visualization with all strategies is also proposed in \Cref{fig:bar_plot}.
Clearly, methods combining a data representation step followed by a prediction one performs better.
But the good performance of our approach should not be attributed to the use of RFs only, since the same strategy run with KPCA leads to worse results.
Indeed, the $\text{K}^2$AE 50 + RF strategy outperforms all other procedures on all problems, managing to extract compact and useful feature vectors from the molecules.

\begin{table}[!ht]
\caption{MSREs on Test Metabolites}
\label{tab:reco}
\vskip 0.15in
\begin{center}
\begin{small}
\begin{sc}
\begin{tabular}{cccc}
\toprule
Dimension & AE (sigmoid) & AE (relu) & KAE \\
\midrule
5   & 99.81 & 96.62 & \textbf{76.38} \\
10  & 87.36 & 84.02 & \textbf{65.76} \\
25  & 72.31 & 68.77 & \textbf{51.63} \\
50  & 63.00 & 58.29 & \textbf{40.72} \\
100 & 55.43 & 48.63 & \textbf{36.27} \\ \bottomrule
\end{tabular}
\end{sc}
\end{small}
\end{center}
\vskip -0.15in
\end{table}

The data for the unsupervised problem is taken from \citet{Brouard-ismb2016}. It is composed of two sets (a train set of size 5579, and a test set of size 1395), each one containing metabolites under the form of 4136-long binary vectors (called fingerprints), as well as a Gram matrix comparing them.
2-layer standard AEs from Keras \citep{chollet2015keras} with sigmoid and relu activation functions, and 2-layer KAEs with internal layer of size 5, 10, 25, 50 and 100, were trained.
In absence of a supervised task, we measured the Mean Squared Reconstruction Errors (MSREs) induced on the test set, and stored them in \Cref{tab:reco}.
Again, the KAE approach shows a systematic improvement.

\section{CONCLUSION}\label{sec:conclusion}
We introduce a new framework for AEs, based on vv-RKHSs and OVKs.
The use of RKHS functions enables KAEs to handle data from possibly infinite dimensional Hilbert spaces, and then to extend the autoencoding scheme to any kind of data.
A generalization bound and a strong connection to KPCA are established, while
the underlying optimization problem is tackled by a Representer Theorem and the kernel trick.
Beyond a detailed description, the behavior of the algorithm is carefully studied on simulated data, and yields relevant performances on graph data, that standard AEs are typically unable to handle.
Further research may consider a semi-supervised approach, that would ideally tailor the representation according to the future targeted.

% \clearpage

\paragraph{Acknowledgment} This work has been funded by the Industrial Chair \emph{Machine Learning for Big Data} from T\'el\'ecom ParisTech, Paris, France.
\bibliography{ref}

\appendix
\onecolumn
\section{TECHNICAL PROOFS}

%%%%%%%%%%%%%%%%%%%%%%%%%%%
%  EQUIVALENCE THM PROOF  %
%%%%%%%%%%%%%%%%%%%%%%%%%%%

\subsection{Proof of \Cref{thm:reformulation}}\label{sec:Eq_pbm_proof}
Let $(\lambda_1,\; \ldots,\; \lambda_L) \in \mathbb{R}_+^L$ and $(f_1^*,\; \ldots,\; f_L^*)$ a solution to problem \eqref{eq:penalized_pb}.
Let $s_l = \|f_l^*\|_{\mathcal{H}_l}^2~\forall~ l \in \llbracket L \rrbracket$.
We shall prove that $(f_1^*,\; \ldots,\; f_L^*)$ is also a solution to problem \eqref{eq:constrained_pb} for this choice of $(s_1, \ldots, s_L)$.
Consider $(f_1,\; \ldots,\; f_L)$ satisfying problem \eqref{eq:constrained_pb}'s constraints.
$\forall~l \in \llbracket L \rrbracket,~\|f_l\|_{\mathcal{H}_l}^2 \leq s_l = \|f_l^*\|_{\mathcal{H}_l}^2$. Hence, we have $\sum_{l=1}^L \lambda_l \|f_l\|_{\mathcal{H}_l}^2 \leq \sum_{l=1}^L \lambda_l \|f_l^*\|_{\mathcal{H}_l}^2$.
On the other hand, by definition of the $f_l^*$'s, it holds :
\begin{equation*}
V(f_1, \ldots, f_L) + \sum_{l=1}^L \lambda_l \|f_l\|_{\mathcal{H}_l}^2~\geq~V(f_1^*, \ldots, f_L^*) + \sum_{l=1}^L \lambda_l \|f_l^*\|_{\mathcal{H}_l}^2.
\end{equation*}
Thus, we necessarily have: $V(f_1, \ldots, f_d) ~\geq~V(f_1^*, \ldots, f_d^*)$.

A similar argument can be used for local solutions, details are left to the reader.
\qed

Although this result may appear rather simple, we thought it was worth mentioning as our setting is particularly unfriendly: the objective function $V$ is not assumed to be convex, and the variables $f_l$ are infinite dimensional.
As a consequence, in absence of additional assumptions the converse statement (that solutions to problem \eqref{eq:constrained_pb} are also solutions to problem \eqref{eq:penalized_pb} for a suitable choice of $\lambda_l$'s) is not guaranteed.
The proof indeed rely on the existence of Lagrangian multipliers, which has been shown when the variables are finite dimensional (KKT conditions), or when the objective function is assumed to be convex \citep{bauschke2011convex}, but is not ensured in our case.

%%%%%%%%%%%%%%%%%%%%%%%%%%%%%%%
%  PROOF OF RADEMACHER BOUND  %
%%%%%%%%%%%%%%%%%%%%%%%%%%%%%%%

\subsection{Proof of \Cref{thm:gen_bound}}\label{sec:lemma_proof}

The technical proof is structured as follows.

%%%%%%%%%%%%%%%%%%%%
%  STANDARD BOUND  %
%%%%%%%%%%%%%%%%%%%%

\subsubsection{Standard Rademacher Generalization Bound}

Let loss $\ell$ denote the squared norm on $\mathcal{X}_0$: $\forall x \in \mathcal{X}_0, \ell(x) = \|x\|_{\mathcal{X}_0}^2$.
Notice that, on the set considered, the mapping $\ell$ is $2M$-Lipschitz, and: $\ell(x_i - h(x_i)) - \ell(x_{i'} - h(x_{i'})) \le 4M^2$.
Hence, by applying McDiarmid's inequality, together with standard arguments in the statistical learning literature (symmetrization/randomization tricks, see \textit{e.g.} Theorem 3.1 in \citet{mohri2012foundations}), one may show that, for any $\delta\in (0,1)$, we have with probability at least $1 - \delta$:
\begin{equation}\label{eq:Rade_1}
\frac{1}{2} \left(\epsilon(\hat{h}_n) - \epsilon^*\right) \le \sup_{h \in \mathcal{H}_{s, t}}| \epsilon(h) - \hat{\epsilon}_n(h) | \le 2 \widehat{\mathscr{R}}_n\Big(\big(\ell \circ (\text{id} - \mathcal{H}_{s, t})\big)(S)\Big) + 12M^2 \sqrt{\frac{\ln \frac{2}{\delta}}{2n}}.
\end{equation}

The subsequent results shall provide tools to bound the quantity $\widehat{\mathscr{R}}_n\Big(\big(\ell \circ (\text{id} - \mathcal{H}_{s, t})\big)(S)\Big)$ properly.

%%%%%%%%%%%%%%%%%%%%%%%%%%%%%%%%%%%%%%
%  OPERATIONS ON RADEMACHER AVERAGE  %
%%%%%%%%%%%%%%%%%%%%%%%%%%%%%%%%%%%%%%

\subsubsection{Operations on the Rademacher Average}

As a first go, we state a preliminary lemma that establishes a comparison between Rademacher and Gaussian averages.
\begin{lemma}\label{lem:comp}
We have: $\forall n\geq 1$,
\begin{equation*}\label{eq:comparison_RG}
\widehat{\mathscr{R}}_n(\mathcal{C}(S)) \le \sqrt{\frac{\pi}{2}}~\widehat{\mathscr{G}}_n(\mathcal{C}(S)).
\end{equation*}
\end{lemma}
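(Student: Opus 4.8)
The plan is to reduce the Hilbert-valued statement to the finite-dimensional (indeed scalar) comparison between Rademacher and Gaussian complexities, which is classical, and then pass to the limit. First I would fix the sample $S$ and the countable orthonormal basis $(e_k)_{k\in\mathbb N}$ of $H$, and rewrite both averages as suprema of (generalized) linear functionals indexed by the same real-valued array $\theta = (\theta_{i,k})$, where $\theta_{i,k} = \langle h(x_i), e_k\rangle_H$ for $h \in \mathcal C$. Concretely, the set $T := \{(\langle h(x_i),e_k\rangle_H)_{i\in\llbracket n\rrbracket,\,k\in\mathbb N} : h \in \mathcal C\} \subset \ell^2$, and
\begin{equation*}
\widehat{\mathscr R}_n(\mathcal C(S)) = \frac1n\,\mathbb E_{\bm\sigma}\Big[\sup_{\theta\in T}\sum_{i,k}\sigma_{i,k}\theta_{i,k}\Big],\qquad
\widehat{\mathscr G}_n(\mathcal C(S)) = \frac1n\,\mathbb E_{\bm g}\Big[\sup_{\theta\in T}\sum_{i,k}g_{i,k}\theta_{i,k}\Big],
\end{equation*}
with $\sigma_{i,k}$ i.i.d.\ Rademacher and $g_{i,k}$ i.i.d.\ standard Gaussian. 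So the claim becomes the statement that for any (bounded) index set $T\subset\ell^2$, the Rademacher process sup is at most $\sqrt{\pi/2}$ times the Gaussian process sup.

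Next I would invoke the standard one-dimensional fact: if $g$ is standard Gaussian then $\mathbb E|g| = \sqrt{2/\pi}$, so $\sigma \overset{d}{=} \mathbb E[\,\mathrm{sign}(g)\,\sqrt{\pi/2}\,|g|\,\mathrm{sign}(g)\,]$ — more precisely, writing each $g_{i,k} = \epsilon_{i,k}|g_{i,k}|$ with $\epsilon_{i,k} := \mathrm{sign}(g_{i,k})$ a Rademacher variable independent of $|g_{i,k}|$, we have $\mathbb E[\,|g_{i,k}|\,] = \sqrt{2/\pi}$. Then, conditioning on the signs $\bm\epsilon$ and using Jensen's inequality (the supremum of a sum is convex in the coefficient multiplying each $\theta_{i,k}$, and $|g_{i,k}|$ has mean $\sqrt{2/\pi}$),
\begin{equation*}
\mathbb E_{\bm g}\Big[\sup_{\theta\in T}\sum_{i,k}g_{i,k}\theta_{i,k}\Big]
= \mathbb E_{\bm\epsilon}\,\mathbb E_{|\bm g|}\Big[\sup_{\theta\in T}\sum_{i,k}\epsilon_{i,k}|g_{i,k}|\,\theta_{i,k}\Big]
\ge \mathbb E_{\bm\epsilon}\Big[\sup_{\theta\in T}\sum_{i,k}\epsilon_{i,k}\,\mathbb E|g_{i,k}|\,\theta_{i,k}\Big]
= \sqrt{\tfrac{2}{\pi}}\;\mathbb E_{\bm\epsilon}\Big[\sup_{\theta\in T}\sum_{i,k}\epsilon_{i,k}\theta_{i,k}\Big].
\end{equation*}
Since the $\epsilon_{i,k}$ are i.i.d.\ Rademacher, the rightmost expectation equals $n\,\widehat{\mathscr R}_n(\mathcal C(S))$, and the leftmost equals $n\,\widehat{\mathscr G}_n(\mathcal C(S))$, giving $\widehat{\mathscr R}_n(\mathcal C(S)) \le \sqrt{\pi/2}\,\widehat{\mathscr G}_n(\mathcal C(S))$.

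The main obstacle is the bookkeeping for the infinite-dimensional case: one must justify that the doubly-indexed sums $\sum_{i,k}$ converge (a.s.\ and in the relevant $L^1$ sense) and that the exchange of $\mathbb E_{\bm\epsilon}$ with the (possibly countable) supremum and the conditioning on signs are all legitimate. I would handle this by first proving the inequality with $\sum_{k=1}^{N}$ in place of $\sum_{k=1}^{\infty}$ (finite-dimensional, no convergence issue, Jensen applies directly), and then letting $N\to\infty$ by monotone/dominated convergence, using that $\sup_{h\in\mathcal C}\|h(x_i)\|_H<\infty$ on the function classes of interest (which holds in our application since the relevant classes $\mathcal H_{1,s}$, $\mathcal H_{2,t}$ are norm-bounded and the OVKs have bounded trace by Assumptions~\ref{hyp:trace_1}--\ref{hyp:trace_2}). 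Everything else is the routine scalar symmetrization computation above.
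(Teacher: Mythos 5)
Your argument is correct and is essentially the paper's own: the proof given there likewise decomposes each Gaussian as $\gamma_{i,k}=\sigma_{i,k}\lvert\gamma_{i,k}\rvert$, conditions on the signs, and applies Jensen's inequality with $\mathbb{E}\lvert\gamma_{i,k}\rvert=\sqrt{2/\pi}$ (cf.\ Lemma~4.5 in \citet{ledoux1991probability}). Your additional truncation step to handle the countable sum is a sound way to make rigorous what the paper leaves implicit.
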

\begin{proof}
The proof is based on the fact that $\gamma_{i, k}$ and $\sigma_{i, k} \left|\gamma_{i, k}\right|$ have the same distribution, combined with Jensen's inequality.
See also Lemma 4.5 in \citet{ledoux1991probability}.
\end{proof}

Hence, the application of the lemma above yields:
\begin{align}
\widehat{\mathscr{R}}_n\Big(\big(\ell \circ (\text{id} - \mathcal{H}_{s, t})\big)(S)\Big) &\le 2\sqrt{2}M~\widehat{\mathscr{R}}_n\Big(\big(\text{id} - \mathcal{H}_{s, t}\big)(S)\Big),\label{eq:foo1}\\
&\le 2\sqrt{2}M~\left[\widehat{\mathscr{R}}_n\Big(\{\text{id}\}(S)\Big) + \widehat{\mathscr{R}}_n\Big(\mathcal{H}_{s, t}(S)\Big)\right],\label{eq:foo2}\\
&\le 2\sqrt{2}M~\widehat{\mathscr{R}}_n\Big(\mathcal{H}_{s, t}(S)\Big),\label{eq:foo3}\\
\widehat{\mathscr{R}}_n\Big(\big(\ell \circ (\text{id} - \mathcal{H}_{s, t})\big)(S)\Big) &\le 2\sqrt{\pi}M~\widehat{\mathscr{G}}_n\Big(\mathcal{H}_{s, t}(S)\Big),\label{eq:foo4}
\end{align}

where \eqref{eq:foo1} directly results from Corollary 4 in \citet{maurer2016vector} (observing that, even if they do not take their values in $\ell_2(\mathbb{N})$ but in the separable Hilbert space $\mathcal{X}_0$, the functions $h(x)$ can replaced by the square-summable sequence $\left(\left\langle h(x), e_k\right\rangle\right)_{k \in \mathbb{N}}$) and
\eqref{eq:foo4} is a consequence of \Cref{lem:comp}.

It now remains to bound
$\widehat{\mathscr{G}}_n\Big(\mathcal{H}_{s, t}(S)\Big)$ using an extension of a result established in \citet{maurer2014chain} and applying to classes of functions valued in $\mathbb{R}^m$ only, while functions in $\mathcal{H}_{s, t}$ are Hilbert-valued.

%%%%%%%%%%%%%%%%%%%%%%%%%%%%%%%%%%%%%%
%  Extension of Maurer's Chain Rule  %
%%%%%%%%%%%%%%%%%%%%%%%%%%%%%%%%%%%%%%

\subsubsection{Extension of Maurer's Chain Rule}

The result stated below extends Theorem 2 in \citet{maurer2014chain} to the Hilbert-valued situation.

\begin{theorem}\label{thm:TIS}
Let $H$ be a Hilbert space, $X$ a $H$-valued Gaussian random vector, and $f: H \rightarrow \mathbb{R}$ a $L$-Lipschitz mapping.
We have:
\begin{equation*}
\forall t > 0, \qquad \mathbb{P}\Big(\left|f(X) - \mathbb{E}f(X)\right| > t\Big) \le \exp\left(- \frac{2 t^2}{\pi^2 L^2}\right).
\end{equation*}
\end{theorem}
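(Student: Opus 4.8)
The statement is a concentration inequality for Lipschitz functions of a Gaussian vector in a Hilbert space, which is exactly the Tsirelson–Ibragimov–Sudakov (TIS) / Borell inequality in its general form. The strategy is to reduce the Hilbert-valued case to the well-known finite-dimensional Gaussian concentration inequality by a finite-dimensional approximation argument, then pass to the limit.

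\medskip

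First I would record the finite-dimensional version: if $Z$ is a standard Gaussian vector in $\mathbb{R}^m$ and $\varphi: \mathbb{R}^m \to \mathbb{R}$ is $L$-Lipschitz, then $\mathbb{P}(|\varphi(Z) - \mathbb{E}\varphi(Z)| > t) \le \exp(-t^2/(2L^2))$ (the classical Gaussian concentration / Cirel'son–Sudakov bound); note this is actually sharper than the $\pi^2$-constant claimed, so proving the weaker stated bound suffices. Next, fix an orthonormal basis $(e_k)_{k \ge 1}$ of $H$ and let $P_m$ denote the orthogonal projection onto $\mathrm{span}(e_1, \ldots, e_m)$. Since $X$ is an $H$-valued Gaussian vector, $P_m X$ is (after identifying the range of $P_m$ with $\mathbb{R}^m$) an $\mathbb{R}^m$-valued Gaussian vector, and the map $f \circ P_m$ restricted to that subspace is $L$-Lipschitz (composition with a projection does not increase the Lipschitz constant). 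Hence the finite-dimensional bound applies to $f(P_m X)$ with the same constant $L$, uniformly in $m$.

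\medskip

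Then I would pass to the limit $m \to \infty$. By the martingale convergence theorem (or simply by $\|P_m X - X\|_H \to 0$ a.s. for an $H$-valued Gaussian, together with $L$-Lipschitzness of $f$), $f(P_m X) \to f(X)$ almost surely; one also checks $\mathbb{E} f(P_m X) \to \mathbb{E} f(X)$, e.g.\ via uniform integrability coming from the uniform sub-Gaussian tail bound just established. Therefore $f(P_m X) - \mathbb{E} f(P_m X) \to f(X) - \mathbb{E} f(X)$ a.s., and the tail bounds pass to the limit: for every continuity point $t$ of the limiting law,
\begin{equation*}
\mathbb{P}\big(|f(X) - \mathbb{E} f(X)| > t\big) \le \liminf_{m \to \infty} \mathbb{P}\big(|f(P_m X) - \mathbb{E} f(P_m X)| > t\big) \le \exp\!\left(-\frac{t^2}{2L^2}\right),
\end{equation*}
and then for all $t > 0$ by right-continuity of both sides, which in particular gives the claimed (weaker) bound with $2/\pi^2$ in place of $1/2$.

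\medskip

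The main obstacle is not any single hard inequality — the finite-dimensional Gaussian concentration inequality is standard — but rather the measure-theoretic bookkeeping needed to make the infinite-dimensional setup rigorous: carefully defining what ``$H$-valued Gaussian random vector'' means (a Borel random element whose finite-dimensional marginals are Gaussian, with well-defined mean and covariance operator), justifying that $P_m X$ has the stated finite-dimensional Gaussian law, controlling the convergence of the truncated means, and ensuring the Lipschitz bound on $f$ transfers through the projections. An alternative, cleaner route that sidesteps some of this is to invoke the general Borell–TIS inequality directly (as in Ledoux–Talagrand), citing it as a known result; but since the paper presents this as a self-contained extension of Maurer's chain rule, the projection-and-limit argument above is the natural in-house proof.
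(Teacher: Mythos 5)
Your argument is sound and reaches the stated conclusion, but it takes a genuinely different route from the paper's. The paper's proof is essentially a one-line observation: Corollary 2.3 of \citet{pisier1986probabilistic} already establishes this exact inequality, with this exact constant, for $H=\mathbb{R}^N$, and its proof (the Maurey--Pisier interpolation argument along a Gaussian path) nowhere uses the finite dimensionality of $H$, so it applies verbatim to any Hilbert space. You instead keep the finite-dimensional statement as a black box and bootstrap it to infinite dimensions by projecting onto $\mathrm{span}(e_1,\dots,e_m)$, noting that $f\circ P_m$ is still $L$-Lipschitz, and passing to the limit via $P_mX\to X$ and uniform integrability. Both are legitimate: yours costs the approximation bookkeeping you identify (and tacitly assumes $X$ is a genuine Radon $H$-valued Gaussian with trace-class covariance, so that $\|P_mX-X\|_H\to 0$ almost surely), while the paper's costs nothing beyond re-reading Pisier's proof.

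One point to repair if you write your version out in full: the classical two-sided Gaussian concentration inequality carries a prefactor, namely $\mathbb{P}\big(|\varphi(Z)-\mathbb{E}\varphi(Z)|>t\big)\le 2\exp\big(-t^2/(2L^2)\big)$, and the inequality $2\exp\big(-t^2/(2L^2)\big)\le\exp\big(-2t^2/(\pi^2L^2)\big)$ fails for $t/L\lesssim 1.53$, so the ``sharper'' exponent does not actually yield the stated bound for small $t$ once the prefactor is tracked. The clean fix is to take as your finite-dimensional input exactly the inequality being proven---Pisier's form with constant $2/\pi^2$ and no prefactor---so that the projection-and-limit step preserves it verbatim; alternatively, note that the downstream use in \Cref{thm:chain_rule} only requires sub-Gaussian tails up to universal constants, which your bound does provide.
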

\begin{proof}
It is a direct extension of Corollary 2.3 in \citet{pisier1986probabilistic}, which states the result for $H = \mathbb{R}^N$ only, observing that the proof given therein actually makes no use of the assumption of finite dimensionality of $H$, and thus remains valid in our case.
%
%The reason why authors did not establish this general version in their work is probably because they only needed the $\mathbb{R}^N$ version for their purpose.
%
Up to constants, it can also be viewed an extension of Theorem 4 in \citet{maurer2014chain}.
\end{proof}

We now introduce quantities involved in the rest of the analysis, see Definition 1 in \citet{maurer2014chain}.

\begin{definition}
Let $Y \subset \mathbb{R}^n$, $H$ be a Hilbert space, $Z \subset H$, and $\gamma$ be a $H$-valued standard Gaussian variable/process. We set:
\begin{equation*}
D(Y) = \sup_{y, y' \in Y} \|y - y'\|_{\mathbb{R}^n},
\end{equation*}
\begin{equation*}
G(Z) = \sup_{z \in Z} \mathbb{E}_\gamma \left[ \left\langle \gamma, z\right\rangle_H \right].
\end{equation*}
If $\mathcal{H}$ a class of functions from $Y$ to $H$, we set:
\begin{equation*}
L(\mathcal{H}, Y) = \sup_{h \in \mathcal{H}}~\sup_{y, y' \in Y,~y \ne y'} \frac{\|h(y) - h(y')\|_H}{\|y - y'\|_{\mathbb{R}^n}},
\end{equation*}
\begin{equation*}
R(\mathcal{H}, Y) = \sup_{y, y' \in Y,~y \ne y'} \mathbb{E}_{\bm{\gamma}} \left[ \sup_{h \in \mathcal{H}} \frac{\left\langle \bm{\gamma}, h(y) - h(y')\right\rangle_H}{\|y - y'\|_{\mathbb{R}^n}}\right].
\end{equation*}
\end{definition}
\bigskip

The next result establishes useful relationships between the quantities introduced above.

\begin{theorem}\label{thm:chain_rule}
Let $Y \subset \mathbb{R}^n$ be a finite set, $H$ a Hilbert space and $\mathcal{H}$ a finite class of functions $h: Y \rightarrow H$.
Then, there are universal constants $C_1$ and $C_2$ such that, for any $y_0 \in Y$:
\begin{equation*}
G(\mathcal{H}(Y)) \le C_1 L(\mathcal{H}, Y)G(Y) + C_2R(\mathcal{H}, Y)D(Y) + G(\mathcal{H}(y_0)).
\end{equation*}
\end{theorem}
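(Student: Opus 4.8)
The plan is to retrace the proof of Theorem~2 in \citet{maurer2014chain} step by step, checking that the single place where the finite-dimensionality of the target space is used can be replaced by the Hilbert-valued Gaussian concentration inequality of \Cref{thm:TIS}. Everything that concerns the index set $Y$ stays in $\mathbb{R}^n$ and is therefore untouched; everything that concerns the pairing $\langle\bm{\gamma},\cdot\rangle_H$ between the $H$-valued Gaussian $\bm{\gamma}$ and the increments $h(y)-h(y')\in H$ either depends only on the norm $\|\cdot\|_H$ (hence is identical to the Euclidean case) or is a Lipschitz statistic of $\bm{\gamma}$ controlled by \Cref{thm:TIS}.

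First I would set up the classical chaining over $Y$. Since $Y\subset\mathbb{R}^n$ is finite, put $\rho = D(Y)$ and, for each integer $j\ge 0$, fix a subset $Y_j\subset Y$ that is a $\rho\,2^{-j}$-net of $Y$, with $Y_0=\{y_0\}$ and $Y_j=Y$ for $j$ large; let $\pi_j\colon Y\to Y_j$ be a nearest-point projection, so $\pi_0\equiv y_0$. Then for every $h\in\mathcal{H}$ and $y\in Y$ one has the finite telescoping identity $h(y)=h(y_0)+\sum_{j\ge 1}\big(h(\pi_j y)-h(\pi_{j-1}y)\big)$ in $H$. Pairing with $\bm{\gamma}$, taking the supremum over $(h,y)$ and then the expectation, and using subadditivity of the supremum across scales, yields
\[
G(\mathcal{H}(Y)) \;\le\; G(\mathcal{H}(y_0)) \;+\; \sum_{j\ge 1}\mathbb{E}_{\bm{\gamma}}\Big[\sup_{h\in\mathcal{H},\,y\in Y}\big\langle\bm{\gamma},\,h(\pi_j y)-h(\pi_{j-1}y)\big\rangle_H\Big].
\]

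Next I would control each scale. At level $j$ there are at most $|Y_j|^2$ distinct pairs $(\pi_j y,\pi_{j-1}y)$, and $\|h(\pi_j y)-h(\pi_{j-1}y)\|_H \le L(\mathcal{H},Y)\,\|\pi_j y-\pi_{j-1}y\|_{\mathbb{R}^n}\le 3\rho\,L(\mathcal{H},Y)\,2^{-j}$. The statistic $\Psi_j(\bm{\gamma})\coloneqq \sup_{h,y}\langle\bm{\gamma},\,h(\pi_j y)-h(\pi_{j-1}y)\rangle_H/\|\pi_j y-\pi_{j-1}y\|_{\mathbb{R}^n}$, being a supremum of finitely many functionals $v\mapsto\langle\cdot,v\rangle_H$ with $\|v\|_H\le L(\mathcal{H},Y)$, is $L(\mathcal{H},Y)$-Lipschitz as a map $H\to\mathbb{R}$, and by construction $\mathbb{E}_{\bm{\gamma}}\Psi_j\le R(\mathcal{H},Y)$. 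Here \Cref{thm:TIS} takes over the role of the finite-dimensional Gaussian concentration inequality and gives $\mathbb{P}\big(\Psi_j>\mathbb{E}_{\bm{\gamma}}\Psi_j+u\big)\le\exp\!\big(-2u^2/(\pi^2 L(\mathcal{H},Y)^2)\big)$. Combining (i) the deterministic scale factor $3\rho\,L(\mathcal{H},Y)2^{-j}$, (ii) a union bound over the $\le|Y_j|^2$ pairs, each equipped with the one-dimensional sub-Gaussian tail $\mathbb{P}(\langle\bm{\gamma},v\rangle_H>t)\le e^{-t^2/2\|v\|_H^2}$ (identical to the Euclidean case since it depends on $v$ only through $\|v\|_H$), and (iii) the concentration of $\Psi_j$ about $R(\mathcal{H},Y)$, one bounds $\mathbb{E}_{\bm{\gamma}}\sup_{h,y}\langle\bm{\gamma},h(\pi_j y)-h(\pi_{j-1}y)\rangle_H$ exactly as in \citet{maurer2014chain}. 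Summing the resulting geometric series in $j$ splits into an entropy part, which a Sudakov-type comparison $\sum_j\rho\,2^{-j}\sqrt{\log|Y_j|}$ bounds by a universal multiple of $G(Y)$ and hence contributes the term $C_1 L(\mathcal{H},Y)G(Y)$, and a concentration part that contributes $C_2 R(\mathcal{H},Y)D(Y)$; adding the leftover $G(\mathcal{H}(y_0))$ finishes the bound with universal constants $C_1,C_2$.

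The main obstacle is bookkeeping rather than a new idea: one must check that every estimate in Maurer's argument that nominally lives in $\mathbb{R}^m$ is insensitive to the target dimension. The two potentially dimension-dependent ingredients are the per-coordinate Gaussian tails, which only see $\|v\|_H$ and are therefore unchanged, and the concentration of the Lipschitz functionals $\Psi_j$ of $\bm{\gamma}$, which is now supplied verbatim by \Cref{thm:TIS}. A secondary point to be careful with is the Lipschitz-constant computation for $\Psi_j$ (and for the analogous base-scale supremum): the normalization by $\|\pi_j y-\pi_{j-1}y\|_{\mathbb{R}^n}$ must be handled so that the Lipschitz constant with respect to $\bm{\gamma}\in H$ is no larger than $L(\mathcal{H},Y)$. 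Once this is in place, finiteness of $\mathcal{H}$ and of $Y$ renders all suprema honest maxima and all union bounds finite, and the proof closes.
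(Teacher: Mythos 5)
Your top-level strategy is exactly the paper's: the proof given there consists of observing that the only place Maurer's proof of his Theorem 2 uses finite dimensionality of the target space is his Theorem 4 (Gaussian concentration for Lipschitz functions of a Gaussian vector), and that substituting \Cref{thm:TIS} --- applied to the statistics $X_y = (\sqrt{2}/\pi L)\sup_{f}\langle\gamma,f(y)\rangle$ --- lets the rest of the argument go through verbatim. Your identification of the two potentially dimension-dependent ingredients (one-dimensional tails of $\langle\gamma,v\rangle$, which depend only on $\|v\|_H$, and concentration of the Lipschitz suprema) is correct and matches the paper.

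However, your explicit reconstruction of the chaining contains a step that would fail as written. You chain along uniform $\rho\,2^{-j}$-nets $Y_j$ and claim that the resulting entropy sum $\sum_j \rho\,2^{-j}\sqrt{\log|Y_j|}$ is bounded by a universal multiple of $G(Y)$ via a ``Sudakov-type comparison.'' Sudakov minoration only controls each scale separately, $\rho\,2^{-j}\sqrt{\log N(Y,\rho 2^{-j})} \le C\, G(Y)$; the sum over scales is Dudley's entropy integral, which is \emph{not} bounded by a universal constant times $G(Y)$ (ellipsoids are the standard counterexample, and finiteness of $Y$ does not help, since the constant would then depend on $|Y|$). To land on $C_1 L(\mathcal{H},Y)\,G(Y)$ one must run Talagrand's generic chaining along an admissible sequence of partitions and invoke the majorizing measure theorem $\gamma_2(Y) \le C\,G(Y)$, which is what Maurer's argument actually does. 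A smaller slip: $\mathbb{E}_{\bm{\gamma}}\Psi_j \le R(\mathcal{H},Y)$ is false as stated, because $\Psi_j$ takes the supremum over pairs outside the expectation; the correct use is the per-pair bound by $R(\mathcal{H},Y)$ followed by the concentration-plus-union-bound step you describe, whose $\sqrt{\log |Y_j|^2}$ overhead must then be absorbed into the entropy term. If you defer to Maurer's proof verbatim, as the paper does, the argument closes; if you insist on the uniform-net Dudley chaining, it does not.
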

\begin{proof}
This result is a direct extension of Theorem 2 in \citet{maurer2014chain} for $H$-valued functions.
The only part in the proof depending on the dimensionality of $H$ is Theorem 4 in the same paper, whose extension to any Hilbert space in \Cref{thm:TIS} is proved in the present paper.
Indeed, considering $X_y = (\sqrt{2}/\pi L(F, Y)) \sup_{f \in F} \left\langle \gamma, f(y)\right\rangle$ (using the same notation as in \citet{maurer2014chain} allows to finish the proof like in the finite dimensional case.
\end{proof}

Let $\mathcal{H}_{1, s}'$ be the set of functions from $(\mathcal{X}_0)^n$ to $\mathbb{R}^{np}$ that take as input $S = (x_1, \ldots, x_n)$ and return $(f(x_1), \ldots, f(x_n))$, $f \in \mathcal{H}_{1, s}$.
Let $Y = \mathcal{H}_{1, s}'(S) \subset \mathbb{R}^{np}$, and $H = (\mathcal{X}_0)^n$, which is a Hilbert space.
Let $\mathcal{H} = \mathcal{H}_{2, t}'$ be the set of functions from $\mathbb{R}^{np}$ to $(\mathcal{X}_0)^n$ that take as input $(y_1, \ldots, y_n)$ and return $(g(y_1), \ldots, g(y_n))$, $g \in \mathcal{H}_{2, t}$.
Finally, let $y_0 = (0_{\mathbb{R}^p}, \ldots, 0_{\mathbb{R}^p})$ (it actually belongs to $\mathcal{H}_{1, s}'(S)$ since the null function is in $\mathcal{H}_{1, s}'$).
\Cref{thm:chain_rule} entails that:
\begin{equation}
G\Big(\mathcal{H}_{2, t}'\big(\mathcal{H}_{1, s}'(S)\big)\Big) \le C_1 L\Big(\mathcal{H}_{2, t}', \mathcal{H}_{1, s}'(S)\Big)G\Big(\mathcal{H}_{1, s}'(S)\Big) + C_2R\Big(\mathcal{H}_{2, t}', \mathcal{H}_{1, s}'(S)\Big)D\Big(\mathcal{H}_{1, s}'(S)\Big) + G\Big(\mathcal{H}_{2, t}'(0)\Big),
\end{equation}
and
\begin{equation}\label{eq:chain_rule}
\widehat{\mathscr{G}}_n\Big(\mathcal{H}_{s, t}(S)\Big) \le C_1 L\Big(\mathcal{H}_{2, t}', \mathcal{H}_{1, s}'(S)\Big) \widehat{\mathscr{G}}_n\Big(\mathcal{H}_{1, s}(S)\Big) + \frac{C_2}{n} R\Big(\mathcal{H}_{2, t}', \mathcal{H}_{1, s}'(S)\Big)D\Big(\mathcal{H}_{1, s}'(S)\Big) + \frac{1}{n} G\Big(\mathcal{H}_{2, t}'(0)\Big).
\end{equation}

We now bound each term appearing on the right-hand side.

%%%%%%%%%%%%%%%%%%%%%%%%%%%%%%%%%%%%%
%  Bounding the Lipschitz Constant  %
%%%%%%%%%%%%%%%%%%%%%%%%%%%%%%%%%%%%%

\noindent {\bf Bounding $L\Big(\mathcal{H}_{2, t}', \mathcal{H}_{1, s}'(S)\Big)$.}
Consider the following hypothesis, denoting by $\|. \|_*$ the operator norm of any bounded linear operator.
\begin{assumption}\label{hyp:lip_2}
There exists a constant $L<+\infty$ such that: $\forall (y, y') \in \mathbb{R}^p$,
$$
\big\| \mathcal{K}_2(y, y) - 2 \mathcal{K}_2(y, y') + \mathcal{K}_2(y', y') \big\|_* \le L^2~\|y - y'\|_{\mathbb{R}^p}^2.
$$
\end{assumption}
This assumption is not too much compelling since it is enough for $\mathcal{K}_2$ to be the sum of $M$ decomposable kernels $k_m(\cdot, \cdot) A_m$ such that the scalar feature maps $\phi_m$ are $L_m$-Lipschitz (the feature map of the Gaussian kernel with bandwidth $1 / (2\sigma^2)$ has Lipschitz constant $1 / \sigma$ for instance), and the $A_m$ operators have finite operator norms $\sigma_m$.
Indeed, we would have then: $\forall z \in \mathcal{X}_0$,
\begin{align*}
\left\|\Big(\mathcal{K}_2(y, y) - 2 \mathcal{K}_2(y, y') + \mathcal{K}_2(y', y')\Big)z\right\|_{\mathcal{X}_0} &= \left\| \left(\sum_{m=1}^M \|\phi_m(y) - \phi_m(y')\|^2 A_m\right)z\right\|_{\mathcal{X}_0},\\
&\le \sum_{m=1}^M \|\phi_m(y) - \phi_m(y')\|^2 \sigma_m~\|z\|_{\mathcal{X}_0},\\
\left\|\Big(\mathcal{K}_2(y, y) - 2 \mathcal{K}_2(y, y') + \mathcal{K}_2(y', y')\Big)z\right\|_{\mathcal{X}_0} &\le \left(\sum_{m=1}^M L_m^2 \sigma_m\right) \| y - y'\|_{\mathbb{R}^p}^2~\|z\|_{\mathcal{X}_0},\\
\left\|\mathcal{K}_2(y, y) - 2 \mathcal{K}_2(y, y') + \mathcal{K}_2(y', y')\right\|_* &\le \left(\sum_{m=1}^M L_m^2 \sigma_m\right) \| y - y'\|_{\mathbb{R}^p}^2.
\end{align*}

Let $\mathcal{K}_2$ satisfy \Cref{hyp:lip_2}, $g \in \mathcal{H}_{2, t}'$ and $(\bm{y}, \bm{y'}) \in \mathbb{R}^{np}$. We have
\begin{align}
\left\|g(\bm{y}) - g(\bm{y'})\right\|_{(\mathcal{X}_0)^n}^2 &= \sum_{i=1}^n \left\| g(y_i) - g(y_i')\right\|_{\mathcal{X}_0}^2,\label{eq:bar0}\\
&= \sum_{i=1}^n \left\langle g(y_i) - g(y_i'), g(y_i) - g(y_i')\right\rangle_{\mathcal{X}_0},\label{eq:bar1}\\
&= \sum_{i=1}^n \left\langle {\mathcal{K}_2}_{y_i}(g(y_i) - g(y_i')), g\right\rangle_{\mathcal{H}_2} - \left\langle {\mathcal{K}_2}_{y_i'}(g(y_i) - g(y_i')), g\right\rangle_{\mathcal{H}_2},\label{eq:bar2}\\
&\le \|g\|_{\mathcal{H}_2} \sum_{i=1}^n~\left\|{\mathcal{K}_2}_{y_i}(g(y_i) - g(y_i')) - {\mathcal{K}_2}_{y_i'}(g(y_i) - g(y_i'))\right\|_{\mathcal{H}_2},\label{eq:bar3}\\
&\le t~\sum_{i=1}^n~\sqrt{\left\langle g(y_i) - g(y_i'), \big(\mathcal{K}_2(y_i, y_i) - 2 \mathcal{K}_2(y_i, y_i') + \mathcal{K}_2(y_i', y_i')\big)(g(y_i) - g(y_i'))\right\rangle_{\mathcal{X}_0}},\label{eq:bar4}\\
%
% &\le t~\sum_{i=1}^n~\left\| g(y_i) - g(y_i') \right\|_{\mathcal{X}_0} \sqrt{\left\| \big(\mathcal{K}_2(y_i, y_i) - 2 \mathcal{K}_2(y_i, y_i') + \mathcal{K}_2(y_i', y_i')\big)\right\|_*},\\
%
&\le Lt~\sum_{i=1}^n~\left\| g(y_i) - g(y_i') \right\|_{\mathcal{X}_0} \|y_i - y_i'\|_{\mathbb{R}^p},\label{eq:bar5}\\
%
% &\le Lt~\sqrt{\left(\sum_{i=1}^n~\left\| g(y_i) - g(y_i') \right\|_{\mathcal{X}_0}^2 \right) \left( \sum_{i=1}^n \|y_i - y_i'\|_{\mathbb{R}^p}^2 \right)},\\
%
\left\|g(\bm{y}) - g(\bm{y'})\right\|_{(\mathcal{X}_0)^n}^2 &\le Lt~\left\|g(\bm{y}) - g(\bm{y'})\right\|_{(\mathcal{X}_0)^n} \left\|\bm{y} - \bm{y'}\right\|_{\mathbb{R}^{np}},\label{eq:bar6}\\
\left\|g(\bm{y}) - g(\bm{y'})\right\|_{(\mathcal{X}_0)^n} &\le Lt~ \left\|\bm{y} - \bm{y'}\right\|_{\mathbb{R}^{np}},\label{eq:bar7}
\end{align}
where \eqref{eq:bar2} results from the reproducing property in vv-RKHSs (see Eq. (2.1) in \citet{micchelli2005learning}), \eqref{eq:bar3} follows from Cauchy-Schwarz inequality, \eqref{eq:bar4} is again a consequence of the reproducing property (Eq. (2.3) in \citet{micchelli2005learning}), \eqref{eq:bar5} can be deduced from \Cref{hyp:lip_2} and \eqref{eq:bar6} is a consequence of Cauchy-Schwarz inequality as well.
Hence, we finally have:
\begin{equation}\label{eq:lip}
L\Big(\mathcal{H}_{2, t}', \mathcal{H}_{1, s}'(S)\Big) \le L\Big(\mathcal{H}_{2, t}', \mathbb{R}^{np}\Big) \le Lt.
\end{equation}

%%%%%%%%%%%%%%%%%%%%%%%%%%%%%%%%%%%%%%%%
%  Bounding the Gaussian of a vv-RKHS  %
%%%%%%%%%%%%%%%%%%%%%%%%%%%%%%%%%%%%%%%%

\noindent {\bf Bounding $\widehat{\mathscr{G}}_n\Big(\mathcal{H}_{1, s}'(S)\Big)$.} Consider the assumption below.
\begin{assumption}\label{hyp:trace_1_supp}
There exists a constant $K<+\infty$ such that: $\forall x \in \mathcal{X}_0$,
$$
\text{\normalfont \textbf{Tr}}\Big(\mathcal{K}_1(x, x)\Big) \le Kp.
$$
\end{assumption}
This assumption is mild as well, since the sum of $M$ decomposable kernels $k_m(\cdot, \cdot) A_m$ such that the scalar kernels are bounded by $\kappa_m$ (as $X$ is supposed to be bounded, any continuous kernel is valid).
Indeed, we have: $\forall x \in \mathcal{X}_0$,
\begin{equation*}
\textbf{Tr}\Big(\mathcal{K}_1(x, x)\Big) =  \sum_{m=1}^M k_m(x, x)~\textbf{Tr}(A_m) \le \left(\sum_{m=1}^M \kappa_m \| A_m \|_{\infty}\right) p.
\end{equation*}
Let the OVK $\mathcal{K}_1$ satisfy \Cref{hyp:trace_1_supp} and be such that $\mathcal{H}_1$ is separable.
We then know that there exists $\Phi \in \mathcal{L}(\ell_2(\mathbb{N}), \mathbb{R}^p)$ such that: $\forall (x, x') \in \mathcal{X}_0,~\mathcal{K}_1(x, x') = \Phi(x)\Phi^*(x')$ and $\forall f \in \mathcal{H}_1, \exists u \in \ell_2(\mathbb{N})$ such that $f(\cdot) = \Phi(\cdot)u,~~\|f\|_{\mathcal{H}_1} = \|u\|_{\ell_2}$ (see \citet{micchelli2005learning}).
We have:
\begin{align}
n~\widehat{\mathscr{G}}_n\Big(\mathcal{H}_{1, s}'(S)\Big) &= \mathbb{E}_{\bm{\gamma}}\left[ \sup_{f \in \mathcal{H}_{1, s}} \sum_{i=1}^n \left\langle \bm{\gamma}_i, f(x_i)\right\rangle_{\mathbb{R}^p}\right],\label{eq:Rad_ovk_1}\\
&= \mathbb{E}_{\bm{\gamma}}\left[ \sup_{\|u\|_{\ell_2} \le s} \sum_{i=1}^n \sum_{k=1}^p \gamma_{i, k}, \left\langle \Phi(x_i)u, e_k\right\rangle_{\mathbb{R}^p}\right],\label{eq:Rad_ovk_2}\\
&= \mathbb{E}_{\bm{\gamma}}\left[ \sup_{\|u\|_{\ell_2} \le s} \left\langle u, \sum_{i=1}^n \sum_{k=1}^p \gamma_{i, k} \Phi^*(x_i) e_k\right\rangle_{\ell_2}\right],\label{eq:Rad_ovk_3}\\
&\le s~\mathbb{E}_{\bm{\gamma}}\left[ \left\|\sum_{i=1}^n \sum_{k=1}^p \gamma_{i, k} \Phi^*(x_i) e_k \right\|_{\ell_2} \right],\label{eq:Rad_ovk_4}\\
&\le s~\sqrt{\mathbb{E}_{\bm{\gamma}}\left[ \left\|\sum_{i=1}^n \sum_{k=1}^p \gamma_{i, k} \Phi^*(x_i) e_k \right\|_{\ell_2}^2 \right]},\label{eq:Rad_ovk_5}\\
&\le s~\sqrt{\sum_{i=1}^n \sum_{k=1}^p \left\langle \mathcal{K}(x_i, x_i) e_k, e_k \right\rangle_{\mathbb{R}^p}},\label{eq:Rad_ovk_6}\\
&\le s~\sqrt{\sum_{i=1}^n \textbf{Tr}\Big(\mathcal{K}_1(x_i, x_i)\Big)},\label{eq:Rad_ovk_7}\\
n~\widehat{\mathscr{G}}_n(\mathcal{H}_{1, s}'(S)) &\le s\sqrt{n K p},\label{eq:Rad_ovk_8}
\end{align}
where \eqref{eq:Rad_ovk_4} follows from Cauchy-Schwarz inequality, \eqref{eq:Rad_ovk_5} from Jensen's inequality, \eqref{eq:Rad_ovk_6} results from the orthogonality of the Gaussian variables introduced and \eqref{eq:Rad_ovk_8} from \Cref{hyp:trace_1_supp}.
Finally, we have:
\begin{equation}\label{eq:gauss}
\widehat{\mathscr{G}}_n\Big(\mathcal{H}_{1, s}'(S)\Big) \le s\sqrt{\frac{K p}{n}}.
\end{equation}

%%%%%%%%%%%%%%%%%%%%%%%%%%%%%%%
%  Bounding the Gaussian sup  %
%%%%%%%%%%%%%%%%%%%%%%%%%%%%%%%

\noindent {\bf Bounding $R\Big(\mathcal{H}_{2, t}', \mathcal{H}_{1, s}'(S)\Big)$.}
Consider the following hypothesis.
\begin{assumption}\label{hyp:trace_2_supp}
There exists a constant $L<+\infty$ such that: $\forall (y, y') \in \mathbb{R}^p$,
$$
\text{\normalfont \textbf{Tr}}\Big( \mathcal{K}_2(y, y) - 2 \mathcal{K}_2(y, y') + \mathcal{K}_2(y', y') \Big) \le L^2~\|y - y'\|_{\mathbb{R}^p}^2.
$$
\end{assumption}
Suppose that the OVK $\mathcal{K}_2$ is the sum of $M$ decomposable kernels $k_m(\cdot, \cdot) A_m$ such that the scalar feature maps $\phi_m$ are $L_m$-Lipschitz and the $A_m$ operators are trace class.
Then, we have: $\forall (y, y') \in \mathbb{R}^p$,
\begin{equation*}
\textbf{Tr}\Big(\mathcal{K}_2(y, y) - 2 \mathcal{K}_2(y, y') + \mathcal{K}_2(y', y') \Big) =  \sum_{m=1}^M \|\phi_m(y) - \phi_m(y')\|^2~\textbf{Tr}(A_m) \le \left(\sum_{m=1}^M L_m^2 \textbf{Tr}(A_m) \right) \|y - y'\|_{\mathbb{R}^p}^2.
\end{equation*}
Note also that \Cref{hyp:trace_2_supp} is stronger than \Cref{hyp:lip_2}, since $\|A\|_* \le \textbf{Tr}(A)$ for any trace class operator $A$.\\

Let the OVK $\mathcal{K}_2$ satisfy \Cref{hyp:trace_2_supp} and be such that $\mathcal{H}_2$ is separable.
We then know that there exists $\Psi \in \mathcal{L}(\ell_2(\mathbb{N}), \mathcal{X}_0)$ such that $\forall (y, y') \in \mathbb{R}^p,~\mathcal{K}_2(y, y') = \Psi(y)\Psi^*(y')$ and $\forall g \in \mathcal{H}_2, \exists v \in \ell_2(\mathbb{N})$ such that $g(\cdot) = \Psi(\cdot)v,~~\|g\|_{\mathcal{H}_2} = \|v\|_{\ell_2}$.
We have:
\begin{align}
\mathbb{E}_{\bm{\gamma}}\left[ \sup_{g \in \mathcal{H}_{2, t}} \left\langle \bm{\gamma}_i, g(\bm{y} - g(\bm{y'})\right\rangle_{\mathcal{X}_0^n}\right] &= \mathbb{E}_{\bm{\gamma}}\left[ \sup_{g \in \mathcal{H}_{2, t}} \sum_{i=1}^n \sum_{k=1}^\infty \gamma_{i, k} \left\langle (\Psi(y_i) - \Psi(y'_i))v, e_k\right\rangle_{\mathcal{X}_0}\right],\label{eq:Radd_ovk_1}\\
&= \mathbb{E}_{\bm{\gamma}}\left[ \sup_{g \in \mathcal{H}_{2, t}} \left\langle \sum_{i=1}^n \sum_{k=1}^\infty \gamma_{i, k} (\Psi^*(y_i) - \Psi^*(y'_i))e_k, v\right\rangle_{\ell_2}\right],\label{eq:Radd_ovk_2}\\
&\le t~\sqrt{\mathbb{E}_{\bm{\gamma}} \left\| \sum_{i=1}^n \sum_{k=1}^\infty \gamma_{i, k} (\Psi^*(y_i) - \Psi^*(y'_i))e_k\right\|_{\ell_2}^2},\label{eq:Radd_ovk_3}\\
&\le t~\sqrt{\sum_{i=1}^n \textbf{Tr}\Big(\mathcal{K}_2(y_i, y_i) - 2 \mathcal{K}_2(y_i, y'_i) + \mathcal{K}_2(y_i', y_i')\Big)},\label{eq:Radd_ovk_4}\\
\mathbb{E}_{\bm{\gamma}}\left[ \sup_{g \in \mathcal{H}_{2, t}} \left\langle \bm{\gamma}_i, g(\bm{y} - g(\bm{y'})\right\rangle_{\mathcal{X}_0^n}\right] &\le tL~\|\bm{y} - \bm{y'}\|_{\mathbb{R}^{np}},\label{eq:Radd_ovk_5}
\end{align}
where only \Cref{hyp:trace_2_supp} and arguments previously involved have been used.
Finally, we get:
\begin{equation}\label{eq:rrr}
R\Big(\mathcal{H}_{2, t}', \mathcal{H}_{1, s}'(S)\Big) \le R\Big(\mathcal{H}_{2, t}', \mathbb{R}^{np}\Big) \le tL.
\end{equation}

%%%%%%%%%%%%%%%%%%%%%%%%%%%
%  Bounding the diameter  %
%%%%%%%%%%%%%%%%%%%%%%%%%%%

\noindent {\bf Bounding $D\Big(\mathcal{H}_{1, s}'(S)\Big)$.}
Consider the assumption below.
\begin{assumption}\label{hyp:bound}
There exists $\kappa<+\infty$ such that: $\forall x \in S$,
$$
\left\| \mathcal{K}_1(x, x) \right\|_* \le \kappa^2.
$$
\end{assumption}
This assumption is easily fulfilled, since $X$ is almost surely bounded.
Indeed, any ov-kernel which is the (finite) sum of decomposable kernels with continuous scalar kernels fulfills it.
Note also that it is a weaker assumption than \Cref{hyp:trace_1_supp}, since one could choose $\kappa = \sqrt{Kp}$.\\

Let $\mathcal{K}_1$ satisfy \Cref{hyp:bound} and $(\bm{y}, \bm{y'}) \in \mathcal{H}_{1, s}'(S)$.
There exists $(f, f') \in \mathcal{H}_{1, s}$ such that $\bm{y} = (f(x_1), \ldots, f(x_n))$ and $\bm{y'} = (f'(x_1), \ldots, f'(x_n))$.
We have:
\begin{align}
\left\| \bm{y} - \bm{y'} \right\|_{\mathbb{R}^{np}}^2 &= \sum_{i=1}^n \left\| f(x_i) - f'(x_i) \right\|_{\mathbb{R}^p}^2,\\
& \le \sum_{i=1}^n \left(\left\| f(x_i) \right\|_{\mathbb{R}^p} + \left\| f'(x_i) \right\|_{\mathbb{R}^p} \right)^2,\\
&\le \sum_{i=1}^n \left( \left\| f \right\|_{\mathcal{H}_1} \left\| \mathcal{K}_1(x_i, x_i) \right\|_*^{1/2} + \left\| f' \right\|_{\mathcal{H}_1} \left\| \mathcal{K}_1(x_i, x_i) \right\|_*^{1/2} \right)^2,\label{eq:ovk_bound}\\
\left\| \bm{y} - \bm{y'} \right\|_{\mathbb{R}^{np}}^2 & \le 4 \kappa^2 s^2 n,
\end{align}
where \eqref{eq:ovk_bound} follows from Eq. (f) of Proposition 2.1 in \citet{micchelli2005learning}.
Finally, we get:
\begin{equation}\label{eq:diam}
D\Big(\mathcal{H}_{1, s}', S\Big) \le 2 \kappa s \sqrt{n}.
\end{equation}

%%%%%%%%%%%%%%%%%%%%%%%%%%%%%%%%
%  Bounding the last Gaussian  %
%%%%%%%%%%%%%%%%%%%%%%%%%%%%%%%%

\noindent {\bf Bounding $G\Big(\mathcal{H}_{2, t}'(0)\Big)$.}
We introduce the following assumption.
\begin{assumption}\label{hyp:trace_0}
$\mathcal{K}_2(0, 0)$ is trace class.
\end{assumption}
Then, using the same arguments as for \eqref{eq:Rad_ovk_7}, we get:
\begin{equation}
n~G\Big(\mathcal{H}_{2, t}'(0)\Big) \le t \sqrt{n~\textbf{Tr}\Big(\mathcal{K}_2(0, 0)\Big)}, \text{\qquad or \qquad} G\Big(\mathcal{H}_{2, t}'(0)\Big) \le t\sqrt{\frac{\textbf{Tr}\big(\mathcal{K}_2(0, 0)\big)}{n}}.
\end{equation}

Rather than shifting the kernel $\widetilde{\mathcal{K}}_2(y, y') = \mathcal{K}_2(y, y') - \mathcal{K}_2(0, 0)$, one could consider that \Cref{hyp:trace_0} is always satisfied.
In addition, we have $\textbf{Tr}\Big(\widetilde{\mathcal{K}}_2(0, 0)\Big) = 0$ and consequently $G\Big(\mathcal{H}_{2, t}'(0)\Big) \le 0$.

%%%%%%%%%%%%%%%%%
%  Final Bound  %
%%%%%%%%%%%%%%%%%

\subsubsection{Final Argument}

Now, combining inequalities \eqref{eq:Rade_1}, \eqref{eq:foo4}, \eqref{eq:chain_rule}, \eqref{eq:lip}, \eqref{eq:gauss}, \eqref{eq:rrr}, \eqref{eq:diam} and defining $C_0 \coloneqq 8 \sqrt{\pi} (C_1 + 2 C_2)$, for any $ \delta \in (0,1)$, we have with probability at least $1 - \delta$:
\begin{equation*}\label{eq:final_bound}
\epsilon(\hat{h}_n) - \epsilon^* \le C_0 L M s t \sqrt{\frac{K p}{n}} + 24 M^2 \sqrt{\frac{\ln \frac{2}{\delta}}{2n}}.
\end{equation*}
\qed

%%%%%%%%%%%%%%%%%%%%%%%%%%%
%  REPRESENTER THM PROOF  %
%%%%%%%%%%%%%%%%%%%%%%%%%%%

\subsection{Proof of \Cref{thm:RT}}\label{sec:RT_proof}

\begin{lemma}\label{lem:1}
See Theorem 3.1 in \citet{micchelli2005learning}.
Let $\mathcal{X}$ be a measurable space, $\mathcal{Y}$ a real Hilbert space with inner product $\left\langle \cdot,\cdot\right\rangle_\mathcal{Y}$, $\mathcal{K} : \mathcal{X} \times \mathcal{X} \rightarrow \mathcal{L}(\mathcal{Y})$ an operator-valued kernel, $\mathcal{H} \subset \mathcal{F}(\mathcal{X}, \mathcal{Y})$ the corresponding vv-RKHS, with inner product $\langle \cdot,\cdot\rangle_\mathcal{H}$.
We have the reproducing property : $\langle y, f(x)\rangle_\mathcal{Y} = \langle \mathcal{K}_xy,f\rangle_\mathcal{H}$, with the notation $\mathcal{K}_xy = \mathcal{K}(\cdot,x)y : \mathcal{X} \rightarrow \mathcal{Y}$.
Suppose also that the linear functionals $L_{x_i}f = f(x_i), f \in \mathcal{H}, i \in \llbracket n \rrbracket$ are linearly independent.
Then the unique solution to the variational problem:
\begin{equation*}
\min_{f \in \mathcal{H}}\Big\{\|f\|_\mathcal{H}^2 : f(x_i) = y_i,~i \in \llbracket n \rrbracket\Big\},
\end{equation*}
is given by :
\begin{equation*}
\hat{f} = \sum_{i=1}^n \mathcal{K}_{x_i}c_i,
\end{equation*}
where $\{c_i,~i \in \llbracket n \rrbracket\} \subset \mathcal{Y}^n$ is the unique solution of the linear system of equations :
\begin{equation*}
\sum_{i=1}^n \mathcal{K}(x_k, x_i)c_i = y_k, \qquad k \in \llbracket n \rrbracket.
\end{equation*}
\end{lemma}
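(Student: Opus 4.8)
My plan is to run the classical variational argument behind representer theorems, adapted to the operator-valued setting, and to let the stated reproducing property and the linear-independence hypothesis do all the work. Write $L_{x_i}\colon\mathcal H\to\mathcal Y$, $f\mapsto f(x_i)$, for the evaluation operators (bounded, since $\|f(x_i)\|_\mathcal Y=\sup_{\|y\|_\mathcal Y\le1}\langle y,f(x_i)\rangle_\mathcal Y=\sup_{\|y\|_\mathcal Y\le1}\langle\mathcal K_{x_i}y,f\rangle_\mathcal H\le\|\mathcal K_{x_i}\|\,\|f\|_\mathcal H$), and set $V\coloneqq\overline{\operatorname{span}}\{\mathcal K_{x_i}y: i\in\llbracket n\rrbracket,\ y\in\mathcal Y\}\subseteq\mathcal H$. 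The feasible set $\mathcal A\coloneqq\{f\in\mathcal H: f(x_i)=y_i,\ i\in\llbracket n\rrbracket\}=\bigcap_i L_{x_i}^{-1}(\{y_i\})$ is a closed affine subspace of $\mathcal H$.

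\emph{Step 1 (reduce to a metric projection).} By the reproducing property $\langle y,f(x)\rangle_\mathcal Y=\langle\mathcal K_xy,f\rangle_\mathcal H$, the condition $f(x_i)=0$ for all $i$ is equivalent to $f\perp\mathcal K_{x_i}y$ for all $i$ and all $y\in\mathcal Y$; hence the linear part of $\mathcal A$, namely $\mathcal N\coloneqq\{f: L_{x_i}f=0\ \forall i\}$, equals $V^\perp$. Fixing any feasible $f_0$ gives $\mathcal A=f_0+V^\perp$, so the minimum-norm element of $\mathcal A$ — the metric projection of $0$ onto $\mathcal A$ — is $\hat f=P_Vf_0\in V$. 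For any feasible $f$ one has $P_Vf=\hat f$, hence $f=\hat f+P_{V^\perp}f$ and $\|f\|_\mathcal H^2=\|\hat f\|_\mathcal H^2+\|P_{V^\perp}f\|_\mathcal H^2$, which simultaneously shows that $\hat f$ is a minimizer and that it is the \emph{unique} one.

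\emph{Step 2 (the explicit finite form).} It remains to realize $\hat f$ as $\sum_{i=1}^n\mathcal K_{x_i}c_i$. Imposing the interpolation constraints on such a candidate yields exactly the block system $\sum_{i=1}^n\mathcal K(x_k,x_i)c_i=y_k$, i.e. $\mathbf Kc=y$, where $\mathbf K\in\mathcal L(\mathcal Y^n)$ has blocks $\mathcal K(x_k,x_i)$ and $y=(y_1,\dots,y_n)$. Positivity of $\mathbf K$ and the identity $\langle c,\mathbf Kc\rangle_{\mathcal Y^n}=\bigl\|\sum_i\mathcal K_{x_i}c_i\bigr\|_\mathcal H^2$ (again the reproducing property) give $\ker\mathbf K=\{c:\sum_i\mathcal K_{x_i}c_i=0\}$. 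The hypothesis that the $L_{x_i}$ are linearly independent says precisely $\ker\mathbf K=\{0\}$, i.e. $\mathbf K$ is injective, hence — for the finitely many finite-dimensional blocks relevant here (or for blocks of the form $k(\cdot,\cdot)\,\mathrm{Id}$ with $k$'s finite Gram matrix invertible) — boundedly invertible. Therefore $c=\mathbf K^{-1}y$ is well defined (which incidentally proves $\mathcal A\neq\emptyset$), the function $\sum_i\mathcal K_{x_i}c_i$ is feasible and lies in $V$, so by Step 1 it equals $\hat f$; uniqueness of $c$ follows from injectivity of $\mathbf K$.

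\emph{Main obstacle.} The genuinely delicate point is the passage from ``$\hat f\in V$'', where $V$ is a \emph{closure} of a span, to ``$\hat f$ is an honest finite combination $\sum_i\mathcal K_{x_i}c_i$''; a priori one only knows $\hat f$ to be a norm-limit of such combinations. This is exactly what invertibility of the Gram operator $\mathbf K$ buys, and is the reason the linear-independence hypothesis is imposed. Absent it, I would instead note that $y\in\operatorname{range}(\mathbf K)$ (since $y_i=f_0(x_i)$) and that this range is closed in the cases at hand, so that $\hat f=\sum_i\mathcal K_{x_i}c_i$ with $c=\mathbf K^{+}y$; I would present the nondegenerate case exactly as stated and relegate the singular case (restrict to a maximal linearly independent subfamily of the $L_{x_i}$, set the remaining $c_i$ to zero) to a remark.
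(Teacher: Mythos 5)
Your argument rests on exactly the same mechanism as the paper's proof: the reproducing property turns the constraint ``$g$ vanishes at all $x_i$'' into orthogonality of $g$ to every $\mathcal{K}_{x_i}c_i$, and the Pythagorean identity $\|f\|_{\mathcal H}^2=\|\hat f\|_{\mathcal H}^2+\|f-\hat f\|_{\mathcal H}^2$ does the rest. The paper presents this as a three-line verification that the given $\hat f$ beats any competitor; you repackage it as a metric projection onto the closed affine subspace $f_0+V^{\perp}$, which additionally yields existence and uniqueness of the minimizer rather than only optimality of the announced candidate. Your Step 2 goes beyond the paper's proof, which (like the lemma's statement) simply takes the solvability and uniqueness of the block system $\mathbf{K}c=y$ for granted by citing \citet{micchelli2005learning}; your identification of the gap between ``$\hat f$ lies in the \emph{closure} of the span'' and ``$\hat f$ is a finite combination'' is the genuinely delicate point, and your caveat is warranted -- when $\mathcal Y$ is infinite dimensional, injectivity of the positive Gram operator $\mathbf{K}$ does not by itself give bounded invertibility, so the finite form really does need either finite-dimensional blocks or the decomposable structure $k(\cdot,\cdot)\,\mathrm{Id}$ with invertible scalar Gram matrix that you mention (which is the case used in the paper's Representer Theorem). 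No gap; your version is the more complete one.
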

\begin{proof}
Let $f \in \mathcal{H}$ such that $f(x_i) = y_i~~\forall~i \in \llbracket n \rrbracket$, and set $g = f - \hat{f}$.
We have :
\begin{equation*}
\|f\|_\mathcal{H}^2 = \|\hat{f}\|_\mathcal{H}^2 + \|g\|_\mathcal{H}^2 + 2\langle \hat{f},g\rangle_\mathcal{H}.
\end{equation*}
Observe also that :
\begin{equation*}
\langle \hat{f},g\rangle_\mathcal{H} = \left\langle \sum_{i=1}^n \mathcal{K}_{x_i}c_i,g\right\rangle_\mathcal{H} = \sum_{i=1}^n \langle \mathcal{K}_{x_i}c_i , g\rangle_\mathcal{H} = \sum_{i=1}^n \langle c_i , g(x_i)\rangle_\mathcal{Y} = 0.
\end{equation*}
Finally, we have :
\begin{equation*}
\|f\|_\mathcal{H}^2 = \|\hat{f}\|_\mathcal{H}^2 + \|g\|_\mathcal{H}^2 \geq \|\hat{f}\|_\mathcal{H}^2.
\end{equation*}
\end{proof}

\begin{proof_of}{\Cref{thm:RT}}
We shall use the following shortcut notation:
\begin{equation*}
\xi(f_1^*, \ldots, f_{L_0}^*, \mathcal{S}) \coloneqq V\Big((f_{L_0}\circ\ldots\circ f_1)(x_1),\ldots,(f_{L_0}\circ\ldots\circ f_1)(x_n), \left\|f_1\right\|_{\mathcal{H}_1}, \ldots, \left\|f_{L_0}\right\|_{\mathcal{H}_{L_0}}\Big).
\end{equation*}
Let $l_0 \in \llbracket L_0 \rrbracket$. Let $g_{l_0} \in \mathcal{H}_{l_0}$ such that :
\begin{equation*}
g_{l_0}\left({x_i^*}^{(l_0-1)}\right) = f_{l_0}^*\left({x_i^*}^{(l_0-1)}\right), \qquad \forall~i \in \llbracket n \rrbracket.
\end{equation*}
By definition, we have :
\begin{equation*}
\xi(f_1^*, \ldots,f_{l_0}^*, \ldots, f_{L_0}^*, \mathcal{S}) \leq \xi(f_1^*, \ldots,g_{l_0}, \ldots, f_{L_0}^*, \mathcal{S}),
\end{equation*}
thus we necessarily have :
\begin{equation*}
\|f_{l_0}^*\|_{\mathcal{H}_{l_0}}^2 \leq \|g_{l_0}\|_{\mathcal{H}_{l_0}}^2.
\end{equation*}
Therefore $f_{l_0}^*$ is a solution to the problem :
\begin{equation*}
\min_{f \in \mathcal{H}_{l_0}} \left\{~\|f\|_{\mathcal{H}_{l_0}}^2 : f\left({x_i^*}^{(l_0-1)}\right) =f_{l_0}^*\left({x_i^*}^{(l_0-1)}\right),~i \in \llbracket n \rrbracket~\right\}.
\end{equation*}
From \Cref{lem:1}, there exists $\left(\varphi_{l_0, 1}^*, \ldots, \varphi_{l_0, n}^*\right) \in \mathcal{X}_{l_0}^n$, such that :
\begin{equation*}
f_{l_0}^*(\cdot) = \sum_{i=1}^n \mathcal{K}_{l_0}\left(~\cdot~, {x_i^*}^{(l_0-1)}\right)\varphi_{l_0, i}^*.
\end{equation*}
\end{proof_of}

%%%%%%%%%%%%%%%%%%%%%%%%%
%  NON-CONVEXITY PROOF  %
%%%%%%%%%%%%%%%%%%%%%%%%%

\subsection{Non-convexity of the Problem}\label{sec:non_cvx}
\subsubsection{Functional Setting}
We prove that problem \eqref{eq:emp_pb} is not convex by showing that the objective function $(f, g) \mapsto \hat{\epsilon}_n(g \circ f) + \Omega(f, g)$ is not.
We denote this application by $\mathcal{O}$ and suppose it is.
If it were convex, one would have :
\begin{equation}\label{eq:cvx}
\mathcal{O}\left(\kappa(f, g) + (1 - \kappa)(f', g')\right) ~\leq~\kappa \mathcal{O}(f, g)
+ (1 - \kappa)\mathcal{O}(f', g'),
\end{equation}
for any $\kappa \in [0, 1]$ and any functions $f, f', g, g' \in \mathcal{H}_1^2 \times \mathcal{H}_2^2$.
Now, consider the particular case where we want to encode a single point ($n=1$) from $\mathcal{X}_0 = \mathbb{R}$ to $\mathcal{X}_1 = \mathbb{R}$, using one single hidden layer ($L=2$).
Let $x_1 = 1$, and assume that both kernels are linear : $\mathcal{K}_1(x, x') = xx'$, $\mathcal{K}_2(y, y') = yy'$.
$f : x \mapsto \mathcal{K}_1(x, x_1)\varphi = \varphi x$ and $f' : x \mapsto \mathcal{K}_1(x, x_1)\varphi' = \varphi'x$ are elements of $\mathcal{H}_1$ for any coefficients $\varphi, \varphi'$.
In the same way, $g : y \mapsto \mathcal{K}_2(y, f(x_1))\psi = \psi f(1) y$ and $g' : y \mapsto \mathcal{K}_2(y, f'(x_1))\psi' = \psi'f'(1) y$ are elements of $\mathcal{H}_2$ for any $\psi, \psi' \in \mathbb{R}^2$.

Therefore, $\mathcal{O}(f,g)$ depends only on $\varphi$ and $\psi$.
Let $\mathcal{P}$ denote the application from $\mathbb{R}^2$ to $\mathbb{R}$ such that $\mathcal{O}(f, g) = \mathcal{P}(\varphi, \psi)$.
Then, one has also $\mathcal{O}(f', g') = \mathcal{P}(\varphi', \psi')$.
And finally, it holds :
\begin{align*}
\mathcal{O}\left(\kappa(f, g) + (1 - \kappa)(f', g')\right) &= \mathcal{O}\left(\kappa f + (1 - \kappa)f', \kappa g + (1 - \kappa)g')\right),\\
&= \mathcal{P}\left(\kappa \varphi + (1 - \kappa)\varphi', \kappa \psi + (1 - \kappa)\psi')\right),\\
\mathcal{O}\left(\kappa(f, g) + (1 - \kappa)(f', g')\right) &= \mathcal{P}\left(\kappa(\varphi, \psi) + (1 - \kappa)(\varphi', \psi')\right).
\end{align*}

So if \eqref{eq:cvx} were true, in particular it would be true for the specific $f, f', g, g'$ functions we just defined.
Hence, the following would hold for any $\varphi, \varphi', \psi, \psi' \in \mathbb{R}^4$ :
\begin{equation*}
\mathcal{P}\left(\kappa(\varphi, \psi) + (1 - \kappa)(\varphi', \psi')\right) ~\leq~ \kappa \mathcal{P}(\varphi, \psi) + (1 - \kappa)\mathcal{P}(\varphi', \psi').
\end{equation*}
This is exactly the convexity of $\mathcal{P}$ in $(\varphi, \psi)$.
So the convexity of the objective function in the functional setting (problem \eqref{eq:emp_pb}) implies the convexity of the objective function in the parametric setting (obtained after application of \Cref{thm:RT}).
In the following section we show that the latest does not even hold, which allows to conclude that neither problem is convex.

\subsubsection{Parametric Setting}
As a reminder, we have :
\begin{align*}
&f(x) = \mathcal{K}_1(x, x_1)\varphi = \varphi x, \qquad &&f(1) = \varphi,\\
&g(y) = \mathcal{K}_2(y, f(x_1))\psi = \varphi\psi y,  &&g(f(1)) = \varphi^2\psi.
\end{align*}

Our problem reads :
\begin{equation*}
\min_{\varphi \in \mathbb{R},~\psi \in \mathbb{R}}~~\mathcal{P}(\varphi, \psi) \overset{def}{=} \left(1 - \varphi^2\psi\right)^2 + \lambda\varphi^2 + \mu \psi^2,
\end{equation*}
or equivalently :
\begin{equation*}
\min_{\varphi \in \mathbb{R},~\psi \in \mathbb{R}}~~1 + \lambda\varphi^2 + \mu\psi^2 - 2\varphi^2\psi + \varphi^4\psi^2.
\end{equation*}

Let us find the critical points and analyze them. We have :
\begin{eqnarray*}
\frac{\partial \mathcal{P}}{\partial \varphi}(\varphi, \psi) &=& 2\lambda\varphi - 4\varphi\psi + 4\varphi^3\psi^2,\\
\frac{\partial \mathcal{P}}{\partial^2 \varphi}(\varphi, \psi) &=&  2\lambda - 4\psi + 12\varphi^2\psi^2,\\
\frac{\partial \mathcal{P}}{\partial \psi}(\varphi, \psi) &=& 2\mu\psi -2 \varphi^2 + 2\varphi^4\psi,\\
\frac{\partial \mathcal{P}}{\partial^2 \psi}(\varphi, \psi) &=& 2\mu + 2\varphi^4,\\
\frac{\partial \mathcal{P}}{\partial \varphi \partial \psi}(\varphi, \psi) &=& -4\varphi + 8\varphi^3\psi.
\end{eqnarray*}

The two following equivalence relationships hold true:
\begin{align*}
\frac{\partial \mathcal{P}}{\partial \varphi}(\varphi^*, \psi^*) &= \left(2\lambda - 4\psi^* + 4{\varphi^*}^2{\psi^*}^2\right)\varphi^* = 0 &&\Leftrightarrow \qquad \varphi^* = 0 \text{~~or~~} {\varphi^*}^2 = \frac{2\psi^* - \lambda}{2 {\psi^*}^2}, \qquad\qquad&\\[.3cm]
\frac{\partial \mathcal{P}}{\partial \psi}(\varphi^*, \psi^*) &= 2\mu\psi^* - 2{\varphi^*}^2 + 2 {\varphi^*}^4\psi^* = 0 &&\Leftrightarrow \qquad \psi^* = \frac{{\varphi^*}^2}{{\varphi^*}^4 + \mu}.
\end{align*}
Obviously, the point $(\varphi^*, \psi^*) = (0, 0)$ is always critical.
Notice that :
\begin{equation*}
\textbf{Hess}_{(0, 0)}\mathcal{P} = \left(\begin{matrix}2\lambda & 0\\0&2\mu\end{matrix}\right) \succ 0.
\end{equation*}

\begin{figure*}[th]
\begin{center}
\includegraphics[width=\textwidth]{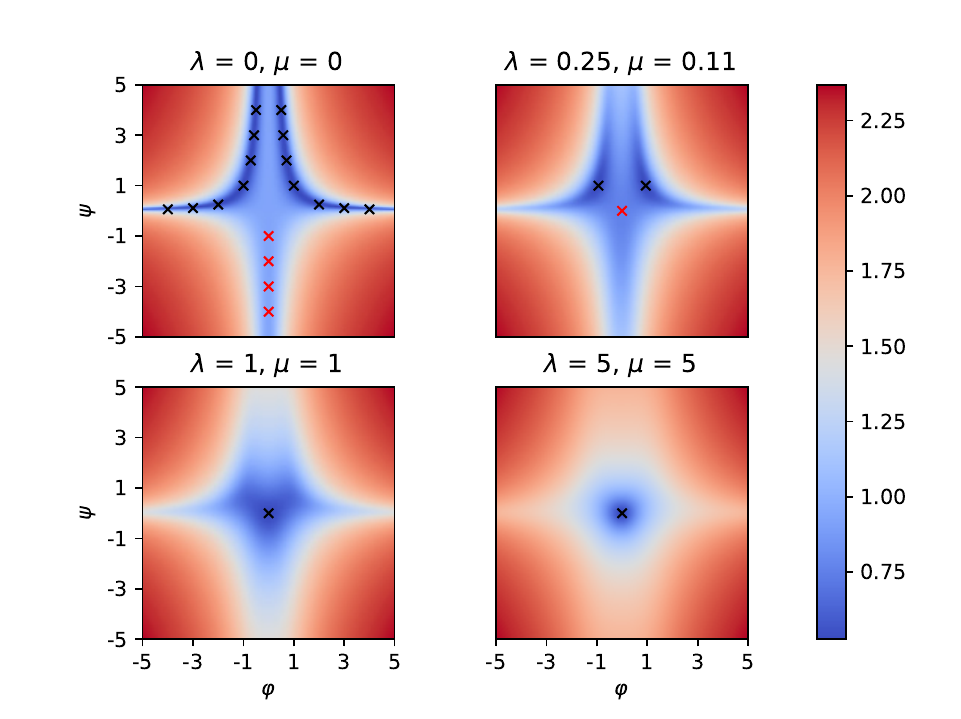}
\caption{Heatmaps of $\mathcal{P}$ for different values of $\lambda$ and $\mu$}
\label{fig:heatmaps}
\end{center}
\end{figure*}

Thus $(0,0)$ is a local minimum and $\mathcal{P}(0, 0) = 1$.
To prove that it is not a global minimizer, it is enough to find a couple $(\varphi, \psi)$ such that $\mathcal{P}(\varphi, \psi) < 1$.
For example $\mathcal{P}(1, 1) = \lambda + \mu$.
As soon as $\lambda + \mu < 1$, the objective $\mathcal{P}$ is not invex, and a fortiori non-convex.

\Cref{fig:heatmaps} shows the heatmaps of $\mathcal{P}$ with respect to $\varphi$ and $\psi$ for different regularization settings.
Note that in the non-regularized setting ($\lambda = \mu =0$), every point $(0, \psi)$ with $\psi < 0$ is a local minimizer but not a global one.
They are represented by red crosses.
On the other hand, we have also an infinite number of global minima, namely every couple satisfying $\varphi^2\psi = 1$.
See the black crosses on the top left figure.
When the regularization parameters remain small enough, $(0,0)$ is a local minimizer but not a global one (top right figure).
Finally, the higher the regularization, the smoother the objective, even if convexity can never be verified (bottom figures).

% OLD FIGURE
%%%%%%%%%%%%
%\begin{figure}
%\centering
%\includegraphics[width=\textwidth]{fig/non_cvx.png}
%\caption{Heatmaps of $\mathcal{P}$ for different values of $\lambda$ and $\mu$}
%\label{fig:heatmaps}
%\end{figure}

% OLD FIGURE 2
%%%%%%%%%%%%%%
% \begin{figure}[!ht]
% \begin{center}
% \subfloat[$\lambda=0,~\mu=0$\label{fig:00}]{\includegraphics[width=2.5in]{fig/non_cvx_1.png}}\\
% \subfloat[$\lambda=0.25,~\mu=0.11$\label{fig:smallreg}]{\includegraphics[width=2.5in]{fig/non_cvx_2.png}}\\
% \subfloat[$\lambda=1,~\mu=1$\label{fig:strongreg1}]{\includegraphics[width=2.5in]{fig/non_cvx_3.png}}\\
% \subfloat[$\lambda=5,~\mu=5$\label{fig:strongreg2}]{\includegraphics[width=2.5in]{fig/non_cvx_4.png}}
% \caption{Heatmaps of $\mathcal{P}$ for different values of $\lambda$ and $\mu$}
% \label{fig:heatmaps}
% \end{center}
% \end{figure}

% OLD FIGURE 3
%%%%%%%%%%%%%%
% \begin{figure*}[ht]
% \vskip 0.2in
% \begin{center}
% \subfigure[$\lambda=0,~\mu=0$\label{fig:00}]{\includegraphics[width=\columnwidth]{fig/non_cvx_1.png}}
% \subfigure[$\lambda=0.25,~\mu=0.11$\label{fig:smallreg}]{\includegraphics[width=\columnwidth]{fig/non_cvx_2.png}}\\
% \subfigure[$\lambda=1,~\mu=1$\label{fig:strongreg1}]{\includegraphics[width=\columnwidth]{fig/non_cvx_3.png}}
% \subfigure[$\lambda=5,~\mu=5$\label{fig:strongreg2}]{\includegraphics[width=\columnwidth]{fig/non_cvx_4.png}}
% \caption{Heatmaps of $\mathcal{P}$ for different values of $\lambda$ and $\mu$}
% \label{fig:heatmaps}
% \end{center}
% \vskip -0.2in
% \end{figure*}

%%%%%%%%%%%%%%%%%%%%%%%%%%%%%%%%%
%  GRADIENT DERIVATION DETAILS  %
%%%%%%%%%%%%%%%%%%%%%%%%%%%%%%%%%

\clearpage
\section{Gradient Derivation Details}\label{sec:grad_detail}
\subsection{Detail of Equation \eqref{eq:norms_detail}}
\begin{align*}
\|f_l\|_{\mathcal{H}_l}^2 &= \left\langle f_l, f_l\right\rangle_{\mathcal{H}_l},\\
&= \left\langle \sum_{i=1}^n  \mathcal{K}_l\left(~.~, {x_i}^{(l-1)}\right)\varphi_{l, i}~,~\sum_{i'=1}^n \mathcal{K}_l\left(~.~, {x_{i'}}^{(l-1)}\right)\varphi_{l, i'}\right\rangle_{\mathcal{H}_l},\\
&= \sum_{i, i' = 1}^n \left\langle \mathcal{K}_l\left(~.~, {x_i}^{(l-1)}\right)\varphi_{l, i}~,~\mathcal{K}_l\left(~.~, {x_{i'}}^{(l-1)}\right)\varphi_{l, i'}\right\rangle_{\mathcal{H}_l},\\
&= \sum_{i, i' = 1}^n \left\langle \varphi_{l, i}~,~\mathcal{K}_l\left({x_i}^{(l-1)}, {x_{i'}}^{(l-1)}\right)\varphi_{l, i'}\right\rangle_{\mathcal{X}_l},\\
\|f_l\|_{\mathcal{H}_l}^2 &= \sum_{i, i' = 1}^n k_l\left({x_i}^{(l-1)}, {x_{i'}}^{(l-1)}\right)\left\langle \varphi_{l, i}~,~A_l~\varphi_{l, i'}\right\rangle_{\mathcal{X}_l}.
\end{align*}
\qed

\subsection{Detail of Equation \eqref{eq:cross_norm}}\label{sec:proof_cross_norm}
\begin{align*}
\Big(\nabla_{\varphi_{l_0, i_0}}~\|f_l\|_{\mathcal{H}_l}^2\Big)^\top &= \sum_{i,i'=1}^n [N_l]_{i, i'} \left(\nabla_{\varphi_{l_0, i_0}}~k_l\left({x_i}^{(l-1)}, {x_{i'}}^{(l-1)}\right)\right)^\top,\\
&= \sum_{i,i'=1}^n [N_l]_{i, i'}~\bigg[~\left(\nabla^{(1)} k_l\left({x_i}^{(l-1)}, {x_{i'}}^{(l-1)}\right)\right)^\top\textbf{Jac}_{{x_i}^{(l-1)}}(\varphi_{l_0, i_0})\\
&\phantom{= \sum_{i,i'=1}^n [N_l]_{i, i'}~\bigg[~}+\left(\nabla^{(2)} k_l\left({x_i}^{(l-1)}, {x_{i'}}^{(l-1)}\right)\right)^\top\textbf{Jac}_{{x_{i'}}^{(l-1)}}(\varphi_{l_0, i_0})~\bigg],\\
&=\sum_{i,i'=1}^n [N_l]_{i, i'}\left(\nabla^{(1)} k_l\left({x_i}^{(l-1)}, {x_{i'}}^{(l-1)}\right)\right)^\top\textbf{Jac}_{{x_i}^{(l-1)}}(\varphi_{l_0, i_0})\\
&\phantom{=}+\sum_{i',i=1}^n [N_l]_{i', i}\left(\nabla^{(1)} k_l\left({x_{i'}}^{(l-1)}, {x_i}^{(l-1)}\right)\right)^\top\textbf{Jac}_{{x_{i'}}^{(l-1)}}(\varphi_{l_0, i_0}),\\
\Big(\nabla_{\varphi_{l_0, i_0}}~\|f_l\|_{\mathcal{H}_l}^2\Big)^\top &= 2\sum_{i,i'=1}^n [N_l]_{i, i'}\left(\nabla^{(1)} k_l\left({x_i}^{(l-1)}, {x_{i'}}^{(l-1)}\right)\right)^\top\textbf{Jac}_{{x_i}^{(l-1)}}(\varphi_{l_0, i_0}),
\end{align*}
where $\nabla^{(1)} k_l\left(x, x'\right)$ (respectively $\nabla^{(2)} k_l\left(x, x'\right)$) denotes the gradient of $k_l(\cdot,\cdot)$ with respect to the $1^{st}$ (respectively $2^{nd}$) coordinate evaluated in $\left(x, x'\right)$.
\qed

\subsection{Detail of Jacobians Computation}\label{sec:jacs}
All previously written gradients involve Jacobian matrices.
Their computation is to be detailed in this subsection.
First note that $\textbf{Jac}_{{x_i}^{(l)}}(\varphi_{l_0, i_0})$ only makes sense if $l_0 \leq l$.
Indeed, ${x_i}^{(l)}$ is completely independent from $\varphi_{l_0, i_0}$ otherwise.
Let us first detail ${x_i}^{(l)}$ and use the linearity of the Jacobian operator :
\begin{equation*}
\textbf{Jac}_{{x_i}^{(l)}}(\varphi_{l_0, i_0})
= \sum_{i'=1}^n \textbf{Jac}_{k_l\left({x_i}^{(l-1)}, {x_{i'}}^{(l-1)}\right)A_l~\varphi_{l, i'}}(\varphi_{l_0, i_0}).
\end{equation*}
Just as in the norm gradient case (see \Cref{sec:finite_dim}), there are two different outputs depending on whether $l = l_0$ (this gives an initialization), or $l > l_0$ (this leads to a recurrence formula).

% 3.3.1. Own Jacobian

\underline{Own Jacobian $(l = l_0)$ :}
\begin{align*}
\textbf{Jac}_{{x_i}^{(l)}}(\varphi_{l, i_0}) &= \sum_{i'=1}^n \textbf{Jac}_{k_l\left({x_i}^{(l-1)}, {x_{i'}}^{(l-1)}\right)A_l~\varphi_{l, i'}}(\varphi_{l, i_0}),\\
&= \sum_{i'=1}^n k_l\left({x_i}^{(l-1)}, {x_{i'}}^{(l-1)}\right) \textbf{Jac}_{A_l~\varphi_{l, i'}}(\varphi_{l, i_0}),\\
\textbf{Jac}_{{x_i}^{(l)}}(\varphi_{l, i_0})&= [K_l]_{i, i_0}~A_l.
\end{align*}

% 3.3.2. Higher Jacobian

\underline{Higher Jacobian $(l > l_0)$ :}
\begin{align*}
\textbf{Jac}_{{x_i}^{(l)}}(\varphi_{l_0, i_0}) &= \sum_{i'=1}^n \textbf{Jac}_{k_l\left({x_i}^{(l-1)}, {x_{i'}}^{(l-1)}\right)A_l~\varphi_{l, i'}}(\varphi_{l_0, i_0}),\\
&= \sum_{i'=1}^n A_l~\varphi_{l, i'} \left(\nabla_{\varphi_{l_0, i_0}}~k_l\left({x_i}^{(l-1)}, {x_{i'}}^{(l-1)}\right)\right)^\top,\\
&= A_l \sum_{i'=1}^n \varphi_{l, i'} \bigg[~\left(\nabla^{(1)} k_l\left({x_i}^{(l-1)}, {x_{i'}}^{(l-1)}\right)\right)^\top\textbf{Jac}_{{x_i}^{(l-1)}}(\varphi_{l_0, i_0})\\
&\phantom{= A_l \sum_{i'=1}^n \varphi_{l, i'} \bigg[~}+ \left(\nabla^{(1)} k_l\left({x_{i'}}^{(l-1)}, {x_i}^{(l-1)}\right)\right)^\top\textbf{Jac}_{{x_{i'}}^{(l-1)}}(\varphi_{l_0, i_0})~\bigg],\\
&=~~A_l \left[\sum_{i'=1}^n \varphi_{l, i'} \left(\nabla^{(1)} k_l\left({x_i}^{(l-1)}, {x_{i'}}^{(l-1)}\right)\right)^\top\right]\textbf{Jac}_{{x_i}^{(l-1)}}(\varphi_{l_0, i_0})\\
&~~+A_l\left[\sum_{i'=1}^n \varphi_{l, i'} \left(\nabla^{(1)} k_l\left({x_{i'}}^{(l-1)}, {x_i}^{(l-1)}\right)\right)^\top\textbf{Jac}_{{x_{i'}}^{(l-1)}}(\varphi_{l_0, i_0})\right],\\
\textbf{Jac}_{{x_i}^{(l)}}(\varphi_{l_0, i_0})&=A_l\Bigg[\Phi_l^\top \Delta_l\left({x_i}^{(l-1)}\right)\textbf{Jac}_{{x_i}^{(l-1)}}(\varphi_{l_0, i_0})\\
&~~~~~~~~~~~+\sum_{i'=1}^n \varphi_{l, i'} \left(\nabla^{(1)} k_l\left({x_{i'}}^{(l-1)}, {x_i}^{(l-1)}\right)\right)^\top\textbf{Jac}_{{x_{i'}}^{(l-1)}}(\varphi_{l_0, i_0})\Bigg],
\end{align*}
with $\Delta_l(x) \coloneqq \left(\left(\nabla^{(1)} k_l\left(x, {x_1}^{(l-1)}\right)\right)^\top,\ldots, \left(\nabla^{(1)} k_l\left(x, {x_n}^{(l-1)}\right)\right)^\top\right)^\top$ the $n \times d_{l-1}$ matrix storing the $\nabla^{(1)} k_l\left(x, {x_i}^{(l-1)}\right)$ in rows.
These matrices are computed on \Cref{sec:delta_detail} (especially for $x={x_{i}}^{(l-1)})$.
Assuming these quantities are known, we have an expression of $\textbf{Jac}_{{x_i}^{(l)}}(\varphi_{l_0, i_0})$ that only depends on the $\textbf{Jac}_{{x_{i'}}^{(l-1)}}(\varphi_{l_0, i_0})$.
Thus we can unroll the recurrence until $l=l_0$ and, using the previous subsection, compute $\textbf{Jac}_{{x_i}^{(l)}}(\varphi_{l_0, i_0})$ for every couple $(l, l_0)$ such that $l > l_0$.\\

An interesting remark can be made on the two-terms structure of the Jacobians.
Indeed, the first term corresponds to the chain rule on ${x_i}^{(l)} = f_l\left({x_i}^{(l-1)}\right)$ assuming that $f_l$ is constant : $\frac{\partial f_l\left({x_i}^{(l-1)}\right)}{\partial \varphi_{l_0, i_0}} = \frac{\partial f_l\left({x_i}^{(l-1)}\right)}{\partial{x_i}^{(l-1)}}\cdot\frac{\partial {x_i}^{(l-1)}}{\partial \varphi_{l_0, i_0}}$ (notation abuse on $\partial$ in order to preserve understandability).
On the contrary, the second term corresponds to a chain rule assuming that ${x_i}^{(l-1)}$ does not vary with $\varphi_{l_0, i_0}$, but that $f_l$ does, through the influence of $\varphi_{l_0, i_0}$ on the supports of $f_l$, namely the ${x_{i'}}^{(l-1)}$.

\subsection{Detail of the $\Delta_l$ Matrices Computation}\label{sec:delta_detail}
In this section we derive the quantities $\nabla^{(1)} k_l\left({x_i}^{(l-1)}, {x_{i'}}^{(l-1)}\right)$ and more specifically the matrices $\Delta_l\left({x_i}^{(l-1)}\right)$ for $l \in \llbracket L \rrbracket$ and $i \in \llbracket n \rrbracket$.
Note that all previously computed quantities are independent from the kernel chosen.
Actually, the $\Delta_l\left({x_i}^{(l-1)}\right)$ matrices encapsulate all the kernel specificity of the algorithm.
Thus, tailoring a new algorithm by changing the kernels only requires computing the new $\Delta_l$ matrices.
This flexibility is a key asset of our approach, and more generally a crucial characteristic of kernel methods.
In the following, we describe the $\Delta_l$ derivation for two popular kernels : the Gaussian and the polynomial ones.

% 3.4.1. Gaussian kernel

\underline{Gaussian kernel :}
\begin{equation*}
\nabla^{(1)} k_l(x,x') = \nabla_x \Big(\exp\left(-\gamma_l \|x - x'\|_{\mathcal{X}_{l-1}}^2\right)\Big) = -2 \gamma_l~e^{-\gamma_l \|x - x'\|_{\mathcal{X}_{l-1}}^2}~(x - x').
\end{equation*}
\begin{align*}
\Delta_l\left({x_i}^{(l-1)}\right) &= \bigg[\left(\nabla^{(1)} k_l\left({x_i}^{(l-1)}, {x_1}^{(l-1)}\right)\right)^\top,\ldots, \left(\nabla^{(1)} k_l\left({x_i}^{(l-1)}, {x_n}^{(l-1)}\right)\right)^\top\bigg]^\top,\\
&= -2 \gamma_l\bigg[e^{-\gamma_l \left\|{x_i}^{(l-1)} - {x_1}^{(l-1)}\right\|_{\mathcal{X}_{l-1}}^2}\left({x_i}^{(l-1)} - {x_1}^{(l-1)}\right)^\top,\ldots\\
&\phantom{= -2 \gamma_l\bigg[}\ldots, e^{-\gamma_l \left\|{x_i}^{(l-1)} - {x_n}^{(l-1)}\right\|_{\mathcal{X}_{l-1}}^2}\left({x_i}^{(l-1)} - {x_n}^{(l-1)}\right)^\top\bigg]^\top,\\
\Delta_l\left({x_i}^{(l-1)}\right) &= -2\gamma_l~\tilde{K}_{l,i} \circ \left(\tilde{X}_i^{(l-1)} - X^{(l-1)}\right),
\end{align*}
where :
\begin{itemize}
\item $X^{(l-1)} \coloneqq \left(\left({x_1}^{(l-1)}\right)^\top, \ldots, \left({x_n}^{(l-1)}\right)^\top \right)^\top \in \mathbb{R}^{n \times d_{l-1}}$ stores the level $l-1$ representations of the $x_i$'s in rows
\item $\tilde{X}_i^{(l-1)} \coloneqq \left(\left({x_i}^{(l-1)}\right)^\top, \ldots, \left({x_i}^{(l-1)}\right)^\top \right)^\top \in \mathbb{R}^{n \times d_{l-1}}$ stores the level $l-1$ representation of $x_i$ $n$ times in rows
\item $\tilde{K}_{l, i} \in \mathbb{R}^{n \times n}$ is the $k_l$ Gram matrix between $X^{(l-1)}$ and $\tilde{X}_i^{(l-1)}$ \big(\textit{i.e.} $[\tilde{K}_{l, i}]_{s,t} = k_l\left({x_i}^{(l-1)}, {x_t}^{(l-1)}\right)$\big)
\item $\circ$ denotes the Hadamard (termwise) product for two matrices of the same shape
\end{itemize}
In practice, it is important to note that computing the $\Delta_l$ matrices with the Gaussian kernel needs not new calculations, but only uses already computed quantities : the level $l-1$ representations and their Gram matrix.

% 3.4.2. Polynomial kernel

\underline{Polynomial kernel :}
\begin{equation*}
\nabla^{(1)} k_l(x,x') = \nabla_x~\Big(\left(a \left\langle x, x'\right\rangle + b\right)^c\Big) = ca \Big(\left(a \left\langle x, x'\right\rangle + b\right)^{c-1}\Big)x'.
\end{equation*}
\begin{align*}
\Delta_l\left({x_i}^{(l-1)}\right) &= \bigg[\left(\nabla^{(1)} k_l\left({x_i}^{(l-1)}, {x_1}^{(l-1)}\right)\right)^\top,...~, \left(\nabla^{(1)} k_l\left({x_i}^{(l-1)}, {x_n}^{(l-1)}\right)\right)^\top\bigg]^\top,\\
&= ca \bigg[ \left(a \left\langle {x_i}^{(l-1)}, {x_1}^{(l-1)}\right\rangle + b\right)^{c-1}\left({x_1}^{(l-1)}\right)^\top,\ldots\\
&\qquad\quad\ldots, \left(a \left\langle {x_i}^{(l-1)}, {x_n}^{(l-1)}\right\rangle + b\right)^{c-1}\left({x_n}^{(l-1)}\right)^\top \bigg]^\top,\\
\Delta_l\left({x_i}^{(l-1)}\right) &= ca~\left(\tilde{K}_{l, i}\right)^{\frac{c-1}{c}} \circ X^{(l-1)},
\end{align*}
where we keep the notations introduced in the Gaussian kernel example for $X^{(l-1)}$, $\tilde{K}_{l, i}$ and $\circ$.
Note that the exponent on $\tilde{K}_{l, i}$ must be understood as a termwise power, and not a matrix multiplication power.

In practice, it is important to note that computing the $\Delta_l$ matrices with the polynomial kernel only requires a slight and cheap new calculation : putting the -~already computed~- Gram matrix at layer $l-1$ to the termwise power $(c-1)/c$.

\subsection{Detail of $N_L$ Computation}\label{sec:lin_sys}
\begin{align}\label{eq:F}
\left\langle x_j, x_{j'}\right\rangle_{\mathcal{X}_0} &= \Bigg\langle \sum_{i=1}^n \left(\mathcal{K}_L\left(x_j^{(L-1)},x_i^{(L-1)}\right) + n\lambda_L \delta_{ij}\right)\varphi_{L, i}~,\nonumber\\
&\quad\quad\sum_{i'=1}^n \left(\mathcal{K}_L\left(x_{j'}^{(L-1)},x_{i'}^{(L-1)}\right) + n\lambda_L \delta_{i'j'}\right)\varphi_{L, i'}\Bigg\rangle_{\mathcal{X}_0},\nonumber\\
&= \sum_{i,i'=1}^n \Bigg\langle \left(k_L\left(x_j^{(L-1)},x_i^{(L-1)}\right) + n\lambda_L \delta_{ij}\right)\varphi_{L, i}~,\nonumber\\
&\qquad\qquad~~\left(k_L\left(x_{j'}^{(L-1)},x_{i'}^{(L-1)}\right) + n\lambda_L \delta_{i'j'}\right)\varphi_{L, i'}\Bigg\rangle_{\mathcal{X}_0},\nonumber\\
\left\langle x_j, x_{j'}\right\rangle_{\mathcal{X}_0} &= \sum_{i,i'=1}^n \left(k_L\left(x_j^{(L-1)},x_i^{(L-1)}\right) + n\lambda_L \delta_{ij}\right)\nonumber\\
&\qquad\quad~~\left(k_L\left(x_{j'}^{(L-1)},x_{i'}^{(L-1)}\right) + n\lambda_L \delta_{i'j'}\right)\left\langle \varphi_{L,i},\varphi_{L,i'}\right\rangle_{\mathcal{X}_0}.
\end{align}
As a reminder, $N_L$ denotes the matrix such that $[N_L]_{i, i'} = \left\langle \varphi_{L,i},\varphi_{L,i'}\right\rangle_{\mathcal{X}_0}$.
Let $K_{in}$ denote the input Gram matrix such that $[K_{in}]_{j, j'} = \left\langle x_j, x_{j'}\right\rangle_{\mathcal{X}_0}$.
Finally, following notations of \Cref{sec:finite_dim} for $K_L$, and denoting $I_n$ the identity matrix on $\mathbb{R}^n$, equation \eqref{eq:F} may be rewritten as:
\begin{equation*}
[K_{in}]_{j, j'} = \sum_{i,i' = 1}^n [K_L + n\lambda_L I_n]_{j,i} [N_L]_{i, i'} [K_L + n\lambda_L I_n]_{i',j},
\end{equation*}
or equivalently:
\begin{equation*}
K_{in} = (K_L + n\lambda_L I_n)~N_L~(K_L + n\lambda_L I_n),
\end{equation*}
so that the computation of the desired linear products $\left\langle \varphi_{L,i},\varphi_{L,i'}\right\rangle_{\mathcal{X}_0}$ becomes straightforward:
\begin{equation}\label{eq:N_L}
N_L = (K_L + n\lambda_L I_n)^{-1}~K_{in}~(K_L + n\lambda_L I_n)^{-1}.
\end{equation}

% Differential

\subsection{Detail of Equation \eqref{eq:differential}}\label{sec:differential}
Since $\mathcal{X}_L$ is now infinite dimensional, $\textbf{Jac}_{x_i^{L}}(\varphi_{l_0, i_0})$ makes no more sense.
Nevertheless, $\varphi_{l, i}$ remains finite dimensional, and the distortion a scalar: a gradient does exist.
One is just forced to use the differential of $\|x_i - f_L \circ \ldots \circ f_1(x_i)\|_{\mathcal{X}_0}^2$ to make it appear.
As a reminder, the chain rule for the differentials reads : $d(g\circ f)(x) = dg(f(x)) \circ df(x)$.
Let us apply it with $g(\cdot) = \|\cdot\|_{\mathcal{X}_0}^2$ and $f: \varphi_{l_0, i_0} \mapsto x_i - x_i^{(L)}$.
Let $h \in \mathcal{X}_{l_0}$ and $h' \in \mathcal{X}_0$, we have:
\begin{equation*}
\Big(dg(y)\Big)(h') = 2\left\langle y,h'\right\rangle_{\mathcal{X}_0}.
\end{equation*}
\begin{align*}
\Big(df(\varphi_{l_0, i_0})\Big)(h) &= \left(d\left(x_i - \sum_{i'=1}^n k_L\left[x_i^{(L-1)}, x_{i'}^{(L-1)}\right]\varphi_{L, i'}\right)(\varphi_{l_0, i_0})\right)(h),\\
&= - \sum_{i'=1}^n \left(d\left(k_L\left[x_i^{(L-1)}, x_{i'}^{(L-1)}\right]\varphi_{L, i'}\right)(\varphi_{l_0, i_0})\right)(h),\\
&= - \sum_{i'=1}^n\left(d\left(k_L\left[x_i^{(L-1)}, x_{i'}^{(L-1)}\right]\right)(\varphi_{l_0, i_0})\right)(h)~\varphi_{L, i'},\\
\Big(df(\varphi_{l_0, i_0})\Big)(h) &= - \sum_{i'=1}^n \left\langle \nabla_{\varphi_{l_0, i_0}}k_L\left(x_i^{(L-1)},x_{i'}^{(L-1)}\right),h\right\rangle_{\mathcal{X}_{l_0}}\varphi_{L, i'}.
\end{align*}
Combining both expressions with $y = x_i - x_i^{(L)}$ gives:
\begin{align*}
\Big(d(\|x_i - f_L \circ \ldots \circ f_1(x_i)\|_{\mathcal{X}_0}^2)(\varphi_{l_0, i_0})\Big)(h) &= \Big(d(g\circ f)(\varphi_{l_0, i_0})\Big)(h),\\
&= \Big(dg\left(x_i - x_i^{(L)}\right)\Big)\circ\Big(df(\varphi_{l_0, i_0})\Big)(h),\\
&= 2\left\langle x_i - x_i^{(L)},- \sum_{i'=1}^n \left\langle \nabla_{\varphi_{l_0, i_0}}k_L\left(x_i^{(L-1)},x_{i'}^{(L-1)}\right),h\right\rangle_{\mathcal{X}_{l_0}}\varphi_{L, i'}\right\rangle_{\mathcal{X}_0},\\
&= -2\sum_{i'=1}^n \left\langle \nabla_{\varphi_{l_0, i_0}}k_L\left(x_i^{(L-1)},x_{i'}^{(L-1)}\right),h\right\rangle_{\mathcal{X}_{l_0}}\left\langle x_i - x_i^{(L)},\varphi_{L, i'}\right\rangle_{\mathcal{X}_0},\\
\Big(d(\|x_i - f_L \circ \ldots \circ f_1(x_i)\|_{\mathcal{X}_0}^2)(\varphi_{l_0, i_0})\Big)(h) &= \left\langle -2\sum_{i'=1}^n\left\langle x_i - x_i^{(L)},\varphi_{L, i'}\right\rangle_{\mathcal{X}_0}\nabla_{\varphi_{l_0, i_0}}k_L\left(x_i^{(L-1)},x_{i'}^{(L-1)}\right),h\right\rangle_{\mathcal{X}_{l_0}}.
\end{align*}
A direct identification leads to equation \eqref{eq:differential}.
\qed

\subsection{Solutions to Equations \eqref{eq:KRR} and Test Distortion}
Since we have assumed that $A_L$ is the identity operator on $\mathcal{X}_L$, equations \eqref{eq:KRR} simplify to:
\begin{equation}\label{eq:KRR_simplify}
\forall~i\in\llbracket n \rrbracket, \qquad \sum_{i'=1}^nW_{i,i'}~\varphi_{L,i'} = x_i,
\end{equation}
where $W = K_L + n\lambda_L I_n$.
It is then easy to show that the
\begin{equation*}
\varphi_{L, i'} = \sum_{i=1}^n \left[W^{-1}\right]_{i',i}~x_i \qquad \forall~i' \in \llbracket n \rrbracket
\end{equation*}
are solutions to equations \eqref{eq:KRR_simplify} and therefore to equations \eqref{eq:KRR}.
Note that using this expansion directly leads to equation \eqref{eq:N_L}.
But more interestingly, this new writing allows for computing the distortion on a test set.
Indeed, let $x \in \mathcal{X}_0$, one has:
\begin{align*}
\left\|x - f_L \circ \ldots \circ f_1(x)\right\|_{\mathcal{X}_0}^2 &= \left\|x - f_L\left(x^{(L-1)}\right)\right\|_{\mathcal{X}_0}^2,\\
&= \left\|x\right\|_{\mathcal{X}_0}^2 + \left\|f_L\left(x^{(L-1)}\right)\right\|_{\mathcal{X}_0}^2 - 2 \left\langle x , f_L\left(x^{(L-1)}\right)\right\rangle_{\mathcal{X}_0},\\
&= \left\|x\right\|_{\mathcal{X}_0}^2 + \left\|\sum_{i=1}^n k_L\left(x^{(L-1)}, x_i^{(L-1)}\right)\varphi_{L, i}\right\|_{\mathcal{X}_0}^2 - 2 \left\langle x , \sum_{i=1}^n k_L\left(x^{(L-1)}, x_i^{(L-1)}\right)\varphi_{L, i}\right\rangle_{\mathcal{X}_0},\\
&= \left\|x\right\|_{\mathcal{X}_0}^2 + \sum_{i,j=1}^n k_L\left(x^{(L-1)}, x_i^{(L-1)}\right) k_L\left(x^{(L-1)}, x_j^{(L-1)}\right) \left\langle \varphi_{L, i}, \varphi_{L, j}\right\rangle_{\mathcal{X}_0}\\
&\phantom{=~}- 2 \sum_{i=1}^n k_L\left(x^{(L-1)}, x_i^{(L-1)}\right) \left\langle x , \varphi_{L, i}\right\rangle_{\mathcal{X}_0},\\
\left\|x - f_L \circ \ldots \circ f_1(x)\right\|_{\mathcal{X}_0}^2 &= \left\|x\right\|_{\mathcal{X}_0}^2 + \sum_{i,j=1}^n k_L\left(x^{(L-1)}, x_i^{(L-1)}\right) k_L\left(x^{(L-1)}, x_j^{(L-1)}\right) \left\langle \varphi_{L, i}, \varphi_{L, j}\right\rangle_{\mathcal{X}_0}\\
&\phantom{=~}- 2\sum_{i,j=1}^n k_L\left(x^{(L-1)}, x_i^{(L-1)}\right) \left[W^{-1}\right]_{i,j} \left\langle x , x_j\right\rangle_{\mathcal{X}_0}.
\end{align*}
Just like in \Cref{sec:inf_dim} and \Cref{sec:lin_sys}, knowing the scalar products in $\mathcal{X}_0$ is the only thing we need to compute the test distortion (all other quantities are finite dimensional and thus computable).

%%%%%%%%%%%%%%%%%%%%%%%%%%%%
%  ADDITIONAL EXPERIMENTS  %
%%%%%%%%%%%%%%%%%%%%%%%%%%%%

\clearpage
\section{Additional Experiments}
\subsection{2D Data}\label{sec:add_expe}

\Cref{fig:1D_behaviour} gives a look on the algorithm behavior on 1D data. Results on  1D data are displayed and analyzed here as they are easily understandable. Indeed, one dimension of the plot (the $x$ axis) is used to display the original 1D points (the crosses), while their representations (the $f(x_i)$) vary along the $y$ axis. As soon as the original point or the representation needs more than 1 dimension to be plotted, a 2D plot lacks of dimensions to correctly display the behavior of the algorithm. Original data (to be represented) are sampled from 2 Gaussian distributions, of standard deviation 0.1, and with expected value 0 and 2 respectively.

\Cref{fig:enc_ev} and \Cref{fig:dec_ev} show the evolution of the encoding / decoding functions along the iterations of the algorithm. From the initial yellow representation function, obtained by uniform weights, the algorithm learns the black function, which seems satisfying in two ways. First, the representations of the two clusters are easily separable. Points from the first blue cluster (i.e. drawn from the Gaussian centered at 0) have positive representations, while points from the red one (i.e. drawn from the Gaussian centered at 2) have negative ones. If computed in a clustering purpose, the representation thus gives an easy criterion to distinguish the two clusters. Second, in order to be able to reconstruct any point, one must observe variability within each cluster. This way, the reconstruction function can easily reassign every point. On the contrary, the yellow representation function represents all points by almost the same value, which leads necessarily to a uniform (and bad) reconstruction.

\Cref{fig:1d_gaus_1d_enc} shows another 1D representation of the two clusters, while \Cref{fig:1d_gaus_2d_enc} shows a 2D encoding of these points. Interestingly, the two components of the 2D representation are highly correlated. This can be interpreted as the fact that a 2D descriptor is over-parameterizing a 1D point.

\begin{figure*}[hb]
\centering
\subfigure[Encoding evolution during fitting\label{fig:enc_ev}]{\includegraphics[width=0.45\columnwidth]{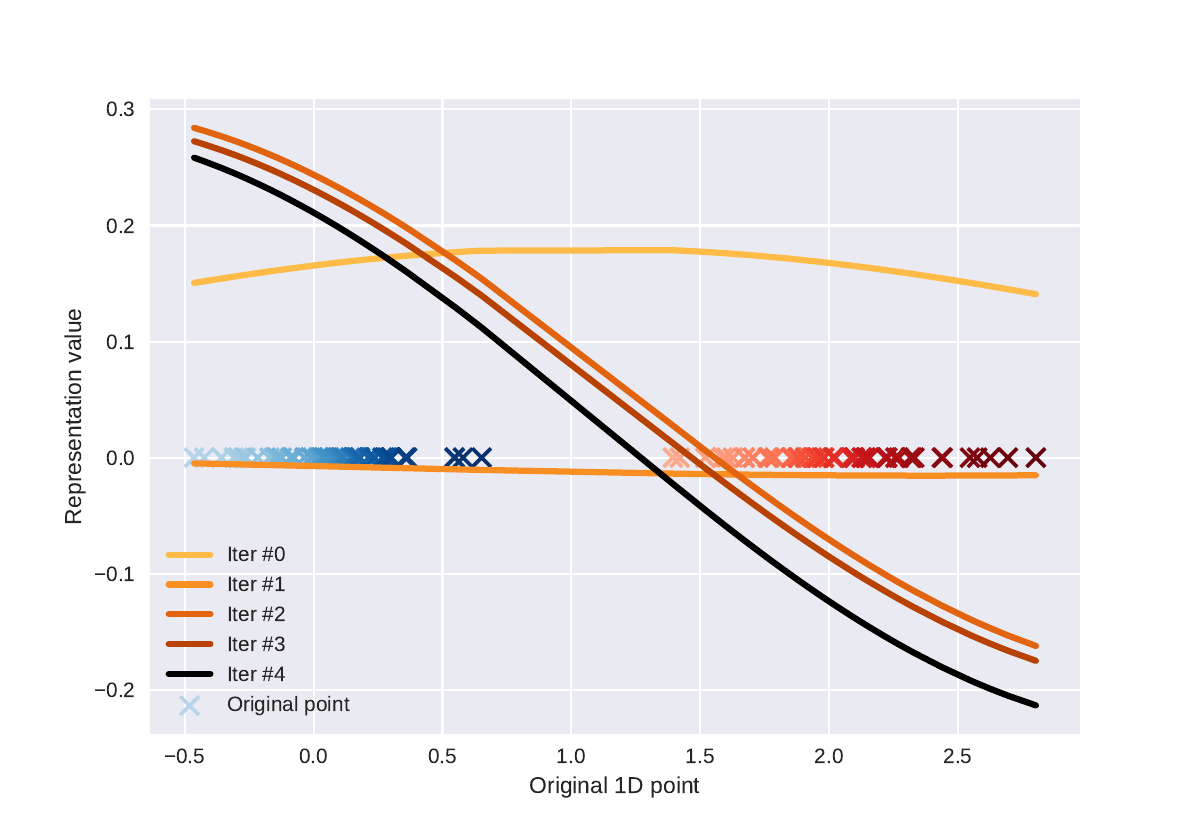}}
\subfigure[Decoding evolution during fitting\label{fig:dec_ev}]{\includegraphics[width=0.45\columnwidth]{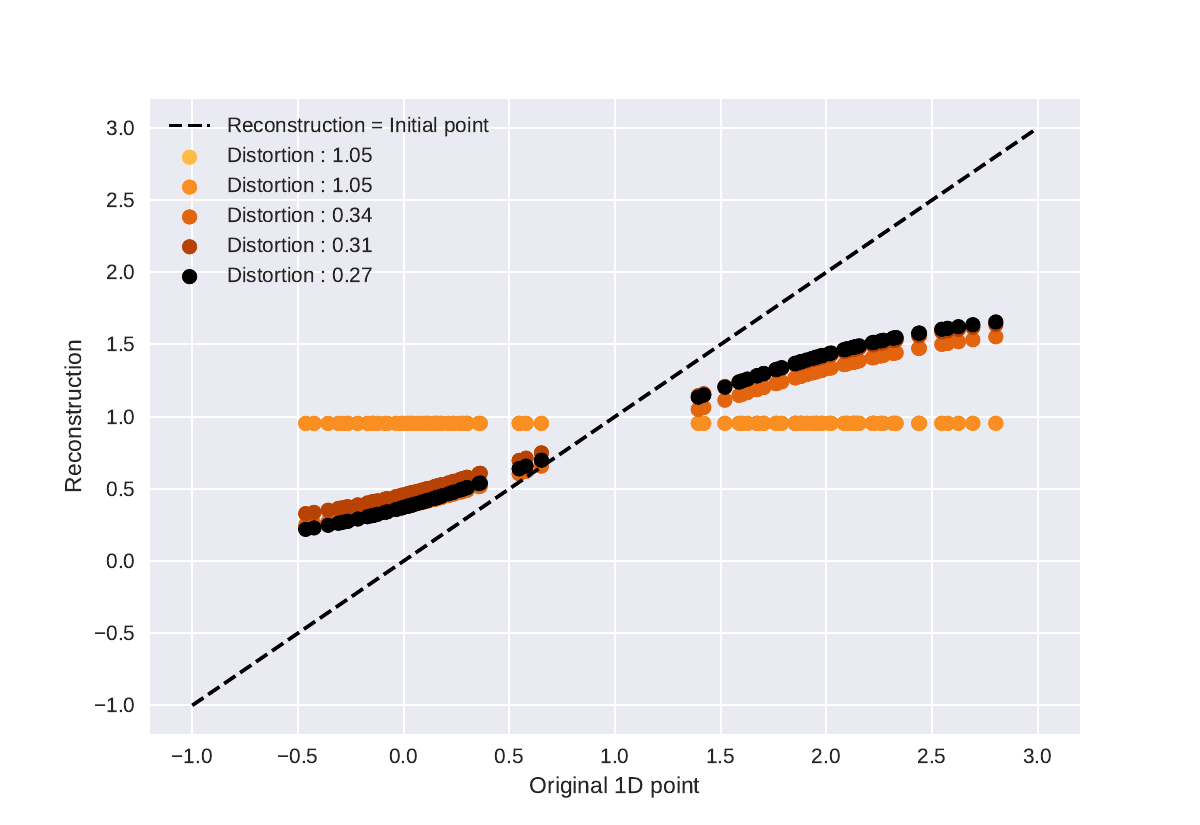}}
\subfigure[1D Gaussian clusters and 1D representation\label{fig:1d_gaus_1d_enc}]{\includegraphics[width=0.45\columnwidth]{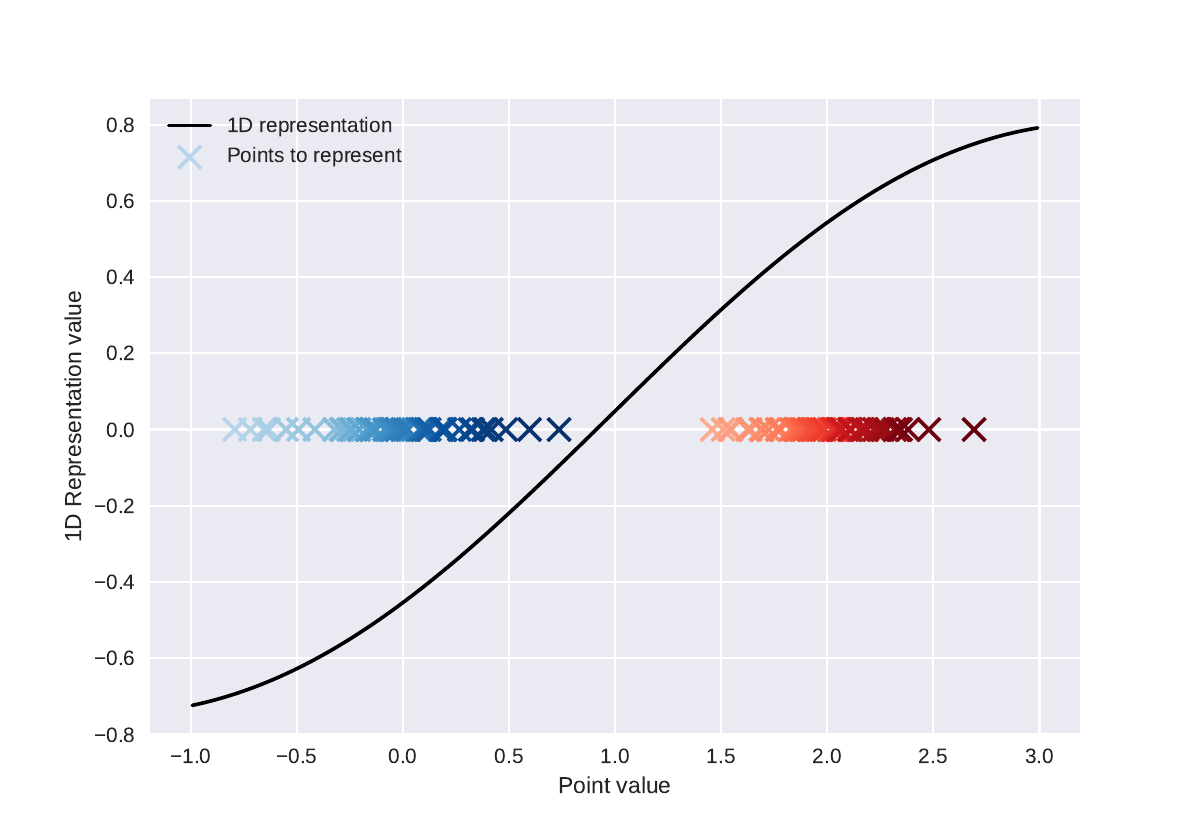}}
\subfigure[2D representation of the clusters\label{fig:1d_gaus_2d_enc}]{\includegraphics[width=0.45\columnwidth]{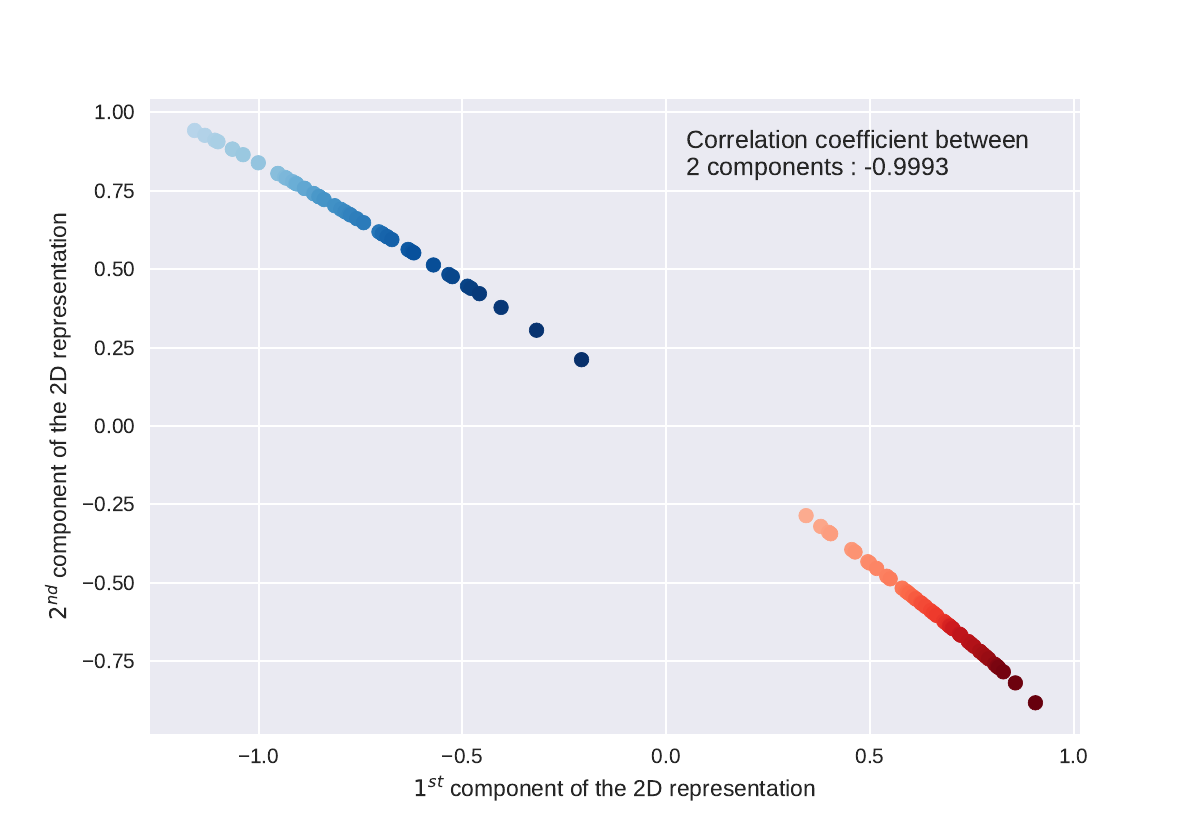}}
\caption{Algorithm behavior on 1D data}
\label{fig:1D_behaviour}
\end{figure*}
\clearpage

\Cref{fig:Gaussian_behaviour} shows the algorithm's behavior on Gaussian clusters. Whenever original points and their representations cannot be displayed on the same graph (\textit{i.e.} when whether the original data or its representation is of dimension more than 2), the colormap helps linking them. In \Cref{fig:2d_2gaus_ori}, the original 2D data are plotted, while \Cref{fig:2d_2gaus_enc} shows their 1D representations. The colormap has been established according to the value of this representation. First, the two clusters remain well separated in the representation space (positive/negative representations). But what is really interesting is how they are separated. The lighter the blue points are, the most negative representation they have, or in other terms, the \textit{most certain} they are to be in the blue cluster. Similarly, the darker the red points are, the most positive representation they have. When looking at these points on \Cref{fig:2d_2gaus_ori}, one sees that it matches the distribution: light blue points are the most distant from the red cluster, and conversely for the dark red ones. The algorithm has found the direction that discriminates the two clusters. Similar results are shown for 3 Gaussian clusters on \Cref{fig:2d_3gaus_ori} and \Cref{fig:2d_3gaus_1d_enc}.

\begin{figure*}[hb]
\centering
\subfigure[2D Gaussian clusters\label{fig:2d_2gaus_ori}]{\includegraphics[width=0.45\columnwidth]{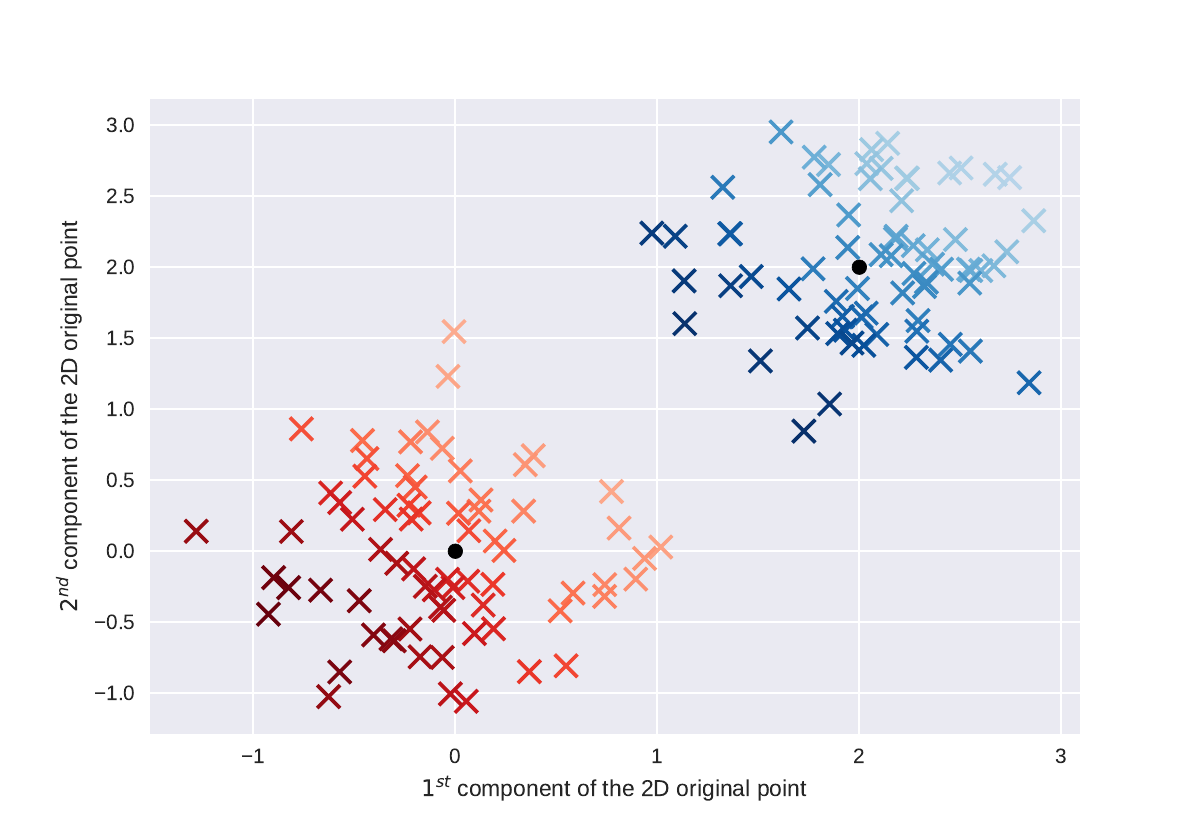}}
\subfigure[1D representation of the clusters\label{fig:2d_2gaus_enc}]{\includegraphics[width=0.45\columnwidth]{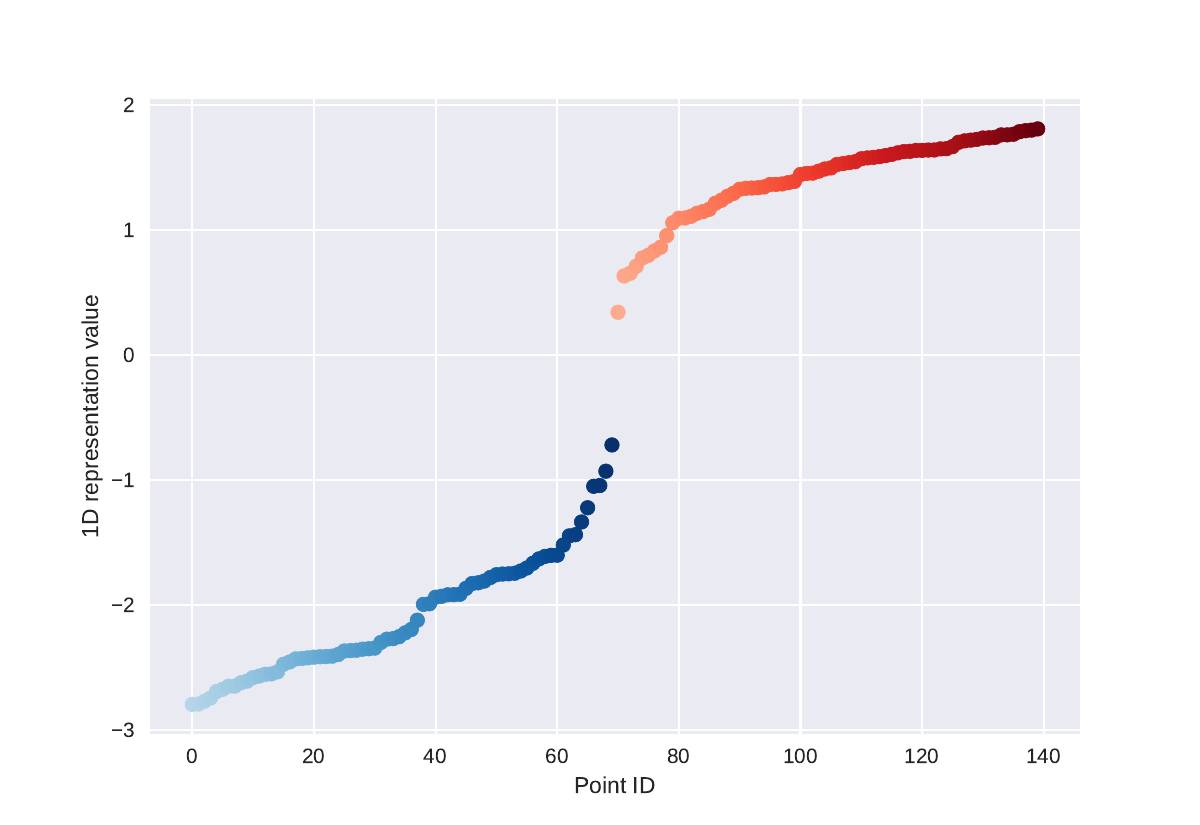}}
\subfigure[2D Gaussian clusters (3)\label{fig:2d_3gaus_ori}]{\includegraphics[width=0.45\columnwidth]{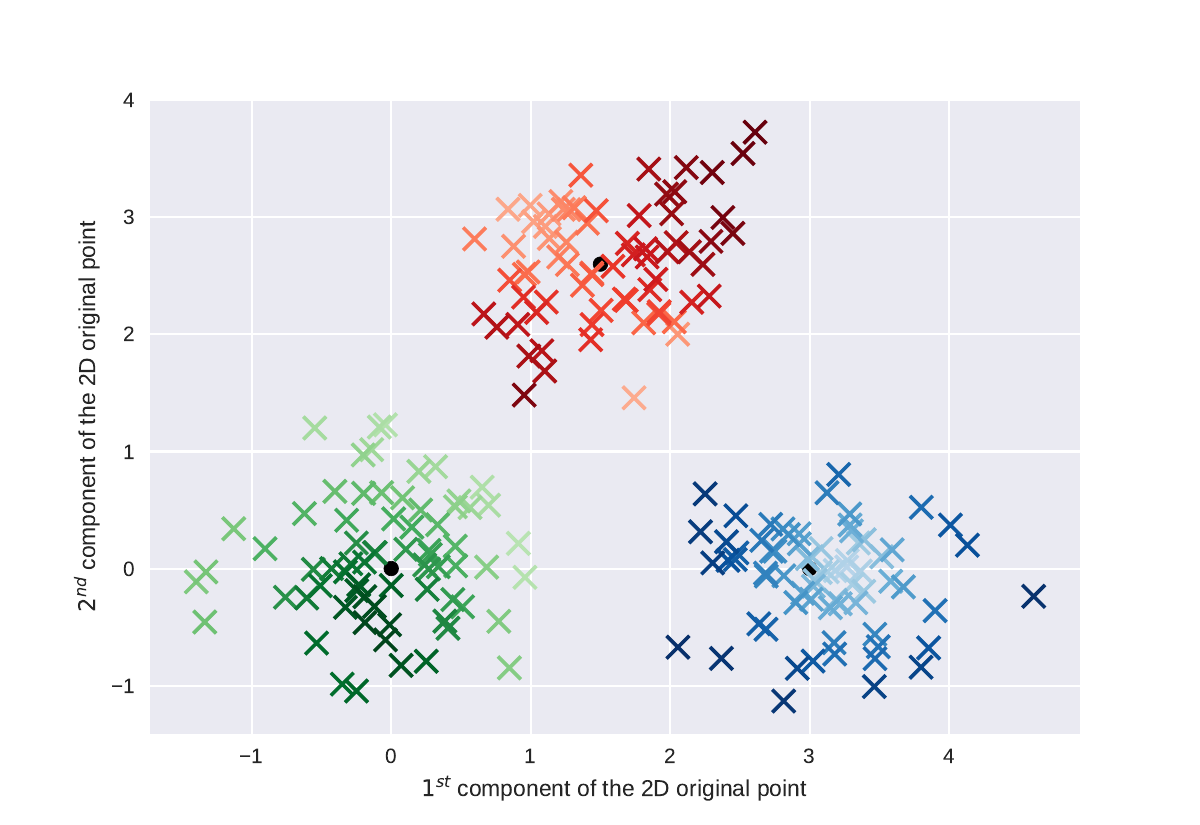}}
\subfigure[1D representation of the clusters\label{fig:2d_3gaus_1d_enc}]{\includegraphics[width=0.45\columnwidth]{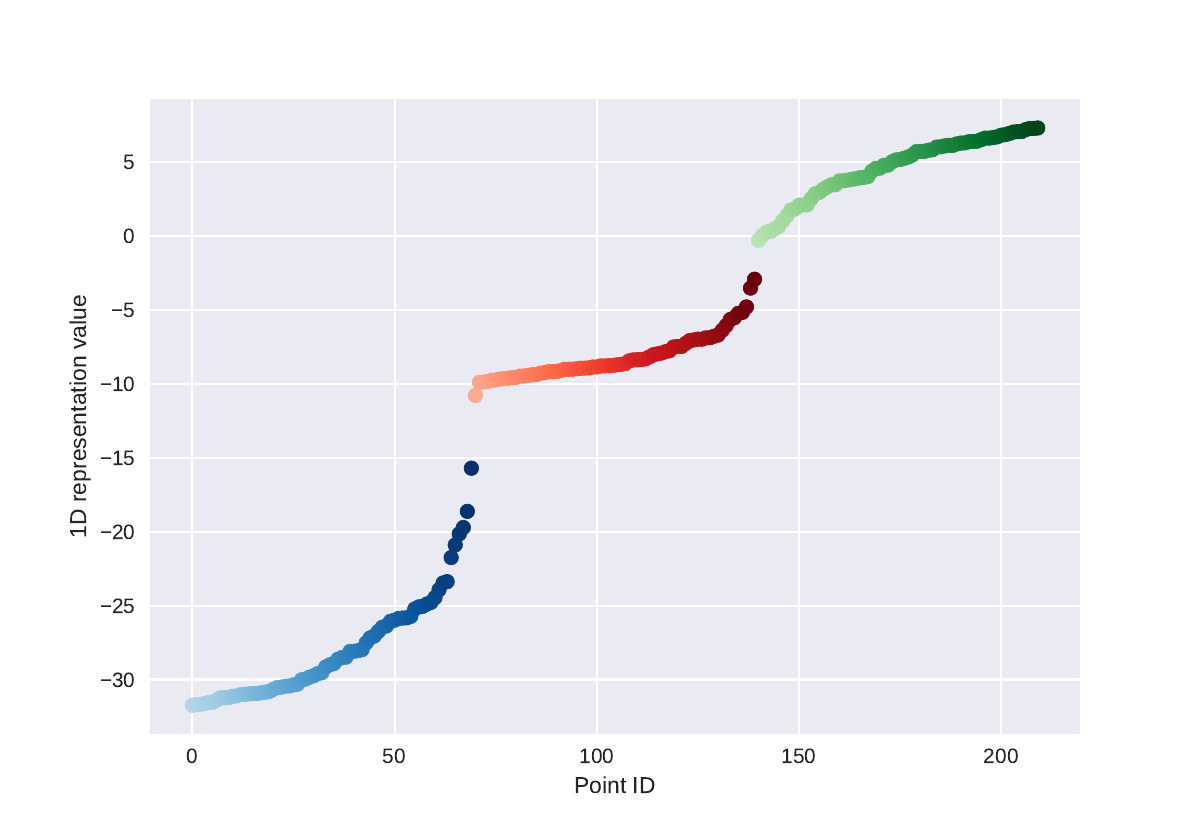}}
\caption{Algorithm behavior on Gaussian clusters}
\label{fig:Gaussian_behaviour}
\end{figure*}

\bigskip

Finally, \Cref{fig:2moons_behaviour} shows the algorithm's behavior on the so called \textit{two moons dataset}. 2D original points (\Cref{fig:2moons_ori1} and \Cref{fig:2moons_ori2}, colored differently according to the representation on their right) are first mapped to a 1D representation (\Cref{fig:2moons_1denc}). Just as for the 3 concentric circles example, this 1D representation is discriminative, also with intra-cluster variability in order to reconstruct properly. The 2D re-representation on \Cref{fig:2moons_2denc} shows again the disentangling properties of the KAE.

\begin{figure*}[ht]
\centering
\subfigure[Two moons dataset, colored w.r.t. its 1D representation\label{fig:2moons_ori1}]{\includegraphics[width=0.45\columnwidth]{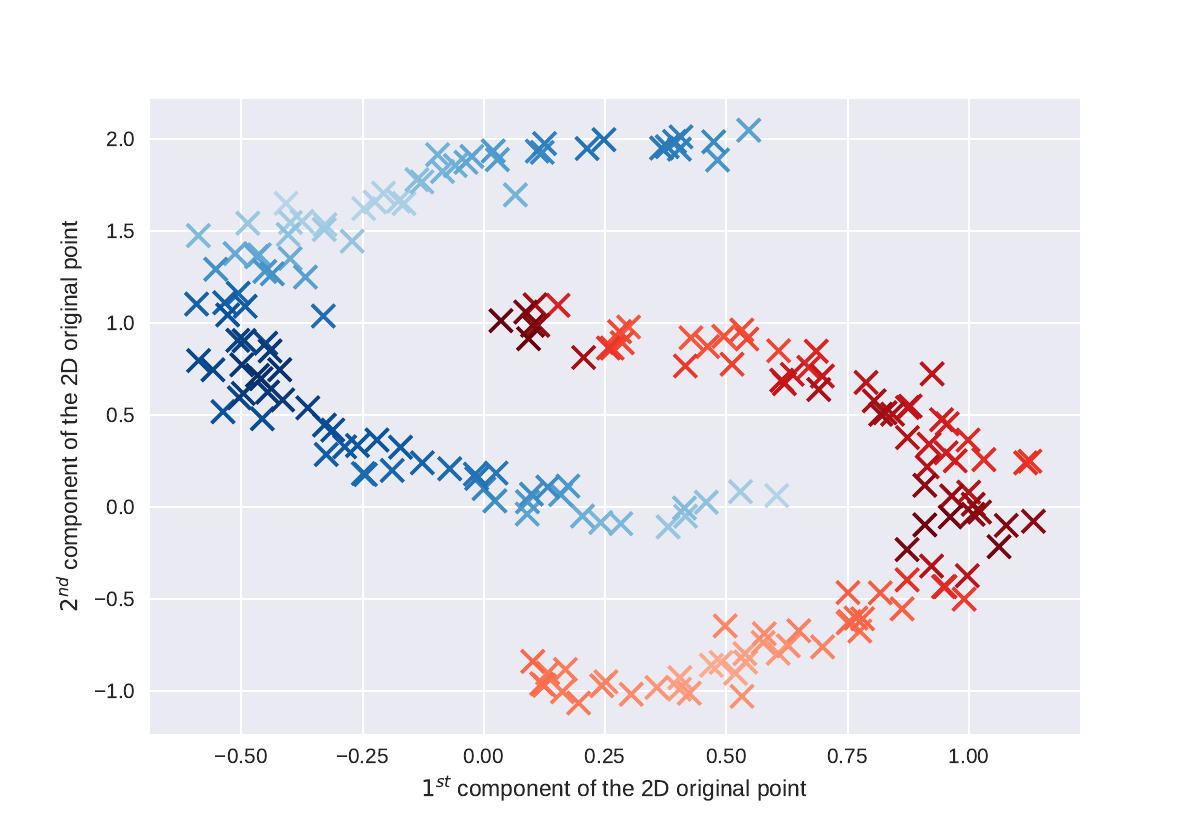}}
\subfigure[1D representation of the 2 moons\label{fig:2moons_1denc}]{\includegraphics[width=0.45\columnwidth]{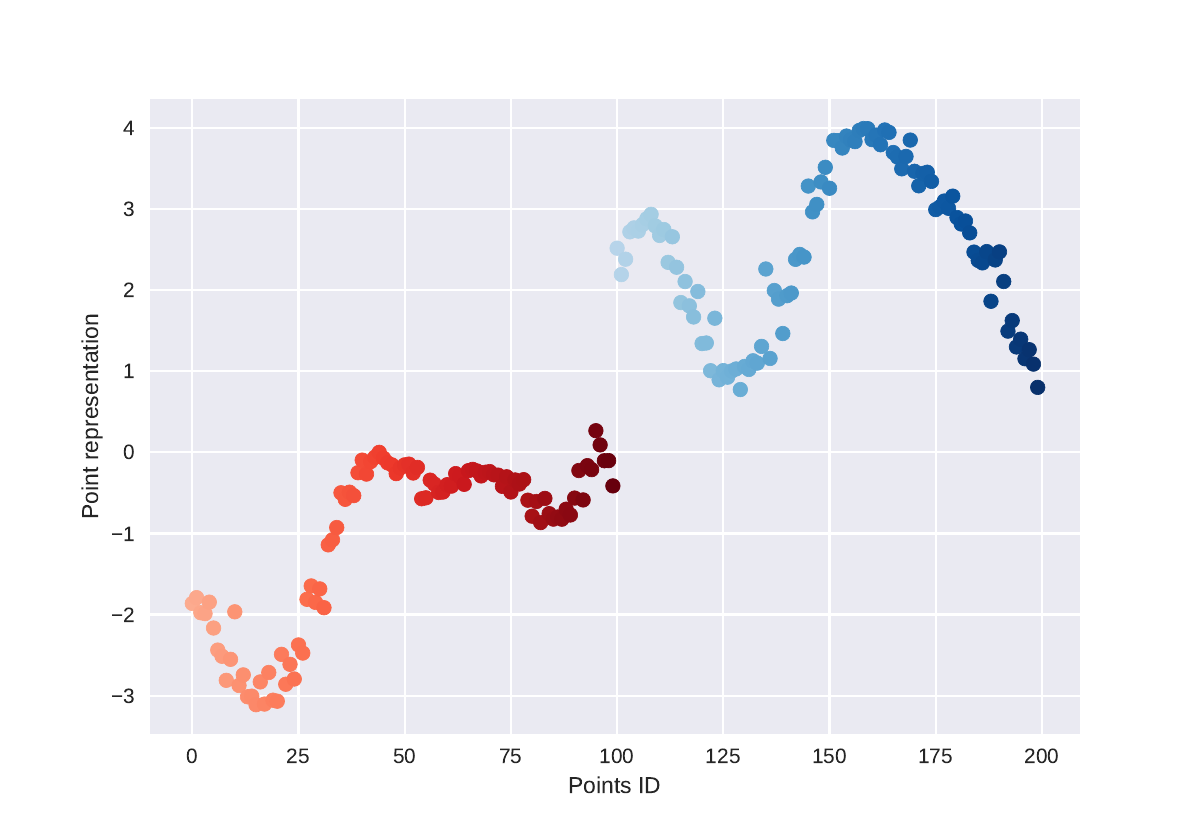}}
\subfigure[Two moons dataset, colored w.r.t. its 2D representation\label{fig:2moons_ori2}]{\includegraphics[width=0.45\columnwidth]{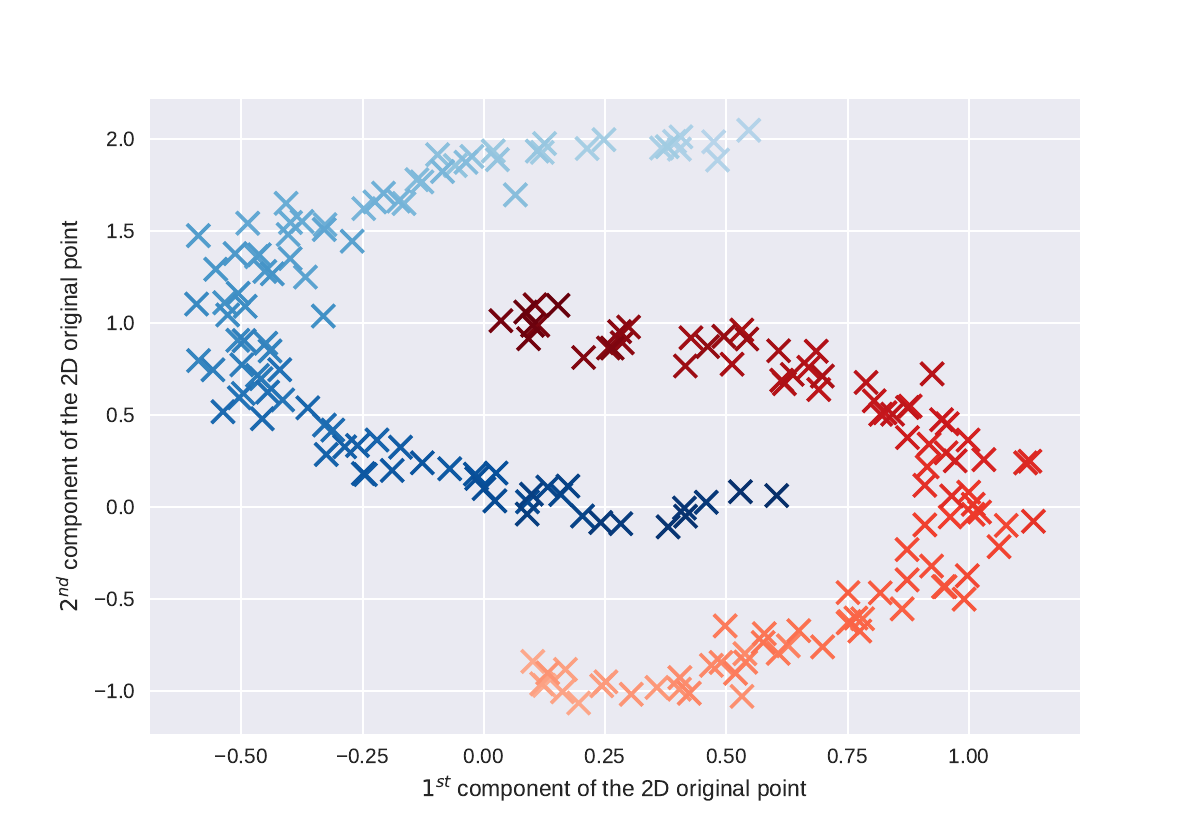}}
\subfigure[2D representation of the 2 moons\label{fig:2moons_2denc}]{\includegraphics[width=0.45\columnwidth]{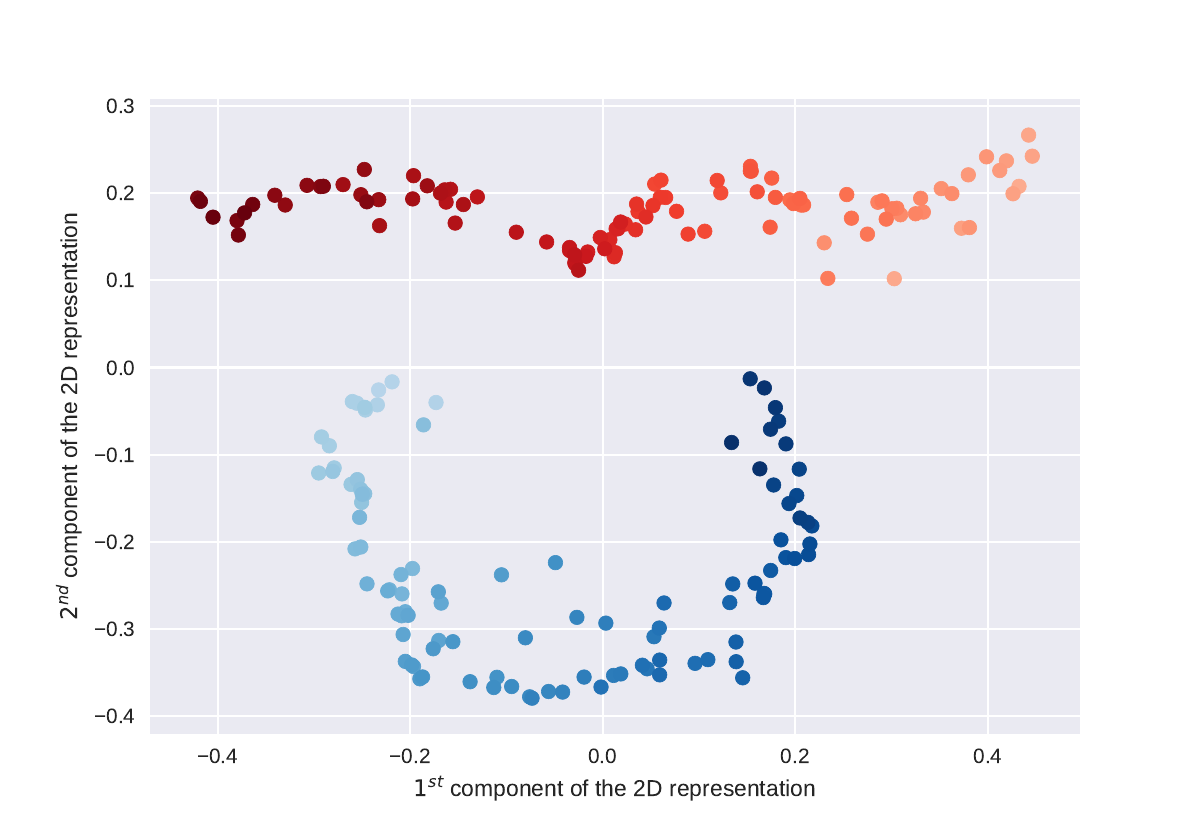}}
\caption{Algorithm behavior on the 2 moons dataset}
\label{fig:2moons_behaviour}
\end{figure*}
\clearpage

\subsection{NCI Data}\label{sec:add_nci}
\subsubsection{All strategies on 8 cancers graph}

\begin{figure*}[ht]
\centering
\includegraphics[height=0.64\columnwidth, angle=-90]{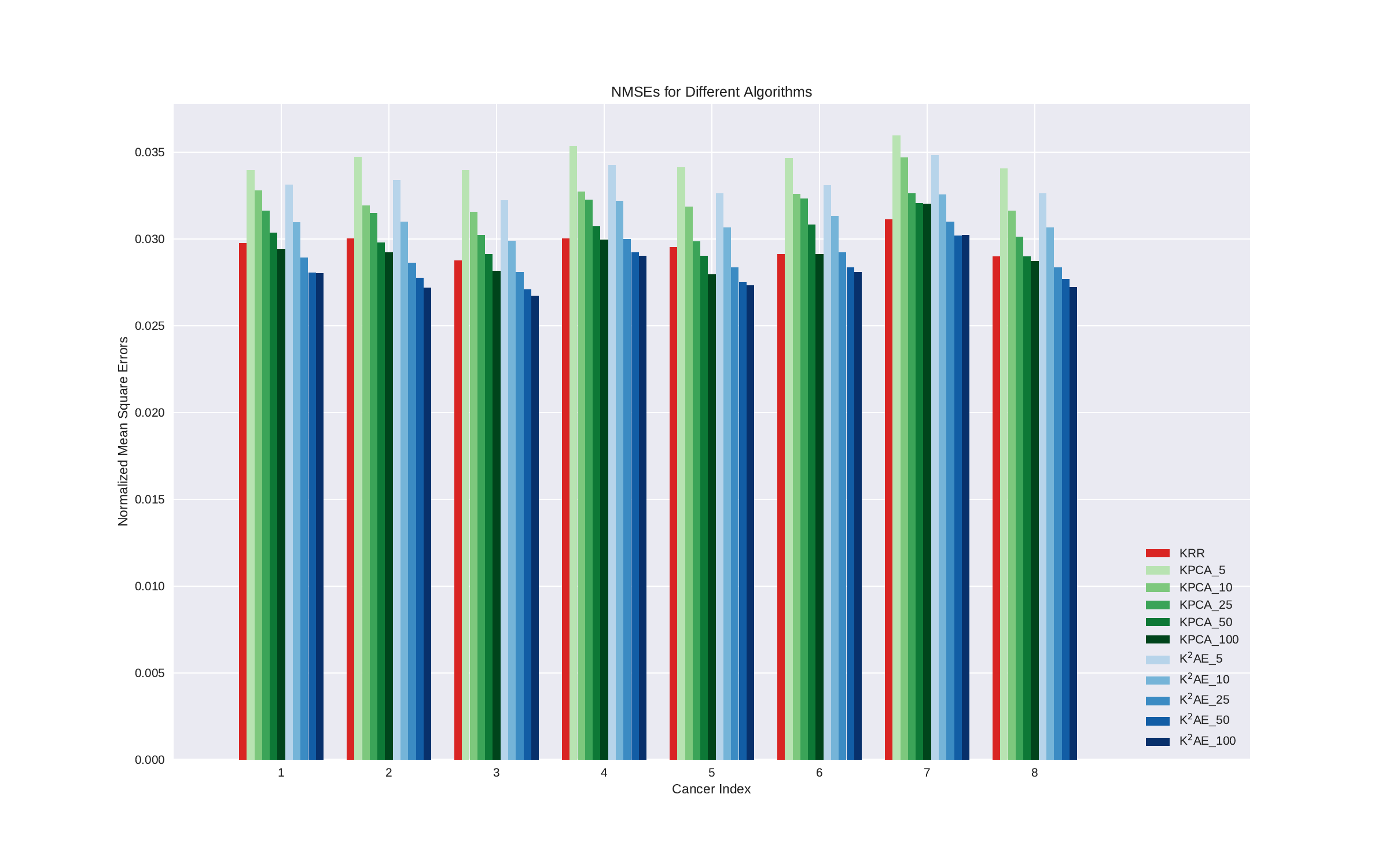}
\caption{Performance of the Different Strategies on 8 Cancers}
\label{fig:bar_plot}
\end{figure*}
\clearpage

\subsection{5 strategies on 59 cancers table}
\begin{table}[h]
\scriptsize
\caption{NMSEs on Molecular Activity for Different Types of Cancer}
\label{tab:NCI2}
\vskip 0.15in
\begin{center}
\begin{sc}
\begin{tabular}{cccccc}
\toprule
& KRR      & KPCA 10 + RF & KPCA 50 + RF & $\text{K}^2$AE 10 + RF & $\text{K}^2$AE 50 + RF\\
\midrule
Cancer 01 & 0.02978  & 0.03279  & 0.03035  & 0.03097  & \textbf{0.02808}\\
Cancer 02 & 0.03004  & 0.03194  & 0.02978  & 0.03099  & \textbf{0.02775}\\
Cancer 03 & 0.02878  & 0.03155  & 0.02914  & 0.02989  & \textbf{0.02709}\\
Cancer 04 & 0.03003  & 0.03274  & 0.03074  & 0.03218  & \textbf{0.02924}\\
Cancer 05 & 0.02954  & 0.03185  & 0.02903  & 0.03065  & \textbf{0.02754}\\
Cancer 06 & 0.02914  & 0.03258  & 0.03083  & 0.03134  & \textbf{0.02838}\\
Cancer 07 & 0.03113  & 0.03468  & 0.03207  & 0.03257  & \textbf{0.03018}\\
Cancer 08 & 0.02899  & 0.03162  & 0.02898  & 0.03065  & \textbf{0.02770}\\
Cancer 09 & 0.02860  & 0.02992  & 0.02804  & 0.02872  & \textbf{0.02627}\\
Cancer 10 & 0.02987  & 0.03291  & 0.03111  & 0.03170  & \textbf{0.02910}\\
Cancer 11 & 0.03035  & 0.03258  & 0.03095  & 0.03188  & \textbf{0.02900}\\
Cancer 12 & 0.03178  & 0.03461  & 0.03153  & 0.03253  & \textbf{0.02983}\\
Cancer 13 & 0.03069  & 0.03338  & 0.03104  & 0.03162  & \textbf{0.02857}\\
Cancer 14 & 0.03046  & 0.03340  & 0.03102  & 0.03135  & \textbf{0.02862}\\
Cancer 15 & 0.02910  & 0.03221  & 0.03066  & 0.03131  & \textbf{0.02806}\\
Cancer 16 & 0.02956  & 0.03220  & 0.02958  & 0.03060  & \textbf{0.02779}\\
Cancer 17 & 0.03004  & 0.03413  & 0.03140  & 0.03145  & \textbf{0.02869}\\
Cancer 18 & 0.02954  & 0.03195  & 0.03005  & 0.03108  & \textbf{0.02805}\\
Cancer 19 & 0.03003  & 0.03211  & 0.03079  & 0.03178  & \textbf{0.02832}\\
Cancer 20 & 0.02911  & 0.03179  & 0.03041  & 0.03085  & \textbf{0.02769}\\
Cancer 21 & 0.02963  & 0.03275  & 0.03023  & 0.03152  & \textbf{0.02837}\\
Cancer 22 & 0.03075  & 0.03391  & 0.03089  & 0.03263  & \textbf{0.02958}\\
Cancer 23 & 0.03006  & 0.03286  & 0.02983  & 0.03109  & \textbf{0.02760}\\
Cancer 24 & 0.03075  & 0.03398  & 0.03112  & 0.03242  & \textbf{0.02894}\\
Cancer 25 & 0.02977  & 0.03307  & 0.03054  & 0.03159  & \textbf{0.02824}\\
Cancer 26 & 0.03083  & 0.03358  & 0.03132  & 0.03206  & \textbf{0.02959}\\
Cancer 27 & 0.03083  & 0.03347  & 0.03116  & 0.03230  & \textbf{0.02974}\\
Cancer 28 & 0.03061  & 0.03256  & 0.03116  & 0.03185  & \textbf{0.02918}\\
Cancer 29 & 0.03056  & 0.03360  & 0.03147  & 0.03181  & \textbf{0.02892}\\
Cancer 30 & 0.03099  & 0.03288  & 0.03100  & 0.03181  & \textbf{0.02906}\\
Cancer 31 & 0.03082  & 0.03361  & 0.03161  & 0.03242  & \textbf{0.02986}\\
Cancer 32 & 0.03233  & 0.03562  & 0.03300  & 0.03422  & \textbf{0.03158}\\
Cancer 33 & 0.03065  & 0.03208  & 0.03045  & 0.03142  & \textbf{0.02909}\\
Cancer 34 & 0.03326  & 0.03668  & 0.03423  & 0.03486  & \textbf{0.03183}\\
Cancer 35 & 0.03292  & 0.03587  & 0.03393  & 0.03450  & \textbf{0.03146}\\
Cancer 36 & 0.03068  & 0.03389  & 0.03122  & 0.03249  & \textbf{0.02925}\\
Cancer 37 & 0.03023  & 0.03310  & 0.03061  & 0.03130  & \textbf{0.02878}\\
Cancer 38 & 0.03100  & 0.03487  & 0.03156  & 0.03327  & \textbf{0.02974}\\
Cancer 39 & 0.02989  & 0.03288  & 0.03149  & 0.03148  & \textbf{0.02865}\\
Cancer 40 & 0.03166  & 0.03525  & 0.03201  & 0.03352  & \textbf{0.03010}\\
Cancer 41 & 0.03139  & 0.03501  & 0.03203  & 0.03316  & \textbf{0.03025}\\
Cancer 42 & 0.03010  & 0.03251  & 0.03013  & 0.03072  & \textbf{0.02807}\\
Cancer 43 & 0.03042  & 0.03324  & 0.03062  & 0.03144  & \textbf{0.02806}\\
Cancer 44 & 0.02838  & 0.03045  & 0.02821  & 0.02927  & \textbf{0.02679}\\
Cancer 45 & 0.02910  & 0.03085  & 0.02895  & 0.02970  & \textbf{0.02651}\\
Cancer 46 & 0.02969  & 0.03258  & 0.02996  & 0.03111  & \textbf{0.02834}\\
Cancer 47 & 0.03148  & 0.03438  & 0.03346  & 0.03286  & \textbf{0.03056}\\
Cancer 48 & 0.03272  & 0.03640  & 0.03397  & 0.03425  & \textbf{0.03197}\\
Cancer 49 & 0.03305  & 0.03392  & 0.03329  & 0.03334  & \textbf{0.03148}\\
Cancer 50 & 0.03229  & 0.03637  & 0.03300  & 0.03404  & \textbf{0.03155}\\
Cancer 51 & 0.02943  & 0.03188  & 0.03028  & 0.03072  & \textbf{0.02857}\\
Cancer 52 & 0.03309  & 0.03420  & 0.03252  & 0.03335  & \textbf{0.03130}\\
Cancer 53 & 0.03170  & 0.03340  & 0.03105  & 0.03170  & \textbf{0.02843}\\
Cancer 54 & 0.03189  & 0.03439  & 0.03164  & 0.03345  & \textbf{0.03036}\\
Cancer 55 & 0.03082  & 0.03339  & 0.03146  & 0.03207  & \textbf{0.02892}\\
Cancer 56 & 0.03026  & 0.03327  & 0.03041  & 0.03185  & \textbf{0.02901}\\
Cancer 57 & 0.02962  & 0.03237  & 0.02990  & 0.03162  & \textbf{0.02855}\\
Cancer 58 & 0.02883  & 0.03200  & 0.02978  & 0.03058  & \textbf{0.02783}\\
Cancer 59 & 0.02936  & 0.03208  & 0.02914  & 0.03032  & \textbf{0.02750}\\\bottomrule
\end{tabular}
\end{sc}
\end{center}
\vskip -0.1in
\end{table}

\end{document}